%% file: icml2026.tex
\documentclass{article}

\usepackage{microtype}
\usepackage{graphicx}
\usepackage{subcaption}
\usepackage{booktabs} 
\usepackage{adjustbox}
\usepackage{multirow}
\usepackage[table]{xcolor}
\usepackage{hyperref}

\usepackage{placeins}
\usepackage{dblfloatfix} 

\usepackage{algpseudocode} 

\usepackage[accepted]{icml2026}

\usepackage{amsmath}
\usepackage{amssymb}
\usepackage{mathtools}
\usepackage{amsthm}

\input{math_commands.tex}

\usepackage{hyperref}
\usepackage[capitalize,noabbrev]{cleveref}

\theoremstyle{plain}
\newtheorem{theorem}{Theorem}[section]
\newtheorem{proposition}[theorem]{Proposition}

\theoremstyle{definition}
\newtheorem{definition}[theorem]{Definition}
\newtheorem{assumption}[theorem]{Assumption}
\theoremstyle{remark}
\newtheorem{remark}[theorem]{Remark}

\usepackage[textsize=tiny]{todonotes}

\icmltitlerunning{SurrogateSHAP: Training-Free Contributor Attribution for Text-to-Image (T2I) Models}

\begin{document}

\twocolumn[
  \icmltitle{SurrogateSHAP: Training-Free Contributor Attribution for Text-to-Image (T2I) Models}



  \icmlsetsymbol{equal}{*}

    \begin{icmlauthorlist}
      \icmlauthor{Mingyu Lu}{sch}
      \icmlauthor{Soham Gadgil}{sch}
      \icmlauthor{Chris Lin}{sch}
      \icmlauthor{Chanwoo Kim}{sch}
      \icmlauthor{Su-In Lee}{sch}
    \end{icmlauthorlist}

    \icmlaffiliation{sch}{Paul G. Allen School of Computer Science \& Engineering, University of Washington}
    \icmlcorrespondingauthor{Mingyu Lu}{\{mingyulu, sgadgil, clin25, chanwookim, suinlee\}@cs.washington.edu}  
  \icmlkeywords{Machine Learning, ICML}

  \vskip 0.3in
]
\printAffiliationsAndNotice{}




\begin{abstract}
\input{sections/abstract}

\end{abstract}

\section{Introduction}
\input{sections/introduction}

\section{Related Work}
\input{sections/background}

\section{Preliminaries and Problem Setup}
\input{sections/preliminary}

\section{Method - SurrogateSHAP}
\input{sections/method}

\section{Experimental Setup \& Results}
\input{sections/experiments}

\section{Case Study: Dermatology Data Auditing}
\input{sections/use_cases}

\section{Conclusion}
\input{sections/conclusion}

\section*{Acknowledgement}
\input{sections/acknowledgement}

\section*{Impact Statement}
\input{sections/impact_statements}

\bibliography{reference}
\bibliographystyle{icml2026}

\newpage
\appendix
\setcounter{table}{0}
\renewcommand{\thetable}{S.\arabic{table}}
\setcounter{figure}{0}
\renewcommand{\thefigure}{S.\arabic{figure}}
\onecolumn

\section*{Appendix.}
\input{sections/appendix}


\end{document}

%% file: math_commands.tex
\usepackage{amsmath,amsfonts,bm}

\def\eqref#1{equation~\ref{#1}}

\def\1{\bm{1}}

\def\rvx{{\mathbf{x}}}

\DeclareMathAlphabet{\mathsfit}{\encodingdefault}{\sfdefault}{m}{sl}
\SetMathAlphabet{\mathsfit}{bold}{\encodingdefault}{\sfdefault}{bx}{n}



%% file: sections/abstract.tex
As Text-to-Image (T2I) diffusion models are increasingly used in real-world creative workflows, a principled framework for valuing contributors who provide a collection of data is essential for fair compensation and sustainable data marketplaces. While the Shapley value offers a theoretically grounded approach to attribution, it faces a dual computational bottleneck: (i) the prohibitive cost of exhaustive model training for each sampled subset of players (i.e., data contributors) and (ii) the combinatorial number of subsets needed to estimate marginal contributions due to contributor interactions. To this end, we propose \textsc{SurrogateSHAP}, a training-free framework that approximates the expensive training game through inference from a pretrained model. To further improve efficiency, we employ a gradient-boosted tree to approximate the utility function and derive Shapley values analytically from the tree-based model. We evaluate SurrogateSHAP across three diverse attribution tasks: (i) image quality for DDPM-CFG on CIFAR-20, (ii) aesthetics for Stable Diffusion on Post-Impressionist artworks, and (iii) product diversity for FLUX.1 on Fashion-Product data. Across settings, SurrogateSHAP outperforms prior methods while substantially reducing computational overhead, consistently identifying influential contributors across multiple utility metrics. Finally, we demonstrate that SurrogateSHAP effectively localizes data sources responsible for spurious correlations in clinical images, providing a scalable path toward auditing safety-critical generative models. Code is available at \url{https://github.com/suinleelab/SurrogateSHAP}.


%% file: sections/introduction.tex
Modern conditional generative models enable controlled generation from structured signals such as class labels, text prompts, or learned embeddings. In particular, Text-to-image (T2I) diffusion models such as Stable Diffusion \citep{rombach2022high} and FLUX \citep{labs2025flux1kontextflowmatching}, have driven rapid progress in controllable image synthesis.  At the same time, their widespread deployment has intensified concerns about fair credits for training data contributors. In the context of data marketplaces \citep{zhang2024survey} and revenue sharing for generative AI \citep{deng2023computational,wang2024economic}, a contributor typically provides a group of samples rather than a single sample. 
This motivates \textit{contributor} attribution: quantifying the marginal value of each contributor’s group of samples as a unit.


A common approach for group-level attribution is to aggregate individual data attributions, which measure the impact of training data on the model. Existing methods for diffusion models, such as D-TRAK \citep{zheng2023intriguing} and DAS \citep{lin2024diffusion}, approximate the leave-one-out (LOO) influence \citep{koh2017understanding} using first-order gradients. To obtain group-level scores, one typically sums these individual influences  \citep{koh2019accuracy, deng2023computational} or computes group-wise influence directly \citep{ley2024generalized}. While efficient, this paradigm typically relies on loss-based gradients, which may fail to capture downstream properties such as aesthetics. Furthermore, prior work has shown that such additive approximation often underestimates a group’s true influence \citep{koh2019accuracy, basu2020second}.




To address these concerns, researchers have turned to the Shapley value \citep{shapley1953value}, a principled valuation framework that treats each data provider as a ``player'' \citep{wang2024economic, luefficient, lee2025faithful}. It quantifies importance by averaging a contributor's marginal contribution, the change in utility when their data is added, across all possible subsets of players. Despite its theoretical appeal, applying Shapley value to modern T2I diffusion faces a dual scalability bottleneck. First, its computation requires evaluating a utility function across an exponential number of subsets, which  ideally entails model training for every subset. While recent work proposes alternatives to bypass training \citep{luefficient}, these alternatives still incur high computational costs. Other approximations, such as the gradient-based methods \citep{wang2024data}, are designed for individual data points and do not naturally scale to group-level attribution. Second, even with sample-efficient estimators \citep{lundberg2017unified, mitchell2022sampling}, obtaining stable, low-variance estimates can require a prohibitive number of subset queries when the utility function is highly non-linear or involves complex interactions---a common effect in dataset valuation, even for regression tasks \citep{garrido2024shapley}.

To enable efficient and accurate Shapley computation at the scale of modern text-to-image diffusion models, we introduce \textsc{SurrogateSHAP}, which has two complementary ideas. First, we define a training-free proxy game that reinterprets the target model as a mixture of conditional generators, enabling subset evaluation without training. We propose a theoretical bound and empirically demonstrate that this proxy closely tracks the subset utility of trained models. Second, to improve Shapley estimation for complex utility functions, we propose learning a tree-ensemble surrogate of the proxy game and computing Shapley values analytically via TreeSHAP. This surrogate accurately captures the underlying utility function, allowing for high-fidelity subset approximations that significantly enhance the sample efficiency of Shapley estimation. We benchmark SurrogateSHAP against existing attribution methods and find that it significantly outperforms prior baselines in predicting held-out model behaviors. Furthermore, our approach demonstrates superior computational and sampling efficiency compared to existing methods based on the Shapley value. Finally, we show that SurrogateSHAP reliably identifies key contributors across diverse use cases, ranging from perceptual quality to auditing biased data sources in a clinical setting, thereby offering a principled framework for contributor valuation.





%% file: sections/background.tex
\paragraph{Data Attribution in Diffusion Models}\label{sec:data_attribution} Existing attribution methods for diffusion models primarily focus on individual data attribution and utilize the first-order gradient to approximate leave-one-out (LOO) influence. For example, Journey-TRAK \citep{georgiev2023journey} and D-TRAK \citep{zheng2023intriguing} adapt the TRAK framework \citep{park2023trak} to diffusion training by computing TRAK-style scores from gradients of diffusion losses, whereas DAS \citep{lin2024diffusion} utilizes the KL divergence between noise distributions as a proxy for sample impact.\footnote{Please refer to  \Cref{apx:baselines} for details about these methods.} However, these methods are anchored to the diffusion training objective (e.g., noise-prediction losses); their attributions can be misaligned with downstream utilities that are not directly optimized during training, such as aesthetics or diversity.

\paragraph{Scaling Individual Data to Group Influence} Group-level attribution seeks to evaluate entity-based subsets, such as data owners or curated collections. While a natural extension of individual data attribution is summation \citep{koh2019accuracy} or computing group influence via batch gradients \citep{ley2024generalized}, this linear approximation often diverges from the actual change in loss as the group size increases \citep{koh2019accuracy,basu2020second}, suggesting that simple aggregation overlooks the non-linear interaction effects present in large-scale data removals.



\textbf{Shapley-Based Contributor Attribution}
Recent work has leveraged the \textit{Shapley value} \citep{shapley1953value} to provide principled frameworks for revenue sharing and copyright assignment for data contributors in T2I models \citep{wang2024economic,lee2025faithful,luefficient}. \Citet{wang2024economic,lee2025faithful} rely on exact retraining, which is only feasible when the number of contributors is small, an assumption that rarely holds for T2I models. While \citet{luefficient} proposed replacing full training with sparsified fine-tuning, the fine-tuning still incurs a significant computational cost considering the total number of subsets. Furthermore, existing works often rely on Shapley estimators \citep{lundberg2017unified, mitchell2022sampling} that do not account for the high-variance challenges inherent in the non-linear and non-additive utility functions often found in dataset valuation \citep{garrido2024shapley}. These challenges are especially pronounced in T2I diffusion models, motivating methods that can both (i) reduce the cost of per-subset evaluation and (ii) improve the sampling efficiency for Shapley estimation.

%% file: sections/preliminary.tex
This section establishes the formal framework for Text-to-Image (T2I) diffusion models and defines the problem of attributing global model properties to data contributors.

\subsection{Text-to-Image (T2I) Diffusion Models}
\label{sec:prelim_diffusion}

We consider T2I models that approximate a conditional data distribution $q(\mathbf{x}_0 | \mathbf{c})$ for a given conditioning signal $\mathbf{c}$ (e.g., a text embedding). This framework encompasses Denoising Diffusion Probabilistic Models (DDPMs) \citep{ho2020denoising}, Latent Diffusion Models (LDMs) \citep{rombach2022high}, and Flow Matching (FM) models \citep{lipman2022flow}.

\textbf{Training and Inference.} Each sample $(\mathbf{x}, y) \in \mathcal{D}$ consists of an image $\mathbf{x}$ and a prompt $y$, where $y$ is mapped to $\mathbf{c} = \psi(y)$ via a text encoder $\psi$.\footnote{Many T2I models, e.g., Stable Diffusion, keep the text encoder frozen during training.} For DDPM/LDM variants, training optimizes a conditional denoising objective:
\begin{equation}\label{eq:diffusion_loss_refined}
\mathcal{L}(\mathbf{x}, y; \theta) = \mathbb{E}_{t, \epsilon} \left[ \lambda_t \left\| \epsilon - \epsilon_{\theta}(\mathbf{x}_t, t, \mathbf{c})\right\|^2_2 \right],
\end{equation}
where $\epsilon_\theta$ predicts the noise added to image $\mathbf{x}$ at timestep $t$, and $\lambda_t$ is a timestep-dependent loss weight. FM models generalize this by regressing a conditional vector field $v_\theta(\mathbf{x}_t, t, \mathbf{c})$. We denote $\theta$ as the target model optimized on the full dataset $\mathcal{D}$, and $p_\theta(\mathbf{x}_0 | \mathbf{c})$ as the distribution induced by the model during inference via classifier-free guidance (CFG) \citep{ho2022classifier}.



\subsection{Attribute Contributor via Shapley value}
\label{sec:attribution_prelim}
We study attribution settings where data provenance is tied to explicit prompt metadata (e.g., artist name, brand tokens), so that contributors correspond to subsets of prompt labels.
\begin{definition}[Contributor Attribution]
An attribution method $\tau$ assigns importance scores to contributors based on a utility function $v: 2^N \rightarrow \mathbb{R}$ that maps a subset of contributors to a scalar utility.
\end{definition} 

\textbf{Contributor--label correspondence.}
Let $N=\{1,\dots,n\}$ denote the set of contributors. Each contributor $i \in N$ is associated with a set of identifying tags or aliases, $\mathcal{T}_i$ (e.g., ``Van Gogh'' or ``vanGogh''). Let $\mathcal{Y}$ denote the set of all text prompts in the dataset. We define $\mathcal{Y}_i \subseteq \mathcal{Y}$ as the subset of prompts that contain at least one tag $t \in \mathcal{T}_i$. For any coalition of contributors $S \subseteq N$, we define the collective prompt set as $\mathcal{Y}_S := \bigcup_{i \in S} \mathcal{Y}_i$. The training data induced by coalition $S$ is then
\begin{equation}
    \mathcal{D}_S := \{(\mathbf{x},y) \in \mathcal{D} : y \in \mathcal{Y}_S\}.
\end{equation}


\textbf{The Training Game.}\label{sec:retrain_game} 
Let $\mathcal{A}$ be a training algorithm, ranging from full retraining to various fine-tuning procedures, e.g., LoRA or full-finetuning, and let $\theta^*_S \sim \mathcal{A}(\mathcal{D}_S)$ denote the parameters obtained from a stochastic training run on training set $\mathcal{D}_S$. We define the Training Game via the utility function:
\begin{equation}\label{eq:true_game_retrain}
v(S) := \mathcal{F}(p_{\theta^*_S}),
\end{equation}
where $p_{\theta^*_S}$ is the (implicit) model distribution induced by $\theta^*_S$, and $\mathcal{F}: \mathcal{P}(\mathcal{X}) \rightarrow \mathbb{R}$ is a function that maps this distribution to a scalar quality metric. Following \citet{luefficient}, we define utility over a collection of outputs because comprehensive model evaluation requires studying global properties  \citep{covert2020understanding} rather than individual outputs. For example, in supervised learning, \(\mathcal{F}\) might manifest as test accuracy; in text-to-image generation, it may encompass metrics ranging from perceptual scores \citep{zhang2018unreasonable} to distributional measures, e.g.,  FID \citep{heusel2017gans}.






\paragraph{Shapley value computation.}
The Shapley value \citep{shapley1953value} assigns credit to each contributor (player) $i \in N$ via the expected marginal contribution over subsets. Formally, given the training utility function \(v(S)\), the Shapley value of contributor \(i\) is
\begin{equation}\label{eq:shapley_val}
\phi_i(v) = \frac{1}{n} \sum_{S \subseteq N \setminus {\{i\}}} \binom{n-1}{|S|}^{-1} \left[ v(S \cup {i}) - v(S) \right].
\end{equation}

\paragraph{Computational bottlenecks.}  Estimating $\phi_i(v)$ for the training game is computationally challenging. Each coalition query $v(S) = \mathcal{F}(p_{\theta^*_S})$ requires: (i) training a model $\theta^*_S$ on $\mathcal{D}_S$, and (ii) generating a sample set to compute the distributional metric $\mathcal{F}$. For models like FLUX \citep{labs2025flux1kontextflowmatching}, the per-query cost is expensive. Beyond this cost, the utility functions are typically non-additive, which inflates the variance of marginal contributions and increases the number of coalitions  $M$ needed for accurate estimation \citep{mitchell2022sampling}. Consequently, even with sampling-based estimators such as KernelSHAP \citep{lundberg2017unified}, the total cost $O(M\cdot \mathrm{Cost}(\mathcal A))$ remains prohibitive. This motivates our training-free proxy framework, which approximates the training-game utility to reduce the cost for subset queries.


%% file: sections/method.tex
We now present SurrogateSHAP in detail. First, we define a training-free proxy game to enable efficient subset evaluation in T2I models (\Cref{sec:training_free_coalition}). Second, we introduce a surrogate model-based procedure (\Cref{sec:surrogate_shapley}) that computes Shapley values with improved sample efficiency.

\subsection{Training-Free Coalition Evaluation}
\label{sec:training_free_coalition}
Training $\theta^*_S$ even for a single coalition $S \subseteq N$ is computationally prohibitive for large-scale diffusion models. To bypass this, one might consider leveraging the compositional properties of diffusion scores, reformulating $p_{\theta^*_S}$ via test-time score composition \citep{liu2022compositional}. However, while this reduces the problem to a standard inference task, the computational complexity scales as $O(T|S|)$ per sample, as it requires aggregating \(|S|\) scores at every denoising timestep $t \in T$. Fortunately, as our objective is to evaluate the global utility $\mathcal{F}$ on the induced distribution $p_{\theta^*_S}$, we can reformulate the coalition's output as a distributional mixture. For conditional diffusion models, $p_{\theta^*_S}$ is defined as:
\begin{equation}\label{eq:retrain_mixture_concise}
p_{\theta^*_S}(\mathbf{x}) = \sum_{y \in \mathcal{Y}_S} \pi_S(y) p_{\theta^*_S}(\mathbf{x} \mid \psi(y)),
\end{equation}
where $\pi_S(y) = \hat{p}(y) / \sum_{y' \in \mathcal{Y}_S} \hat{p}(y')$ represents the relative label prior based on the empirical frequency $\hat{p}$ in $\mathcal{D}$. 

\subsubsection{Training-Free Proxy Game}\label{sec:proxy_game}
Building on this structural observation, we define a training-free proxy game $(\mathcal{N}, \hat{v}_{\theta})$ that leverages the conditional generation capabilities of the target model $\theta$. By assuming the full-data conditional $p_{\theta}(\mathbf{x} \mid \psi(y))$ serves as a valid proxy for the trained conditional $p_{\theta^*_S}(\mathbf{x} \mid \psi(y))$ for all $y \in \mathcal{Y}_S$, we obtain the proxy utility function:
\begin{equation}\label{eq:proxy_game}
\hat{v}_{\theta}(S) := \mathcal{F}\left( \sum_{y \in \mathcal{Y}_S} \pi_S(y) p_{\theta}(\mathbf{x} \mid \psi(y)) \right).
\end{equation}
Evaluating \eqref{eq:proxy_game} is highly efficient as it eliminates the need for subset-specific training, reducing subset evaluation to a mixture sampling problem. To obtain \(\hat{v}_{\theta}(S)\) we draw a label $y \sim \pi_S$ and then perform standard conditional generation $\mathbf{x} \sim p_{\theta}(\cdot \mid \psi(y))$. By bypassing both training and score-level aggregation, this proxy game enables practical coalition evaluation for large-scale generative models.

\subsubsection{Theoretical Fidelity of the Proxy} The fidelity of proxy utility to the training utility hinges  on the stability of the class-conditional distributions,  which we formalize below:

\begin{assumption}[$(\varepsilon,\varphi)$-Stability]\label{asm:stability}
For any coalition $S \subseteq N$ and any condition $\mathbf{c} \in \mathcal{C}$, let $\varphi_u: \mathcal{X} \to \mathbb{R}^{m_u}$ be a perceptual embedding function (or utility-aligned feature extractor). We assume that under a shared-noise coupling $\gamma_{\mathbf{c}}$ induced by a common noise seed $\mathbf{z} \sim \mathcal{N}(0,I)$, the root-mean-squared (RMS) representation drift between the class-conditional output distributions of the target model \(\theta\) and the coalition-trained model \(\theta^*_S\) is uniformly bounded in the utility-aligned feature space:
\begin{equation}\label{eq:expected-drift}
\resizebox{1.0\linewidth}{!}{$
d^{\varphi_u}(p_{\theta^*_S}, p_{\theta}\mid \mathbf c)
\;:=\;
\left(\mathbb{E}_{(X_S,X_F)\sim \gamma_{\mathbf c}}
\big[\|\varphi_u(X_S)-\varphi_u(X_F)\|_2^2\big]\right)^{1/2}
\;\le\; \varepsilon^{\varphi_u}.
$
}
\end{equation}
where $X_S \sim p_{\theta^*_S}(\cdot \mid \mathbf{c})$ and $X_F \sim p_{\theta}(\cdot \mid \mathbf{c})$ are the outputs of the trained model and target model, respectively, generated from the same noise seed $\mathbf{z}$. This coupling implies
\begin{equation}
\resizebox{0.9\linewidth}{!}{$
W_p\!\left(\varphi_{u\#}p_{\theta^*_S}(\cdot \mid \mathbf c),\; \varphi_{u\#}p_{\theta}(\cdot \mid \mathbf c)\right)
\le d^{\varphi_u}\!\left(p_{\theta^*_S},\, p_{\theta} \mid \mathbf c\right)
\le \varepsilon^{\varphi_u}.
$
}
\end{equation}
Since $(\varphi_u(X_S), \varphi_u(X_F))$ constitutes a valid coupling of the distributions induced in the representation space \citep{peyre2019computational}, and $W_p$ is defined as the infimum over all possible couplings, for \(p = 2\), this implies $W_2 \le d^{\varphi_u} \le \varepsilon^{\varphi_u}.$
\end{assumption}

\begin{proposition}[Proxy Error Bound]\label{prop:proxy-error-bound}
Under Assumption~\ref{asm:stability}, recall that
\(v(S):=\mathcal F(p_{\theta^*_S})\), where \(p_{\theta^*_S}\) is the coalition-specific mixture in \Cref{eq:retrain_mixture_concise}, and \(\hat v_\theta(S)\) is defined by the coalition-specific proxy mixture in \Cref{eq:proxy_game}. The utility functional \(\mathcal F\) is induced by \(\mathcal V\) via
\begin{equation}\label{eq:def_F}
\mathcal F(p)
\;:=\;
\mathcal{V}\left(\mathbb{E}_{\mathbf{c} \sim \pi}\!\left[\varphi_{u\#} p(\cdot \mid \mathbf{c})\right]\right),
\end{equation}
where \(\pi\) denotes the condition prior used in the corresponding coalition mixture, i.e., \(\pi=\pi_S\) for coalition \(S\). Here, \(\mathcal{V}\) is a utility functional on probability measures over \(\mathbb{R}^{m_u}\). If $\mathcal{V}$ is $L_u$-Lipschitz with respect to the $W_p$ metric for some \( p\in \{ 1,2\}\) on $\mathcal{P}(\mathbb{R}^{m_u})$, then for any realization of the trained parameters $\theta^*_S$, the following bound holds:
\begin{equation}\label{eq:proxy-bound-feature}
|v(S) - \hat{v}_{\theta}(S)| \le L_u \varepsilon^{\varphi_u}.
\end{equation}
The proof is provided in \Cref{proof:proxy_bound}.
\end{proposition}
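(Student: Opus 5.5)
The plan is to reduce the utility gap to a Wasserstein distance between the two pushed-forward mixtures, and then bound that distance one condition at a time using Assumption~\ref{asm:stability}.

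\textbf{Step 1: rewrite both utilities.} By \eqref{eq:def_F} with $\pi=\pi_S$, both $v(S)$ and $\hat v_\theta(S)$ are $\mathcal V$ applied to a mixture over $y\in\mathcal Y_S$ with weights $\pi_S(y)$ and $\mathbf c=\psi(y)$. Set
\[
\mu_S:=\sum_{y\in\mathcal Y_S}\pi_S(y)\,\varphi_{u\#}p_{\theta^*_S}(\cdot\mid\psi(y)),\qquad
\hat\mu_S:=\sum_{y\in\mathcal Y_S}\pi_S(y)\,\varphi_{u\#}p_{\theta}(\cdot\mid\psi(y)).
\]
Then $v(S)=\mathcal V(\mu_S)$ and $\hat v_\theta(S)=\mathcal V(\hat\mu_S)$. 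The point is that the two mixtures share the same weights $\pi_S$ and differ only in their components. The $L_u$-Lipschitz hypothesis then gives $|v(S)-\hat v_\theta(S)|\le L_u W_p(\mu_S,\hat\mu_S)$.

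\textbf{Step 2: build a coupling of the mixtures.} This is the step that carries the argument. For each $y$, take the shared-noise coupling $\gamma_{\psi(y)}$ and push it forward under $\varphi_u\times\varphi_u$ to get $\tilde\gamma_y$. Define $\Gamma:=\sum_y\pi_S(y)\tilde\gamma_y$. Its marginals are exactly $\mu_S$ and $\hat\mu_S$, because marginalization is linear and the weights coincide. Concretely, $\Gamma$ is the law of the pair obtained by drawing $y\sim\pi_S$ and then a common seed $\mathbf z$.

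\textbf{Step 3: bound the transport cost.} Since $\Gamma$ is admissible,
\[
W_2(\mu_S,\hat\mu_S)^2\le\int\|a-b\|_2^2\,d\Gamma=\sum_y\pi_S(y)\,d^{\varphi_u}(p_{\theta^*_S},p_\theta\mid\psi(y))^2\le(\varepsilon^{\varphi_u})^2,
\]
where the last inequality uses the uniform bound \eqref{eq:expected-drift} in each term and $\sum_y\pi_S(y)=1$. This handles $p=2$. For $p=1$, Jensen's inequality gives $\mathbb E\|\cdot\|\le(\mathbb E\|\cdot\|^2)^{1/2}$, so $W_1\le W_2\le\varepsilon^{\varphi_u}$.

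\textbf{Step 4: conclude.} Substituting Step 3 into Step 1 yields \eqref{eq:proxy-bound-feature}. The argument holds for any fixed realization of $\theta^*_S$, since the assumption is stated uniformly over such realizations.

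Two points need care:
\begin{itemize}
\item The coupling in Step 2 is valid only because the prior $\pi_S$ is identical in both mixtures. I would state this explicitly, since it is where the definition of the proxy matters.
\item The pushforward must be measurable and have finite second moments, which I would assume implicitly so that $W_2$ is finite.
\end{itemize}
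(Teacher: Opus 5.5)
Your proposal is correct and follows essentially the same route as the paper. Both write $v(S)$ and $\hat v_\theta(S)$ as $\mathcal V$ applied to the $\pi_S$-mixtures of pushforward feature laws, apply the Lipschitz property, and bound $W_2$ (hence $W_1$) using the coupling obtained by drawing $\mathbf c\sim\pi_S$ together with a common seed $\mathbf z$; this coupling is exactly your $\Gamma=\sum_y\pi_S(y)\tilde\gamma_y$, and your explicit observation that it relies on both mixtures sharing the weights $\pi_S$ is something the paper uses only implicitly.
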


\begin{remark}[Interpretation and Metric Applicability]
\label{rem:metric-applicability}
Proposition~\ref{prop:proxy-error-bound} shows that the proxy error is controlled by two factors: the stability of the model trained on a coalition in a utility-aligned feature space and the Lipschitz continuity of the evaluation functional. The stability assumption means that training on a coalition should only mildly perturb the model's class-conditional output distributions in the representation space relevant to the evaluation metric. Under this condition, \(\hat v_\theta(S)\) serves as a computationally efficient
surrogate for \(v(S)\), with approximation error bounded by
\eqref{eq:proxy-bound-feature}.

The Lipschitz requirement covers many standard image evaluation utilities. For expectation-based metrics such as LPIPS-based perceptual scores \citep{zhang2018unreasonable} or CLIP-score \citep{radford2021learning}, if the underlying scorer \(s(\cdot)\) is \(L_s\)-Lipschitz and \(\mathcal{V}\) is linear in the induced distribution, then the expected score is \(L_s\)-Lipschitz with respect to \(W_1\). For distributional metrics such as FID \citep{heusel2017gans}, which compares Gaussian moment matches in feature space, the corresponding functional is locally \(W_2\)-Lipschitz. Although FID is not globally Lipschitz, a small \(W_2\) drift under bounded second moments implies small perturbations of the feature mean and covariance, yielding stability in our evaluation regime.
\end{remark}

\subsection{Approximate Shapley Values via Tree Explainer}
\label{sec:surrogate_shapley}

Although the training-free proxy \( \hat{v}_{\theta}(S) \) makes individual coalition queries inexpensive, Shapley value estimation can still require many queries due to the utility function’s strong nonlinearity in the player set \citep{garrido2024shapley}. \citet{butler2025proxyspex} addresses this by fitting a gradient-boosting tree (GBT) surrogate to exploit hierarchical interaction priors and subsequently extracting a sparse interaction spectrum. However, such explicit sparsification can compromise faithfulness if the utility function contains even higher-order interactions.  In contrast to explicit sparsification, recent work has computed Shapley values directly on tree-based surrogates and corrected residuals with Monte Carlo sampling \citep{witter2026regression}. We similarly use the learned GBT surrogate and compute Shapley values directly on the trained ensemble using TreeSHAP \citep{lundberg2020local}, without imposing explicit sparsity or truncation constraints on the utility function. Our implementation is as follows:

\textbf{Approximating Utilities with Tree Surrogates}
We sample $M$ coalitions \(S_m\) via the Shapley kernel, and encode each as $\mathbf{z}_m \in \{0,1\}^n$, the indicator vector of contributors $S_m \subseteq \{1, \dots, n\}$. These samples form a training set $\mathcal{D} = \{(\mathbf{z}_m, \hat{v}_{\theta}(S_m))\}_{m=1}^M$ used to learn a surrogate model $\hat f_\omega$ by solving:
\begin{equation}\label{eq:surrogate_erm} 
\hat f_\omega = \arg\min_{f \in \mathcal{H}} \sum_{m=1}^M \left( f(\mathbf{z}_m) - \hat{v}_{\theta}(S_m) \right)^2 + \Omega(f), 
\end{equation}
where $\Omega$ is a tree-specific complexity penalty (\Cref{apx:xgb_training}). We optimize hyperparameters using $5$-fold cross-validation. This transforms expensive subset queries into a reusable surrogate that can be evaluated on any point within the Boolean hypercube \( \{0,1\}^n \).

\textbf{TreeSHAP on the Surrogate Model}
We then compute Shapley values $\hat{\boldsymbol{\phi}} = \phi(\hat f_\omega)$ via Interventional TreeSHAP \citep{lundberg2020local} with the all-zeros reference $\mathbf{r} = \mathbf{0}$ as the background. Because TreeSHAP is exact for tree models, $\hat{\boldsymbol{\phi}}$ satisfies the efficiency axiom $\sum_{i=1}^n \hat\phi_i = \hat f_\omega(\mathbf{1}) - \hat f_\omega(\mathbf{0})$. Under this framework, \(\hat{\boldsymbol{\phi}}\) are deterministic and exact conditional on $\hat{f}_\omega$; the remaining error is driven by the surrogate's generalization error over the $2^n$ possible coalitions.

\textbf{Error decomposition}
Our goal is to estimate \(\boldsymbol{\phi}(v)\), the Shapley values for the training game \(v\) (\Cref{eq:true_game_retrain}), using an estimator \(\hat{\boldsymbol{\phi}}\). Using linearity of the Shapley operator and the triangle inequality, we decompose the attribution error as
\begin{equation}\label{eq:err_decomp_threeway}
\begin{aligned}
\|\hat{\bm{\phi}} - \bm{\phi}(v)\|_2 
&= \|\bm{\phi}(\hat f_\omega) - \bm{\phi}(v)\|_2 \\
&\le \underbrace{\|\bm{\phi}(\hat f_\omega - \hat v_{\theta})\|_2}_{\mathcal E_{\mathrm{sur}}} + \underbrace{\|\bm{\phi}(\hat v_{\theta} - v)\|_2}_{\mathcal E_{\mathrm{gap}}}.
\end{aligned}
\end{equation}
The term \(\mathcal E_{\mathrm{sur}}\) captures the surrogate approximation and estimation error, including function-class bias and variance from fitting \(\hat f_\omega\) on \(M\) sampled coalitions. The term \(\mathcal E_{\mathrm{gap}}\) captures the discrepancy between the proxy game and the retraining game. 



In summary, SurrogateSHAP (\Cref{alg:surrogate}) enables scalable attribution by (i) reformulating the training game as a training-free proxy game, (ii) approximating utilities via a GBT, and (iii) computing Shapley values via TreeSHAP.

\begin{algorithm}[h!]
\caption{SurrogateSHAP}
\label{alg:surrogate}
\begin{algorithmic}[1]
\Require Contributors $N$, target model $\theta$, utility $\mathcal{F}$, budget $M$, sampler $q_{\mathrm{KS}}$, hypothesis class $\mathcal{H}$.
\Ensure Estimated attributions $\hat{\boldsymbol{\phi}} \in \mathbb{R}^n$.
\State Initialize $\mathcal{D}_{\mathrm{surr}} \gets \emptyset$.
\For{$m = 1, \dots, M$}
    \State Sample $S_m \sim q_{\mathrm{KS}}$ and encode it as $\mathbf{z}_m \in \{0,1\}^n$.
    \State Set $\mathcal{Y}_{S_m} \gets \bigcup_{i \in S_m} \mathcal{Y}_i$.
    \State Define $P^\theta_{S_m} \gets \sum_{y \in \mathcal{Y}_{S_m}} \pi_{S_m}(y)\, p_\theta(\mathbf{x}\mid\psi(y))$.
    \State Set $y_m \gets \mathcal{F}(P^\theta_{S_m})$.
    \State Add $(\mathbf{z}_m,y_m)$ to $\mathcal{D}_{\mathrm{surr}}$.
\EndFor
\State Fit $\hat f_\omega \in \mathcal{H}$ on $\mathcal{D}_{\mathrm{surr}}$ by minimizing $\mathcal{L}+\Omega$.
\State Set $\hat{\boldsymbol{\phi}} \gets \mathrm{TreeSHAP}(\hat f_\omega,\mathbf{1}_n;\mathrm{baseline}=\mathbf{0}_n)$.
\State \Return $\hat{\boldsymbol{\phi}}$.
\end{algorithmic}
\end{algorithm}

%% file: sections/experiments.tex
We evaluate SurrogateSHAP through: (i) the fidelity of the proxy game to the training game, (ii) sample efficiency and convergence for Shapley estimation, (iii) the Linear Datamodel Score (LDS), and (iv) counterfactual evaluation.

\subsection{Datasets \& Models}\label{sec:dataset_models}

\textbf{CIFAR-20 (32$\times$32).}
Following \citet{luefficient}, we use CIFAR-20, a 20-class subset of CIFAR-100, where each contributor (labeler) corresponds to a distinct class \citep{krizhevsky2009learning}. We evaluate a DDPM-CFG \citep{ho2022classifier}, generating samples with a 50-step DDIM sampler. To assess model behavior, we report FID and Inception Score (IS).

\textbf{ArtBench (Post-Impressionism) (256$\times$256).}
We use the post-impressionism subset of ArtBench \citep{liao2022artbench}, which contains 5,000 images from 258 artists, and treat each artist as a player. Each image has a corresponding prompt template \texttt{``a Post-Impressionist painting by \{artist\}''}. We fine-tune Stable Diffusion v1.5 \citep{rombach2022high}.  The model property is computed as the average aesthetic score with an aesthetic predictor\footnote{\url{https://github.com/LAION-AI/aesthetic-predictor}}.

\textbf{Fashion-Product (256$\times$256).}
We build a dataset from the Fashion Product Images collection \citep{param_aggarwal_2019}, comprising 4,468 images from 100 brands. Each image is paired with a text prompt that includes the product description and brand name. We treat each brand as a player and define a player’s data as the subset of images associated with that brand. We fine-tune FLUX-dev1 \citep{labs2025flux1kontextflowmatching} using LoRA (rank 128). To assess fashion-specific behavior, we report LPIPS and diversity computed over generated samples \citep{moosaei2022outfitgan}. 

As defined in \Cref{sec:retrain_game}, the concept of ``training'' encompasses a broad range of algorithms. Therefore, we use CIFAR-20 as a simplified analogue to T2I models for full-parameter retraining. For the ArtBench (Post-Impressionism) and Fashion-Product, we employ full fine-tuning and LoRA fine-tuning, respectively, as computationally efficient alternatives. Consequently, throughout the following sections, the term ``training'' specifically refers to full-parameter retraining for CIFAR-20, full fine-tuning for ArtBench, and LoRA fine-tuning for Fashion-Product. Please refer to \Cref{apx:lds_exp} for datasets, training, and implementation details.


\subsection{Proxy Game Fidelity}\label{sec:local_to_global}
\input{tables/model_behavior_udpated}

\begin{figure}[t]
    \centering
    \includegraphics[width=1.0\linewidth]{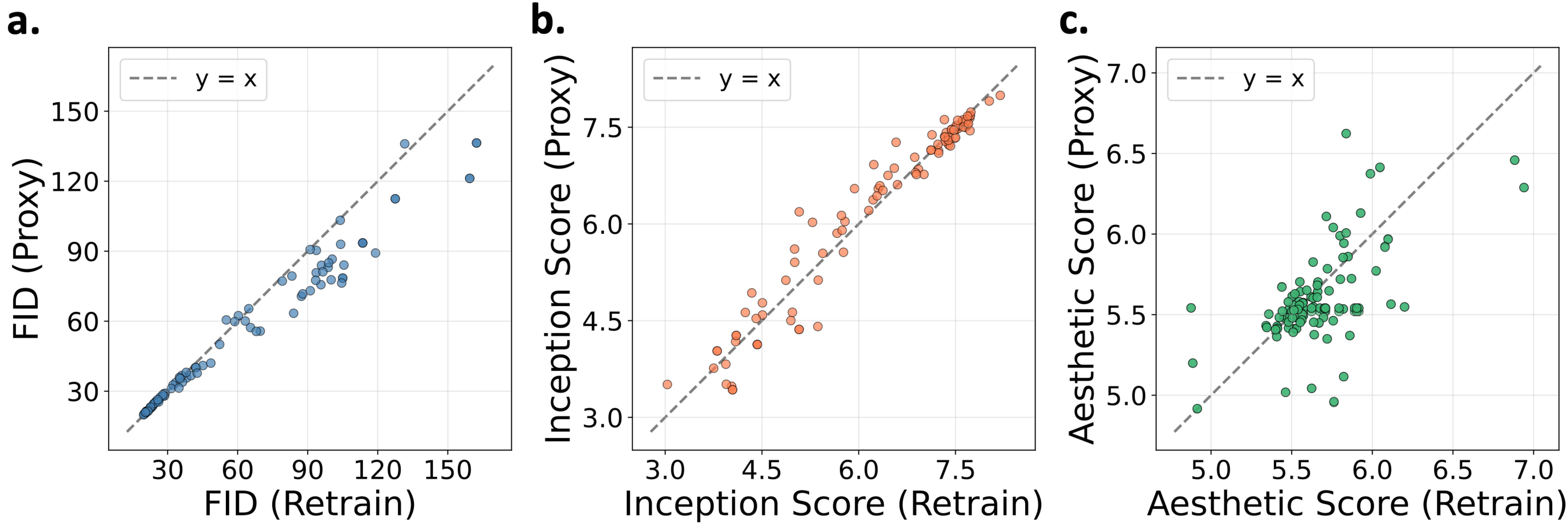}
    \caption{Alignment between the proxy utility \(\hat{v}_\theta(S)\) (y-axis) and training utility \(v(S)\) (x-axis) across subsets, for \textbf{(a)} FID, \textbf{(b)} Inception Score (IS), and \textbf{(c)} mean aesthetic score.} 
    \label{fig:approximation_gap}
\end{figure}
We start by evaluating the empirical faithfulness of the proxy $\hat{v}_{\theta}(S)$ relative to training utility $v(S)$ at two granularities. First, we measure paired sample agreement under a shared noise \(\mathbf{z}\) and conditioning \(\mathbf{c}\) using pixel-level metrics (MAE, SSIM) and RMS of feature-space distances (LPIPS, CLIP similarity). Second, at the utility level, we assess using NMAE and NMSE. These metrics are computed over 100 subsets sampled from the Shapley distribution, with utilities defined as described in \Cref{sec:dataset_models}.  For reference, we also report results from independently trained instances of the original game using different random seeds, evaluated on the same subsets, which provide an empirical lower bound for \Cref{eq:expected-drift}.


As shown in \Cref{tab:local_fidelity}, our proposed proxy game achieves high pixel-level alignment with the training game. Across all benchmarks, it consistently exhibits lower representation drift (cf. \Cref{eq:expected-drift}) compared to other methods such as sparsified fine-tuning (sFT) \citep{luefficient}. Specifically, our method achieves lower RMS LPIPS than sFT on CIFAR-20 (0.108 vs. 0.129), ArtBench Post-Impressionism (0.458 vs. 0.479), and Fashion-Product (0.301 vs. 0.337). Similar performance gains are observed in RMS CLIP, indicating that our method effectively preserves structural and semantic features relative to training instances (\Cref{fig:syn_interaction}a). This local stability translates into a tight global utility approximation. As shown in \Cref{fig:approximation_gap} and the utility-level metrics (\Cref{tab:local_fidelity}), our proxy tracks the training utility with an NRMSE ranging from 0.071 on CIFAR-20 to 0.265 on Fashion-Product. These findings empirically validate that the bound in \eqref{eq:proxy-bound-feature} is effectively controlled in practice.

Computationally, because our proposed game is inference-only, it offers a significant speedup (\Cref{tab:local_fidelity}). On CIFAR-20, ArtBench (Post-Impressionism), and Fashion-Product, it is $23.4\times, 12.8\times$, and \(2.9\times\) faster than explicit training, respectively. Compared to sFT, we achieve gains of $2.5\times$, $1.4\times$, and $1.5\times$ while bypassing the need for subset fine-tuning. Collectively, these gains in speed and proxy fidelity make large-scale subset queries for T2I attribution practical.

\subsection{Sample Efficiency on Synthetic Games} 
\begin{figure}[t]
    \centering
    \includegraphics[width=1.0\linewidth]{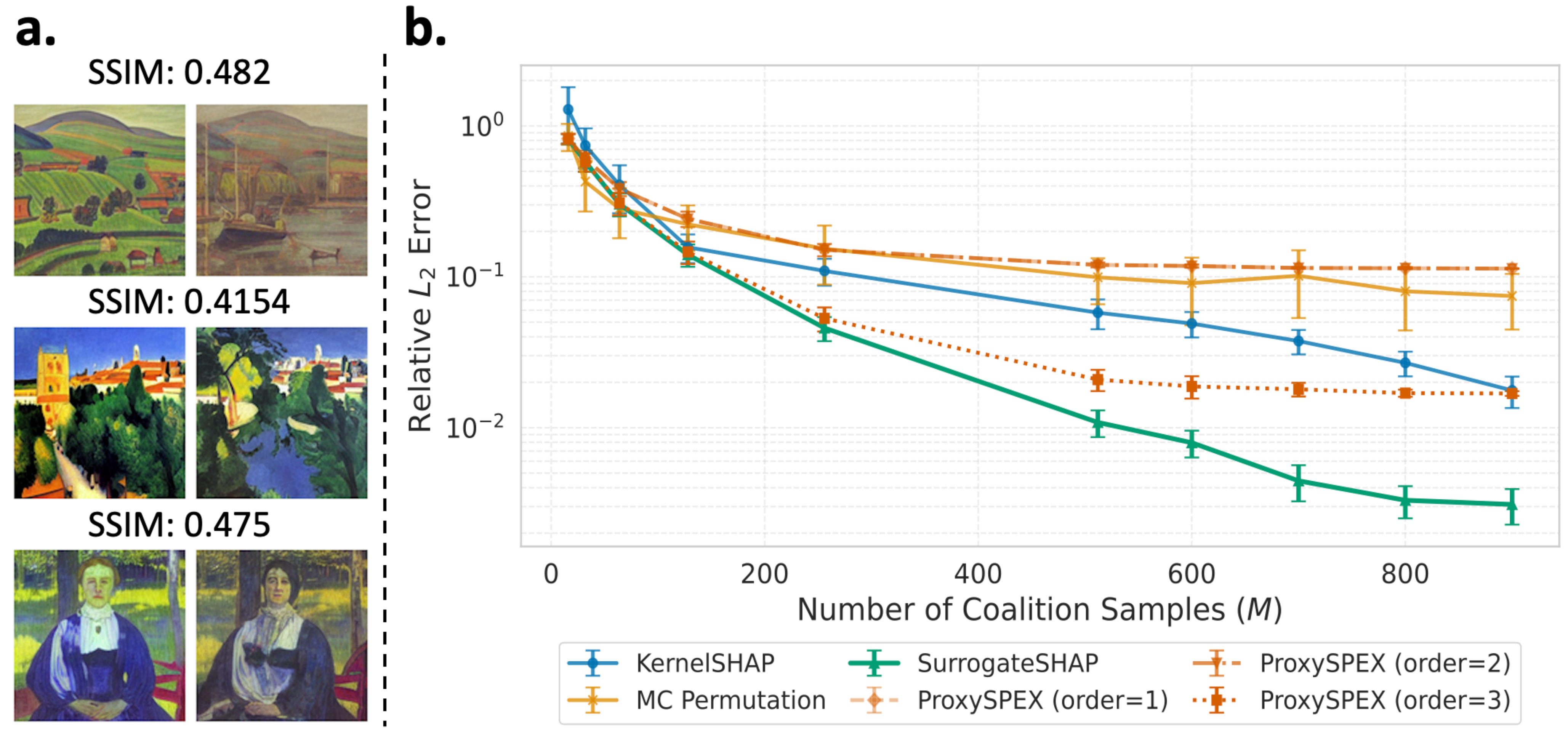}
    \caption{\textbf{(a)} Local samples and structural similarities between (re)trained models (left) and the target model (right) for three subsets. \textbf{(b)} Shapley estimator accuracy: \(\ell_2\) error to the oracle versus subset budget on synthetic games (mean \(\pm\) std over 60 independent trials).}
\label{fig:syn_interaction}
\end{figure}

To validate the Shapley estimation proposed in \Cref{sec:surrogate_shapley}, we first evaluate its performance on synthetic coalition games ($n=10$) characterized by non-linearities and up to three-way interactions (\Cref{apx:sanity_checks_proxyshap}). This controlled setting eliminates the proxy-training gap and enables the computation of exact Shapley values $\bm{\phi}(v)$ via \eqref{eq:shapley_val}. We then measure the \(\ell_2\) error relative to $\bm{\phi}(v)$ across various query budgets $M$ for different Shapley estimators.

As shown in \Cref{fig:syn_interaction}b, SurrogateSHAP is substantially more sample-efficient than KernelSHAP \citep{lundberg2017unified} and ProxySPEX \( (k \in {1,2})\) \citep{butler2025proxyspex} in the interaction-heavy setting. At a budget of $M=512$, kernelSHAP maintains a $\ell_2$ error of $0.085$. In contrast, SurrogateSHAP reaches an error of $0.010$ with the same budget. This reduction in error translates directly to better axiomatic fidelity; as $M$ increases, our method more consistently satisfies properties like efficiency and the null-player axiom compared to ProxySPEX (\Cref{sec:axiom_check}). While KernelSHAP is effective in purely additive regimes (\Cref{fig:efficiency_all}a), its performance degrades if the coalition utility function involves non-linearity and interaction (\Cref{fig:efficiency_all}b, \Cref{fig:syn_interaction}). These results indicate that when utility functions involve complex interactions and query budgets are limited, SurrogateSHAP provides superior sample efficiency and estimation accuracy.

\input{tables/lds_0_5}

\subsection{Evaluating SurrogateSHAP with Linear Datamodel Score (LDS)}\label{sec:lds_results}
In this section, we evaluate SurrogateSHAP in attribution tasks for T2I diffusion models through Linear Datamodel Score (LDS) \citep{ilyas2022datamodels}, which measures how well an attribution method predicts utilities of unseen coalitions. Formally, for a coalition $S\subseteq[n]$ and an attribution method \(\tau\), its additive score is \( g(S;\tau) \;=\; \sum_{i\in S}\tau_i\), and LDS is defined as:
\begin{equation}
\label{eq:lds}
\mathrm{LDS}(\tau;\alpha)
\coloneqq
\rho\!\left(
\left\{\,g(S;\tau)\,\right\}_{S\in\mathcal{S}_{\alpha}},\;
\left\{\,v(S)\!\,\right\}_{S\in\mathcal{S}_{\alpha}}
\right),
\end{equation}
where $\rho$ is Spearman correlation \citep{spearman1961proof} and $\mathcal{S}_\alpha$ is a held-out subset sampled uniformly from $\{S\subseteq[n]:|S|=\alpha n\}$. We consider $\alpha\in\{0.25,0.5,0.75\}$. For each $\alpha$, we sample 100 subsets, train the model (retrain, full fine-tune, or fine-tune with LoRA as described in \Cref{sec:dataset_models}), and compute $v(S)$ following \Cref{sec:dataset_models}. We report the mean LDS with 95\% confidence intervals over five seeds.

\textbf{Baseline Methods.}
We group baseline attribution methods into four categories: (i) similarity-based methods; (ii) leave-one-out and its approximations (e.g., influence functions \citep{koh2017understanding} and TRAK-based methods \citep{park2023trak}); (iii) application-specific heuristics (e.g., aesthetic scores for ArtBench Post-Impressionism); and (iv) training proxies \citep{luefficient}.\footnote{We exclude TracIn \citep{pruthi2020estimating} since intermediate checkpoints are often unavailable in practice.} For TRAK-based methods, gradients are computed at the target model $\theta$ following \citet{zheng2023intriguing}. For local attribution methods, we averaged sample-level scores over images generated by $\theta$ to obtain global scores. The aggregation strategy at the contributor level depends on the method: we sum datum-level attributions for influence-function and TRAK methods (the principled aggregation \citep{koh2019accuracy}), while for similarity-based and application-specific baselines, we aggregate by either the mean or the maximum. Additional details for baseline implementations are in \Cref{apx:baselines}.

\begin{figure}[t]
    \centering
    \includegraphics[width=\linewidth]{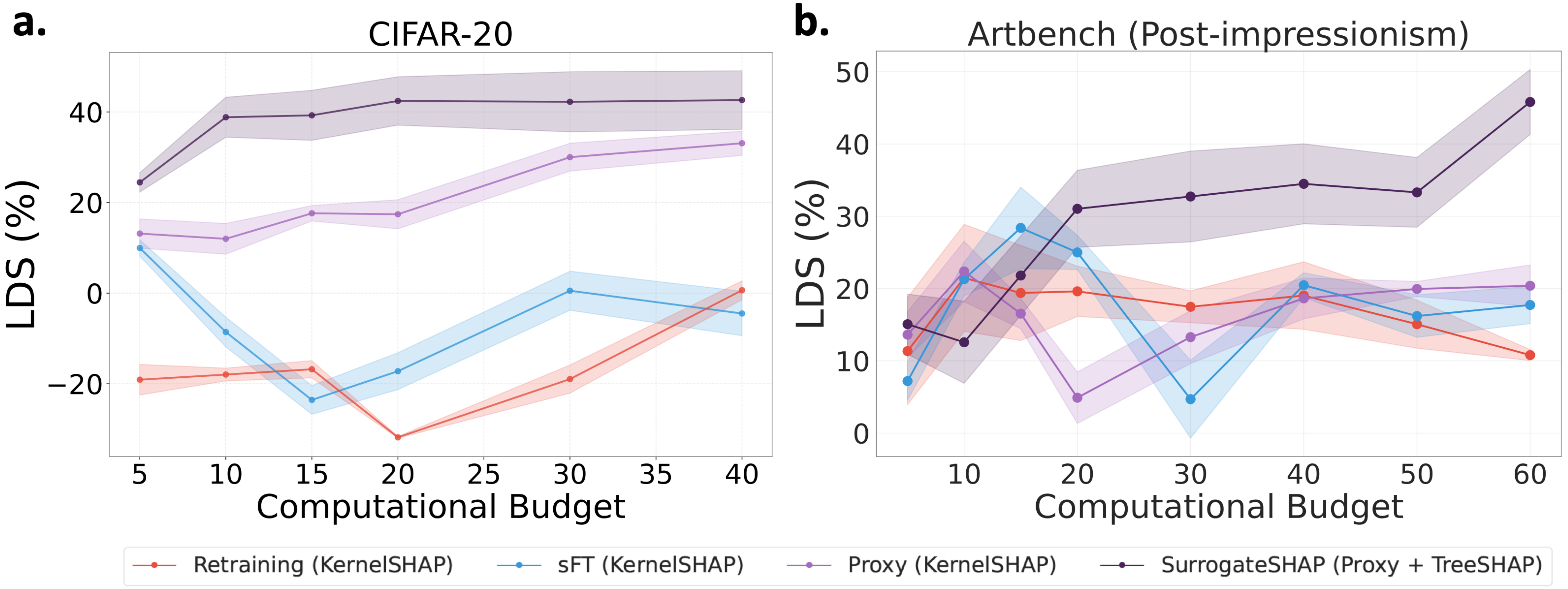}
    \caption{LDS (\%) at \(\alpha=0.5\) for Shapley-based methods, evaluated on \textbf{(a)} FID and \textbf{(b)} average aesthetic score. Computational budget is measured in units of training-and-inference runtime.}
    \label{fig:budget_lds}
\end{figure}

\textbf{Results.} \Cref{tab:lds_0.5_results} summarizes LDS ($\alpha = 0.5$) performance for each method. 
Our results reveal that TRAK and influence-function (IF) variants frequently underperform LOO estimates. In ArtBench (Post-Impressionism), these methods exhibit negative correlations, most notably D-TRAK (\( -20.75\% \text{ LDS}\)). Similarity-based methods (e.g., CLIP similarity, GMValuator \citep{yanggmvaluator}) are often inconsistent across datasets and utilities. Such results suggest that these baseline methods can be misleading regarding the impact of data contributors on generative quality. In contrast, SurrogateSHAP shows consistent and superior performance across diverse utility functions, ranging from distributional metrics (FID) to perceptual ones (Avg. Aesthetic, LPIPS). Within the Shapley-based framework, SurrogateSHAP outperforms sFT \citep{luefficient} by a substantial margin. On CIFAR-20, it achieves 51.21\% LDS for FID, whereas sFT struggles and underperforms other baselines. In ArtBench (Post-Impressionism), SurrogateSHAP reaches 44.14\% LDS, more than doubling sFT. SurrogateSHAP consistently outperforms other methods across various datamodel \(\alpha\in\{0.25,0.75\}\) (see \Cref{tab:lds_0.25_results,tab:lds_0.75_results}).

We next evaluate the efficiency-performance trade-off within the Shapley-based framework 
by measuring LDS across varying computational budgets. For SurrogateSHAP, we omit the GBT training time from the total budget, as it is negligible compared to the total cost of subset queries\footnote{In \Cref{tab:lds_0.5_results}, the average runtime of 5-fold CV is 0.86s for CIFAR-20, 1.65s for ArtBench Post-impressionism, and 26.62s for Fashion-Product.}. Across all tested budgets, SurrogateSHAP achieves the highest LDS and demonstrates better scaling trends on CIFAR-20 and ArtBench (Post-Impressionism) (\Cref{fig:budget_lds}). For CIFAR-20, our method exceeds baseline performance at low budgets ($<10$ training units) and scales consistently as compute increases; in contrast, training and sFT frequently plateau or exhibit high variance. For FID at budget 10, our method reaches an LDS of \(41.2 \%\) whereas the training baseline and sFT are below 0\% (\Cref{fig:budget_lds}a). We also compare TreeSHAP-based estimates (dark purple) with KernelSHAP (light purple) using the same proxy subsets, and our approach outperforms the KernelSHAP estimate by a significant margin. These results confirm that SurrogateSHAP provides computationally efficient, reliable estimates of held-out set utilities.


\subsection{Counterfactual Evaluation}

\input{tables/bottom_counterfactual}

To complement the rank-based LDS evaluation, we perform counterfactual evaluation \citep{hooker2019benchmark} to verify the influence of identified contributors. We measure the relative change in model property, $\Delta\mathcal{F}:=\frac{\mathcal{F}(p_{\theta^*_K})-\mathcal{F}(p_{\theta})}{\mathcal{F}(p_{\theta})}$, where $\theta^*_K$ is model retrained after excluding or retaining the top-$K$ contributors. A high-fidelity attribution method should maximize this change (degradation or preservation) for a given $K$. We compare SurrogateSHAP against the best-performing baselines within each category from \Cref{tab:lds_0.5_results}.

Upon bottom-removal, \Cref{tab:cf_bottom_aesthetic_diversity_bold} shows that removing low-ranked groups can enable dataset pruning with minimal impact or even performance gains. SurrogateSHAP yields the most consistent improvements across percentiles. For example, removing low-ranked groups on ArtBench (Post-Impressionism) improves the aesthetic score by $3.62\%$ at 20\% removal. On Fashion Product, diversity improves by $12.75\%$ at 15\% removal, whereas baselines often degrade the same metrics (e.g., Avg.\ GradSim: $-6.53\%$). A similar pattern holds for CIFAR-20, where our method yields the smallest FID increases under bottom removal ($11.79\%$ at 5\% removal). These findings suggest that SurrogateSHAP is an effective data pruning method that filters out non-essential instances without compromising downstream generative utility.

In contrast, under top-removal counterfactuals (\Cref{tab:cf_top_removal}), SurrogateSHAP induces the largest degradations in generative quality. On CIFAR-20, removing the top 10\% of groups increases FID by $31.70\%$ (vs.\ CLIP Score $17.53\%$ and LOO $17.48\%$) and decreases the Inception Score by $26.55\%$ (vs.\ LOO $22.45\%$ and GMValuator $19.52\%$) (\Cref{tab:app_cifar20_is_full}). Similarly, on ArtBench (Post-Impressionism), removing the top 20\% results in a $7.71\%$ drop in aesthetic score, exceeding all baselines. As shown in \Cref{fig:top_contributor_cifar20_fid}, SurrogateSHAP identifies the most visually salient classes as top contributors to FID on CIFAR-20, and highlights ArtBench (Post-Impressionism) artists whose works are warm-toned and atmospheric (\Cref{fig:top_contributor_artbench}), indicating it captures the most influential contributors.


Overall, these counterfactual results, combined with LDS analysis, confirm that SurrogateSHAP provides accurate attribution effectively and identifies influential contributors. 

%% file: tables/model_behavior_udpated.tex
\begin{table}[t!]
\centering
\caption{ \textbf{Empirical fidelity of the proxy relative to training utility and runtime.} Sample-level fidelity is measured by pixel similarity (MAE, SSIM) and representation alignment (RMS in LPIPS, CLIP distance); utility error is reported via range-normalized NMAE and NRMSE. Time is average wall-clock minutes per subset query. Grey rows indicate the empirical bounds derived from independent training runs of the original game using different random seeds.}

\label{tab:local_fidelity}
\small
\begin{adjustbox}{width=0.48\textwidth}
\begin{tabular}{lccccccc}
\toprule
 & \multicolumn{2}{c}{\textbf{Pixel similarity}} 
& \multicolumn{2}{c}{\textbf{Representation alignment}} 
& \multicolumn{2}{c}{\textbf{Utility error}} & \\
\midrule
\textbf{Method} &
\textbf{MAE} $\downarrow$ &
\textbf{SSIM} $\uparrow$ &
\textbf{RMS CLIP} $\downarrow$ &
\textbf{RMS LPIPS} $\downarrow$ &
\textbf{NMAE} $\downarrow$ &
\textbf{NRMSE} $\downarrow$ &
\textbf{Time (min)} $\downarrow$ \\
\midrule
\multicolumn{8}{c}{\textbf{CIFAR-20}}  \\
\midrule
\rowcolor{gray!15}Retraining & 0.104 & 0.802 & 0.403 & 0.098 & 0.023 & 0.053 & $234.36$ \\
sFT & 0.145 & 0.639 &  0.496 & 0.129 & 0.079 & 0.150 & 27.42 \\
\textbf{Ours}   & \textbf{0.105} & \textbf{0.779} &\textbf{0.452}  & \textbf{0.108} & \textbf{0.037} & \textbf{0.071} & \textbf{10.72} \\
\midrule
\multicolumn{8}{c}{\textbf{ArtBench}}  \\
\midrule

\rowcolor{gray!15}Full ft & 0.104&0.601 & 0.427 &	0.337&	0.057	&0.082&   $128.83$   \\
sFT  & 0.179 & 0.441 & 0.571 & 0.479 & 0.101 & 0.166 & 14.15 \\
\textbf{Ours} & \textbf{0.147} & \textbf{0.531} & \textbf{0.529}  & \textbf{0.458} & \textbf{0.068} & \textbf{0.124} & \textbf{10.92} \\
\midrule
\multicolumn{8}{c}{\textbf{Fashion-Product}} \\
\midrule
\rowcolor{gray!15}LoRA ft & 0.121 &0.871& 0.399 &0.267 & 0.087 &	0.116 &   $105.24$   \\
sFT & 0.186 & 0.772 & 0.458 & 0.337 & 0.198 & 0.296 & 54.33 \\
\textbf{Ours}   & \textbf{0.177} & \textbf{0.786} & \textbf{0.429} &  \textbf{0.301} & \textbf{0.192} & \textbf{0.265} & \textbf{35.23} \\
\bottomrule
\end{tabular}
\end{adjustbox}
\end{table}

%% file: tables/lds_0_5.tex
\begin{table*}[t!]
    \centering
    \caption{LDS (\%) results with $\alpha = 0.5$. Means and 95\% confidence intervals across five random initializations are reported.}
    \label{tab:lds_0.5_results}
    \begin{adjustbox}{width=.9\textwidth}
    \begin{tabular}{lccccc}
    \toprule
    \textbf{Method} &
    \multicolumn{2}{c}{\textbf{CIFAR-20}} &
    \textbf{\shortstack{ArtBench\\(Post-Impressionism)}} &
    \multicolumn{2}{c}{\textbf{Fashion Product}} \\
    \midrule
    \cmidrule(lr){2-3}\cmidrule(lr){4-4}\cmidrule(lr){5-6}
     & Inception score & FID & Aesthetic score & Diversity & LPIPS \\
    \midrule
    Pixel similarity (avg.) & \( -8.88 \pm 2.80\) & \( 13.74 \pm 7.44\) & \( -5.43 \pm 5.57\) & \( 10.16 \pm 6.09\) & \( -14.90 \pm 4.90\) \\
    Pixel similarity (max) & \( -23.20 \pm 3.16\) & \( 8.37 \pm 1.76\) & \( 5.44 \pm 4.49\) & \( -21.40 \pm 7.11\) & \( 22.30 \pm 14.37\) \\
    CLIP similarity (avg.) & \( -21.49 \pm 5.10\) & \( 20.13 \pm 6.40\) & \( 5.60 \pm 9.09\) & \( 14.84 \pm 7.72 \) & \( -4.19 \pm 16.49\) \\
    CLIP similarity (max) & \( 20.36 \pm 4.04\) & \(-17.79 \pm 1.03\) & \( 3.62 \pm 6.65\) & \( 1.94 \pm 10.62\) & \(8.97 \pm 2.69\) \\
    Gradient similarity (avg.) & \( 13.29 \pm 2.84\) & \( -11.47 \pm 1.88\) & \( 5.73 \pm 5.73\) & \(19.41 \pm 8.38 \) & \( -23.21 \pm 12.02\) \\
    Gradient similarity (max) & \( -20.70 \pm 3.64\) & \( 12.54\pm 1.85\) & \( 5.73 \pm 2.79\) & \(  18.47 \pm 11.87\) & \(-23.22 \pm 8.70\) \\
    GMValuator \citep{yanggmvaluator} & \( 27.85 \pm 3.17 \) & \( 5.37 \pm 6.20 \) & \( 3.49 \pm 4.24\)& \( -2.86 \pm 7.86\) & \( -12.75 \pm 0.20\)\\
    \midrule
    Aesthetic score (avg.) & -- & -- & \( 6.91 \pm 3.79\) & -- & -- \\
    Aesthetic score (max) & -- & -- & \( 6.71 \pm 4.28\) & -- & -- \\
    \midrule
    Relative IF \citep{barshan2020relatif} & \( 23.88 \pm 1.55\) & \( 13.29 \pm 2.84\) & \( -3.01 \pm 4.07\) & \( 7.23 \pm 5.97 \) & \( 12.52 \pm 1.65\) \\
    Renormalized IF \citep{hammoudeh2022identifying} & \( 22.74 \pm 4.61\) & \( 12.82 \pm 2.70\) & \( -5.84 \pm 3.99\) & \( 2.45 \pm 5.25 \) & \(16.02 \pm 2.86 \) \\
    TRAK \citep{park2023trak} & \( 24.08 \pm 1.45\) & \( 12.84 \pm 2.76\) & \( -5.34 \pm 3.96\) & \( 6.27 \pm 5.86 \) & \( 18.52\pm 2.31\) \\
    Journey-TRAK \citep{georgiev2023journey} & \( 10.80 \pm 3.03 \) & \( -34.42 \pm 1.49\) & \( 6.64 \pm 6.36\) & \( 6.65 \pm 16.34 \) & \( -0.40 \pm 16.17\) \\
    D-TRAK \citep{zheng2023intriguing} & \(12.82 \pm 4.11\) & \( 9.38 \pm 0.33\) & \( -8.75 \pm 2.52\) & \( 17.61 \pm 5.71\) & \( 15.82 \pm 6.67\) \\
    DAS \citep{lin2024diffusion}& \( 24.70 \pm 1.50 \) & \( 13.59 \pm 2.71\) & \( -5.66 \pm 4.09\) & \( 6.35 \pm 6.10\) & \( 12.31 \pm 2.37 \) \\
    Leave-one-out (LOO) & \( 48.31\pm 1.11\) & \( 34.82 \pm 0.92\) & \( -0.18 \pm 3.19\) & \( -3.88 \pm 3.00\) & \( -1.50 \pm 0.71\) \\
    \midrule
    Sparsified-FT Shapley \citep{luefficient} & \( 40.0 \pm 2.25\) & \( -14.1 \pm 3.51\) & \( 18.85 \pm 1.89 \)& \(20.34 \pm 12.43 \)& \( 18.57 \pm 6.21\) \\
    SurrogateSHAP \textbf{(Ours)} & \textbf{79.7 \( \pm\) 2.20} & \textbf{51.21} \( \pm \) \textbf{6.78} & \textbf{ 39.0 \(\pm\) 5.27 } & \textbf{32.41} \( \pm \) \textbf{5.31} & \textbf{42.60} \( \pm \) \textbf{0.81} \\
    \bottomrule
    \multicolumn{6}{l}{\footnotesize Sparsified-FT and SurrogateSHAP use the same number of sampled coalitions (\( M \)): 600 for CIFAR-20, 700 for ArtBench, and 800 for Fashion-Product.} \\  
    \end{tabular}
    \end{adjustbox}
\end{table*}

%% file: tables/bottom_counterfactual.tex
\begin{table}[t!]
\centering
\caption{\textbf{Counterfactual performance after bottom removal.} We report the relative change (\%) in \(\Delta\mathcal{F}\) after removing the bottom \(k\%\) most influential groups identified by each method. Results are mean \(\pm\) std over five random seeds. }
\label{tab:cf_bottom_aesthetic_diversity_bold}
\begin{adjustbox}{width=0.47\textwidth}
\begin{tabular}{lcccc}
\toprule
\textbf{Method} & 5\% & 10\% & 15\% & 20\% \\
\midrule
\multicolumn{5}{c}{\textbf{CIFAR-20 (FID $\downarrow$)}}\\
\midrule
CLIP Score                  & $19.39\pm 3.20 $ & $17.36\pm2.86$ & $17.21\pm 3.22$  & $16.98\pm 3.66$ \\
LOO                          & $23.21\pm4.76$ & $27.85\pm2.27$ & $32.22\pm 3.65$ & $43.11 \pm 2.02$ \\
SurrogateSHAP (Ours)                & {\bfseries\boldmath $11.79\pm1.47$} & {\bfseries\boldmath $11.85\pm2.55$} & {\bfseries\boldmath $12.75\pm3.34$} & {\bfseries\boldmath $16.98\pm3.67$} \\
\midrule
\multicolumn{5}{c}{\textbf{ArtBench (Post-Impressionism) (Aesthetic Score $\uparrow$)}}\\
\midrule
Avg. Aesthetic Score & $-2.59\pm1.44$ & $-1.38\pm1.16$ & $-2.22\pm1.06$ & $-2.40\pm0.42$ \\
Avg. Pixel Similarity          & $-1.32\pm1.05$ & $-1.68\pm1.78$ & $-0.80\pm1.07$ & $-2.20\pm1.13$ \\
Relative Influence             & $-2.93\pm2.72$ & $-1.25\pm2.28$ & $1.08\pm0.94$  & $0.17\pm0.65$ \\
SurrogateSHAP (Ours)                & {\bfseries\boldmath $0.43\pm1.67$} & {\bfseries\boldmath $2.54\pm2.38$} & {\bfseries\boldmath $3.06\pm0.38$} & {\bfseries\boldmath $3.62\pm2.37$} \\
\midrule
\multicolumn{5}{c}{\textbf{Fashion Product (Diversity $\uparrow$)}}\\
\midrule
Avg. GradSim                   & $3.19\pm4.98$  & $-6.53\pm2.44$ & $4.83\pm3.54$  & $6.70\pm6.17$ \\
D-TRAK                          & $-1.12\pm4.25$ & $-4.49\pm4.98$ & $-0.50\pm2.14$ & $-7.24\pm5.95$ \\
SurrogateSHAP (Ours)                & {\bfseries\boldmath $9.36\pm3.46$} & {\bfseries\boldmath $5.22\pm5.01$} & {\bfseries\boldmath $12.75\pm3.34$} & {\bfseries\boldmath $7.46\pm4.44$} \\
\bottomrule
\end{tabular}
\end{adjustbox}
\end{table}

%% file: sections/use_cases.tex
Finally, we showcase the clinical utility of SurrogateSHAP by auditing a Stable Diffusion model fine-tuned on the ISIC dataset\footnote{\url{https://www.isic-archive.com/}}, a multi-site collection of dermoscopic images from ten hospitals. As modern generative models are trained on multi-site clinical data \citep{ktena2024generative,koetzier2024generating}, especially in scenarios like federated learning \citep{zhang2021survey} where the training data is not directly accessible, they risk inheriting site-specific acquisition protocols that exacerbate undesired behaviors, e.g., non-biological association between sex and specific diseases \citep{sagers2023augmenting, ktena2024generative}. To address this, we define the utility function \(v(S)\) as the Spearman correlation between the outputs of two classifiers, one trained to predict biological sex and another for melanoma, on the model's generated images (\Cref{app:case_study}). This utility quantifies the learned correlation between sex and melanoma diagnosis, which should ideally be near zero. However, our results reveal a significant correlation ($r=0.21, p = 3.2 \times 10^{-7}$).

To localize the source of this bias, we attribute the correlation to individual hospitals using SurrogateSHAP; as shown in \Cref{fig:spurious_correlation_isic}a, it identifies Hospital 7 as the primary driver of the spurious correlation,  while Hospital 9 contributes the least.
We subsequently inspected the source datasets from these hospitals and found that images from Hospital 7 exhibit a high prevalence of hair artifacts, which correlate with male patients \citep{gadgil2025dream} and possess dark pigmentation associated with melanoma \citep{tsao2015early} (\Cref{fig:spurious_correlation_isic}b). In contrast, Hospital 9 images frequently feature ruler markings, typically associated with benign lesions, and show no correlation with patient sex. Upon excluding Hospital 7 from the training data, the correlation in the resulting model's generations drops to nearly zero ($ r=5 \times 10^{-4}, p=3.9 \times 10^{-4}$), effectively reducing the spurious association. In contrast, removing the hospital with the highest correlation between the binary sex and melanoma labels in the training set (\Cref{fig:base_corr}) results in only a marginal decrease in the generated image correlation ($r = 0.11, p = 1.4\times 10^{-3}$). This disparity suggests that there are spurious correlations induced by inter-hospital interactions that are only captured by SurrogateSHAP.

\begin{figure}[t!]
    \centering
    \includegraphics[width=1.0\linewidth]{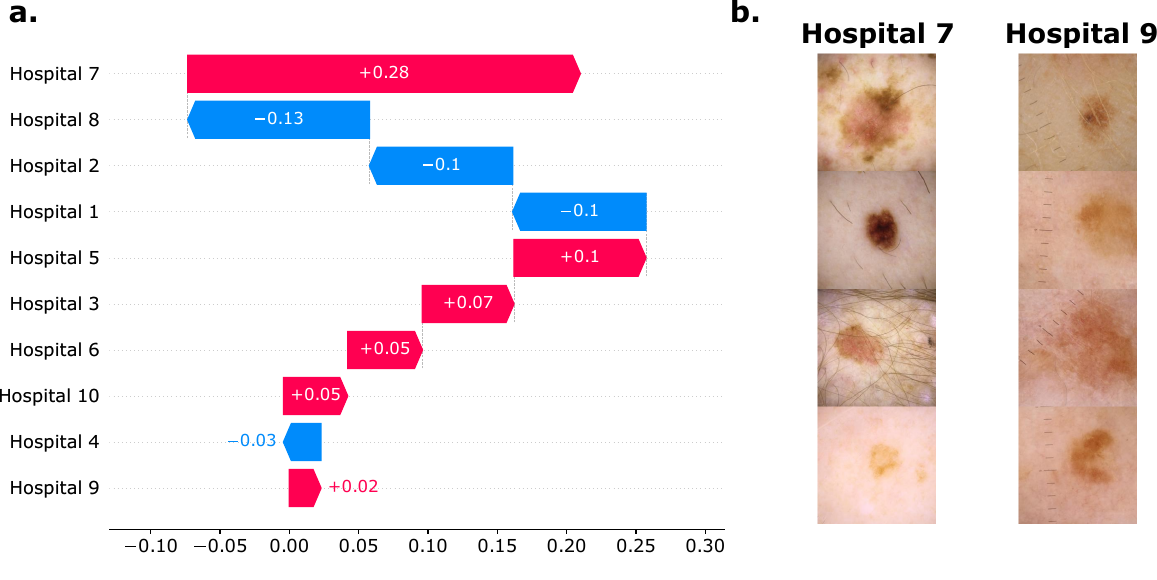}
    \caption{\textbf{(a)} SurrogateSHAP value for ten hospitals (x-axis: magnitude; y-axis: site; red/blue: positive/negative). \textbf{(b)} Examples of dermoscopic images with visual artifacts.}
\label{fig:spurious_correlation_isic}
\end{figure}


%% file: sections/conclusion.tex
In this work, we introduce SurrogateSHAP, a framework designed to enable efficient and scalable contributor valuation for T2I diffusion models. By reformulating the expensive training game as a training-free proxy game and approximate subset evaluations via tree-based surrogates, we enable efficient and accurate Shapley estimation without subset-specific training. This approach not only significantly enhances subset query speed but also improves the sample efficiency required for stable Shapley estimation, making contributor valuation practical for large-scale models. Our experiments across CIFAR-20, ArtBench (Post-Impressionism), and Fashion-Product demonstrate that SurrogateSHAP provides accurate estimates of influence, enabling scalable and reliable attribution of data contributors.

%% file: sections/acknowledgement.tex
We thank members of the Lee lab for providing feedback on this project and the reviewers for their constructive comments. This work was funded by the National Institutes of Health [R01 AG061132, R01 EB035934, RF1 AG088824].

%% file: sections/impact_statements.tex
This work introduces a training-free group-attribution framework for text-to-image diffusion models, enabling efficient estimation of contributor influence on various model behaviors of interest.

The primary positive impact is the democratization of model auditing. By eliminating the computational burden of retraining-based methods, our framework allows researchers with limited compute resources to perform transparency audits on large-scale models. This facilitates fair credit assignment for data contributors and supports principled dataset curation, which can lead to more efficient and specialized generative systems. Additionally, as demonstrated in our clinical case study, SurrogateSHAP identifies biased data providers with high precision. This provides a critical tool for identifying and mitigating systematic disparities in healthcare AI, ensuring that generative technologies are deployed equitably and safely across diverse populations.

While our method enhances transparency, it carries risks if misused as a sole arbiter for data removal. Because attribution scores are metric-specific, optimizing purely for generative quality (e.g., FID) might lead to the systematic pruning of ``outlier'' data representing minority groups or marginalized artistic styles, thereby amplifying representation bias. Furthermore, because SurrogateSHAP operates under a one-to-one mapping between contributors and labels, it is most effectively deployed in settings without overlapping prompt spaces. We emphasize that this framework should be used as a diagnostic tool within a broader multi-objective governance pipeline, supplemented by qualitative safety audits to ensure that data curation decisions do not suppress diverse or underrepresented content.

%% file: sections/appendix.tex
\section{Data Attribution Methods for Diffusion Models}\label{apx:baselines}
Here we summarize data attribution methods for diffusion models and provide implementation details for the baselines used in \Cref{sec:lds_results}. For local attribution methods, we compute sample-level attributions for images generated by \(\theta\) and average them to obtain global scores. Contributor-level aggregation depends on the method: for variants of influence-function (IF) and TRAK, we aggregate by summing datum-level attributions; for similarity-based and application-specific baselines, we aggregate using either the mean or the maximum.

\subsection{Similarity-based Methods}
\textbf{Raw pixel similarity.} This is a simple baseline that uses each raw image as the representation and then computes the cosine similarity between the generated sample of interest and each training sample as the attribution score.

\textbf{CLIP similarity.} This method utilizes CLIP\footnote{\url{https://github.com/openai/CLIP}} \citep{radford2021learning} to encode each sample into an embedding and then compute the cosine similarity between the generated sample of interest and each training sample as the attribution score.

\textbf{GMValuator.} \citet{yanggmvaluator} propose a data valuation method for generative models. Given training data $X=\{x_i\}_{i=1}^n$ and generated samples $\hat{X}=\{\hat{x}_j\}_{j=1}^m$, each $x_i$ is valued by its similarity to $\hat{X}$. Using a distance $d_{ij}=d(x_i,\hat{x}_j)$, the (normalized) contribution to $\hat{x}_j$ is
\begin{equation}
V(x_i,\hat{x}_j)=\frac{\exp(-d_{ij})}{\sum_{i'=1}^n \exp(-d_{i'j})},
\end{equation}
and the value of $x_i$ is $\phi_i=\sum_{j=1}^{m} V(x_i,\hat{x}_j)$. For efficiency, GMValuator considers only the top-$k$ nearest neighbors $P_j$ of each $\hat{x}_j$ and weights by a generation-quality score $q_j\in[0,1]$:
\begin{equation}
V(x_i,\hat{x}_j)=
\begin{cases}
q_j \cdot \dfrac{\exp(-d_{ij})}{\sum\limits_{x\in P_j}\exp\!\big(-d(x,\hat{x}_j)\big)}, & x_i\in P_j,\\
0, & \text{otherwise.}
\end{cases}
\end{equation}
In our implementation, we use LPIPS for $d(\cdot,\cdot)$ and set $k=50$.

\subsection{Influence Functions and TRAKs}

\textbf{Leave-one-out.} Leave-one-out (LOO) evaluates the change in model property due to the removal of a single data contributor $i$ from the training set \citep{koh2017understanding}. Formally, it is defined as:
\begin{equation}
    \tau_{\text{LOO}}(\mathcal{F}, \mathcal{D})_i = \mathcal{F}(\theta^*) - \mathcal{F}(\theta^*_{\mathcal{D} \setminus \mathcal{C}_i }),
\end{equation}
where $\mathcal{F}(\theta^*)$ denotes the global property for the model trained with the full dataset, and $\mathcal{F}(\theta^*_{\mathcal{D} \setminus \mathcal{C}_i })$ represents the global property of the model trained with the data provided by the $i$th contributor.

\textbf{Gradient similarity.} This is a gradient-based influence estimator \citep{charpiat2019input}, which computes the cosine similarity using the gradient representations of the generated sample of interest, $\Tilde{\rvx}$ and each training sample $\rvx^{(j)}$ as the attribution score:
\begin{equation}
    \frac{\mathcal{P}^{\mathsf{T}}\nabla_{\theta}\mathcal{L}_{\text{Simple}}(\rvx^{(j)}; \theta^*) \cdot \mathcal{P}^{\mathsf{T}}\nabla_{\theta}\mathcal{L}_{\text{Simple}}(\Tilde{\rvx}; \theta^*)}{\Vert \mathcal{P}^{\mathsf{T}}\nabla_{\theta}\mathcal{L}_{\text{Simple}}(\rvx^{(j)}; \theta^*)^{\mathsf{T}}\Vert \Vert \mathcal{P}^{\mathsf{T}}\nabla_{\theta}\mathcal{L}_{\text{Simple}}(\Tilde{\rvx}; \theta^*) \Vert}
\end{equation}

\textbf{TRAK.} \citet{park2023trak} propose an approach that aims to enhance the efficiency and scalability of data attribution for classifiers. \citet{zheng2023intriguing} adapt TRAK for attributing images generated by diffusion models, defined as follows: 
\begin{equation}\label{eq:apx_trak}
    \frac{1}{S}\sum_{s=1}^S \Phi_s \left(\Phi_s^\top \Phi_s + \lambda I \right)^{-1} \mathcal{P}_s^\top \nabla_{\theta}\mathcal{L}_{\text{Simple}}(\Tilde{\rvx}; \theta_{s}^*) ,\text{ and}
\end{equation}
\begin{equation}
    \Phi_s = \left[\phi_s(\rvx^{(1)}) ,\ldots, \phi_s(\rvx^{(N)}) \right]^\top, \text{ where }  \phi_s(\rvx) = \mathcal{P}_s^\top \nabla_{\theta}\mathcal{L}_{\text{Simple}}(\rvx; \theta_{s}^*), 
\end{equation}
where $\mathcal{L}_{\text{Simple}}$ is the diffusion loss used during training, $\mathcal{P}_s$ is a random projection matrix, and $\lambda I$ serves for numerical stability and regularization. For TRAK-based approaches, we follow the implementation details in \citet{zheng2023intriguing}\footnote{\url{https://github.com/sail-sg/D-TRAK/}}.

\textbf{D-TRAK.} \citet{zheng2023intriguing} find that, compared to the theoretically motivated setting where the loss function is defined as $\mathcal{L} = \mathcal{L}_{\text{Simple}}$, using alternative functions, such as the squared loss $\mathcal{L}_{\text{Square}}$, to replace both the loss function and local model property result in improved performance:
\begin{equation}\label{eq:d_trak}
    \frac{1}{S}\sum_{s=1}^S \Phi_s \left(\Phi_s^\top \Phi_s + \lambda I \right)^{-1} \mathcal{P}_s^\top \nabla_{\theta}\mathcal{L}_{\text{Square}}(\Tilde{\rvx}; \theta_{s}^*) ,\text{ and}
\end{equation}
\begin{equation}
    \Phi_s = \left[\phi_s(\rvx^{(1)}) ,\ldots, \phi_s(\rvx^{(N)}) \right]^\top, \text{ where }  \phi_s(\rvx) = \mathcal{P}_s^\top \nabla_{\theta}\mathcal{L}_{\text{Square}}(\rvx; \theta_{s}^*). 
\end{equation}


\textbf{Journey-TRAK.} \cite{georgiev2023journey} focus on attributing noisy images $\rvx_t$ during the denoising process. It attributes training data not only for the final generated sample but also for the intermediate noisy samples throughout the denoising process. At each time step, the reconstruction loss is treated as a local model property. Following \citep{zheng2023intriguing}, we compute attribution scores by averaging over the generation time steps, as shown below:
\begin{equation}\label{eq:jorneu_trak}
    \frac{1}{T'}\frac{1}{S}\sum_{t=1}^{T'}\sum_{s=1}^S \Phi_s \left(\Phi_s^\top \Phi_s + \lambda I \right)^{-1} \mathcal{P}_s^\top \nabla_{\theta}\mathcal{L}^t_{\text{Simple}}(\Tilde{\rvx}_t; \theta_{s}^*) ,\text{ and}
\end{equation}
\begin{equation}
    \Phi_s = \left[\phi_s(\rvx^{(1)}) ,\ldots, \phi_s(\rvx^{(N)}) \right]^\top, \text{ where }  \phi_s(\rvx) = \mathcal{P}_s^\top \nabla_{\theta}\mathcal{L}_{\text{Simple}}(\rvx; \theta_{s}^*). 
\end{equation}
Here, $\mathcal{L}^t_{\text{Simple}}$ is the diffusion loss restricted to the time step $t$ (hence without taking the expectation over the time step).

In practice, we follow the main implementation by \citet{zheng2023intriguing} that considers TRAK, D-TRAK, and Journey-TRAK as retraining-free methods. That is, $S = 1$, and $\theta^*_1 = \theta^*$ is the original diffusion model trained on the entire dataset.


\textbf{DAS.} \citet{lin2024diffusion} propose the \emph{Diffusion Attribution Score} (DAS), which quantifies the influence of a training sample by comparing the model’s predicted distributions before vs.\ after removing that sample. For a target generated example $\rvx_{\mathrm{gen}}$, DAS defines the attribution of $\rvx^{(i)}$ as the KL divergence between the original model and the leave-one-out model. For diffusion models, this is approximated by the expected squared difference between noise-predictor outputs:
\begin{equation}
\tau_{\mathrm{DAS}}\!\left(\rvx_{\mathrm{gen}}, \mathcal{D}\right)^{(i)}
\;\approx\;
D_{\mathrm{KL}}\!\Big(p_{\theta}(\rvx_{\mathrm{gen}})\,\big\|\,p_{\theta\setminus i}(\rvx_{\mathrm{gen}})\Big)
\;\approx\;
\mathbb{E}_{\epsilon,t}\!\left[
\left\|
\epsilon_{\theta}\!\big(\rvx^{\mathrm{gen}}_{t},t\big)
- \epsilon_{\theta\setminus i}\!\big(\rvx^{\mathrm{gen}}_{t},t\big)
\right\|_{2}^{2}
\right].
\end{equation}
To avoid retraining, they linearize the noise predictor around $\theta^\ast$ and approximate the parameter change $\theta^\ast-\theta^{\setminus i}$ via a one-step Newton update. Using the Sherman--Morrison formula, they derive a closed-form attribution at timestep $t$:
\begin{equation}
\tau_{\mathrm{DAS},t}\!\left(\rvx_{\mathrm{gen}}, \mathcal{D}\right)^{(i)}
=
\mathbb{E}_{\epsilon}\!\left[
\left\|
\frac{
g_t(\rvx_{\mathrm{gen}})^{\top}
\big(G_t(\mathcal{D})^{\top}G_t(\mathcal{D})\big)^{-1}
g_t\!\big(\rvx^{(i)}\big)\;
r_t^{(i)}
}{
1 -
g_t\!\big(\rvx^{(i)}\big)^{\top}
\big(G_t(\mathcal{D})^{\top}G_t(\mathcal{D})\big)^{-1}
g_t\!\big(\rvx^{(i)}\big)
}
\right\|_{2}^{2}
\right],
\end{equation}
where \(g_t(\rvx)=\nabla_{\theta}\epsilon_{\theta^\ast}(\rvx_t,t)\), \(G_t(\mathcal{D})=\nabla_{\theta}\epsilon_{\theta^\ast}(\mathcal{D}_t,t)\), and \(r_t^{(i)}\) is the residual term from the noise prediction. Here, \(\mathcal{D}=\{\rvx^{(j)}\}_{j=1}^{N}\) denotes the training dataset, and all quantities are evaluated at the final weights \(\theta^\ast\). In practice, we follow the implementation from \citet{lin2024diffusion}\footnote{\url{https://github.com/Jinxu-Lin/DAS/}} which treats DAS as a retraining-free method using the final weights \(\theta^\ast\).

\textbf{Relative Influence.} Proposed by \citet{barshan2020relatif}, the $\theta$-relative influence functions estimator normalizes the influence functions estimator of \citet{koh2017understanding} by the HVP magnitude. Following \citet{zheng2023intriguing}, we combine this estimator with TRAK dimension reduction. The attribution for each training sample $\rvx^{(j)}$ is as follows:
\begin{equation} 
    \frac{\mathcal{P}^{\mathsf{T}}
    \nabla_{\theta}\mathcal{L}_{\text{Simple}}(\Tilde{\rvx}; \theta^*) \cdot \left( {\Phi_{\text{TRAK}}}^{\mathsf{T}} \cdot \Phi_{\text{TRAK}} + \lambda I\right)^{-1} \cdot \mathcal{P}^{\mathsf{T}} \nabla_{\theta}\mathcal{L}_{\text{Simple}}(\rvx^{(j)}; \theta^*)}{\left\Vert \left( {\Phi_{\text{TRAK}}}^{\mathsf{T}} \cdot \Phi_{\text{TRAK}} + \lambda I\right)^{-1} \cdot\mathcal{P}^{\mathsf{T}}\nabla_{\theta}\mathcal{L}_{\text{Simple}}(\rvx^{(j)}; \theta^*) \right\Vert},
\end{equation}
and
\begin{equation}
    \Phi_{\text{TRAK}} = \left[\phi(\rvx^{(1)}) ,\ldots, \phi(\rvx^{(N)}) \right]^\top, \text{ where }  \phi(\rvx) = \mathcal{P}^\top \nabla_{\theta}\mathcal{L}_{\text{Simple}}(\rvx; \theta^*). 
\end{equation}

\textbf{Renormalized Influence.} Introduced by \citet{hammoudeh2022identifying}, this method renormalizes the influence functions by the magnitude of the training sample’s gradients. Similar to relative influence, we made an adaptation following \citet{zheng2023intriguing}. The attribution for each training sample $\rvx^{(j)}$ is as follows:
\begin{equation}
    \frac{\mathcal{P}^{\mathsf{T}}
    \nabla_{\theta}\mathcal{L}_{\text{Simple}}(\Tilde{\rvx}; \theta^*) \cdot \left( {\Phi_{\text{TRAK}}}^{\mathsf{T}} \cdot \Phi_{\text{TRAK}} + \lambda I\right)^{-1} \cdot \mathcal{P}^{\mathsf{T}}\nabla_{\theta}\mathcal{L}_{\text{Simple}}(\rvx^{(j)}; \theta^*)}{\Vert \mathcal{P}^{\mathsf{T}}\nabla_{\theta}\mathcal{L}_{\text{Simple}}(\rvx^{(j)}; \theta^*) \Vert},
\end{equation}
and
\begin{equation}
    \Phi_{\text{TRAK}} = \left[\phi(\rvx^{(1)}) ,\ldots, \phi(\rvx^{(N)}) \right]^\top, \text{ where }  \phi(\rvx) = \mathcal{P}^\top \nabla_{\theta}\mathcal{L}_{\text{Simple}}(\rvx; \theta^*). 
\end{equation}
    
For TRAK-based methods and gradient similarity, we compute gradients using the final model checkpoint, meaning that $S=1$ in \Cref{eq:apx_trak}. We select 100 time steps evenly spaced within the interval $[1, T]$ for computing the diffusion loss. At each time step, we introduce one instance of noise. The projection dimensions are set to $k = 4096$ for CIFAR-20 and $k = 32768$ for ArtBench (Post-Impressionism) and Fashion-Product. For D-TRAK, we use the best-performing output function $\mathcal{L}_{\text{Square}}$  and choose $\lambda = 5e^{-1}$ as in \citet{zheng2023intriguing}. 

\paragraph{Generalized Group Data Attribution (GGDA)}
\citet{ley2024generalized} propose GGDA as a unified framework for attributing model behavior to groups of training examples. In its leave-one-out instantiation, GGDA measures a group’s attribution as the change in a target property between a model trained on the full dataset and a model retrained with that group removed, i.e., \(\mathcal{F}(\theta_{D}) - \mathcal{F}(\theta_{D \setminus \mathcal{C}_i})\), which is precisely a group leave-one-out effect. Accordingly, in our experiments, we report group leave-one-out estimates. \citet{ley2024generalized} also considers forming groups via unsupervised methods (e.g., via \(k\)-means) to improve efficiency; however, such groupings do not generally coincide with our pre-defined contributor groups.

\textbf{TracIn.} \citet{pruthi2020estimating} propose to attribute the training sample’s influence based on first-order approximation with saved checkpoints during the training process. However, this is also a major limitation because sometimes it is impossible to obtain checkpoints for models such as Stable Diffusion \citep{rombach2022high} and FLUX \citep{labs2025flux1kontextflowmatching}. 

\subsection{Retraining-based Methods}

\textbf{Sparsified fine-tuning (sFT).} \citet{luefficient} propose sparsified fine-tuning, which first prunes a trained target model and then fine-tunes the pruned model on sampled data subsets. Their approach relies on the approximation
\begin{equation}
    \mathcal{F}(p_{\tilde{\theta}^{\mathrm{ft}}_{S,k}}) \approx \mathcal{F}(p_{\theta^*_S}),
\end{equation}
where $\tilde{\theta}^{\mathrm{ft}}_{S,k}$ denotes the sparsified-and-fine-tuned model for subset $S$ with fine-tuning steps $k$. Following their protocol\footnote{\tiny\url{https://github.com/suinleelab/An-Efficient-Framework-for-Crediting-Data-Contributors-of-Diffusion-Models}}, we apply magnitude-based pruning \citep{han2015learning} to produce sparse diffusion models: 10.4M$\rightarrow$3.0M (CIFAR-20), 5.1M$\rightarrow$2.6M (ArtBench, Post-Impressionism), and 358M$\rightarrow$107.9M (Fashion-Product). We then fine-tune each sparsified model using the same training configuration (see \Cref{apx:lds_exp}) to obtain a performant pruned model.

For unlearning on subsets $S$, we adopt the best-performing setting reported by \citet{luefficient}. Concretely, for each subset, we fine-tune the pruned model for 1{,}000 steps, using 50 epochs for CIFAR-20, 50 epochs for ArtBench (Post-Impressionism), and 16 epochs for Fashion-Product. We estimate Shapley values with KernelSHAP \citep{lundberg2017unified} implementation from \citet{luefficient}.

\clearpage
\section{Proofs}

\subsection{Proof of Proposition \ref{prop:proxy-error-bound}}
\label{proof:proxy_bound}
\begin{proof}[Proof of Proposition~\ref{prop:proxy-error-bound}]
Fix a coalition $S \subseteq N$ and the realized retrained parameters $\theta^*_S$. Let $\varphi_u: \mathcal{X} \to \mathbb{R}^{m_u}$ be the utility-aligned feature extractor. Define the $\pi$-marginal pushforward feature distributions:
\begin{equation*}
\nu_S \;:=\; \mathbb{E}_{\mathbf{c} \sim \pi}\!\left[\varphi_{u\#} p_{\theta^*_S}(\cdot \mid \mathbf{c})\right],
\qquad
\hat{\nu}_S \;:=\; \mathbb{E}_{\mathbf{c} \sim \pi}\!\left[\varphi_{u\#} p_{\theta}(\cdot \mid \mathbf{c})\right].
\end{equation*}
By the definition of $\mathcal{F}$ in \eqref{eq:def_F}, $v(S) = \mathcal{V}(\nu_S)$ and $\hat{v}_\theta(S) = \mathcal{V}(\hat{\nu}_S)$. Since $\mathcal{V}$ is $L_u$-Lipschitz with respect to $W_p$ for $p \in \{1,2\}$, we have
\begin{equation}\label{eq:lipschitz_step}
|v(S) - \hat{v}_\theta(S)| \le L_u\, W_p(\nu_S, \hat{\nu}_S).
\end{equation}
To bound $W_p(\nu_S, \hat{\nu}_S)$, consider the shared-noise coupling $\gamma_{\mathbf{c}}$ from Assumption~\ref{asm:stability}. For $\mathbf{c}\sim \pi$ and $\mathbf{z}\sim \mathcal{N}(0,I)$, let
\[
Z := \varphi_u\!\big(\mathbf{x}_0^{\theta^*_S}(\mathbf{z},\mathbf{c})\big), 
\qquad 
\hat{Z} := \varphi_u\!\big(\mathbf{x}_0^{\theta}(\mathbf{z},\mathbf{c})\big).
\]
Under $(\mathbf c,\mathbf z)\sim \pi \times \mathcal N(0,I)$, the marginals of $Z$ and $\hat Z$ are $\nu_S$ and $\hat\nu_S$, respectively; hence $(Z,\hat Z)$ defines a coupling of $(\nu_S,\hat\nu_S)$. Using the definition of the Wasserstein distance as the infimum over all couplings \citep{peyre2019computational}, and specializing to $p=2$, we obtain
\begin{align*}
W_2(\nu_S, \hat{\nu}_S) 
&\le \left(\mathbb{E}_{\mathbf{c},\mathbf{z}}\!\left[\|Z - \hat{Z}\|_2^2\right]\right)^{1/2} \\
&= \left(\mathbb{E}_{\mathbf{c}\sim \pi}\!\left[
\mathbb{E}_{\mathbf{z}}\!\left[\left\|
\varphi_u(\mathbf{x}_0^{\theta^*_S}(\mathbf{z},\mathbf{c})) -
\varphi_u(\mathbf{x}_0^{\theta}(\mathbf{z},\mathbf{c}))
\right\|_2^2\right]\right]\right)^{1/2}.
\end{align*}
By Assumption~\ref{asm:stability}, the inner expectation is bounded by $(\varepsilon^{\varphi_u})^2$ for all $\mathbf{c}\in\mathcal{C}$, hence $W_2(\nu_S,\hat{\nu}_S)\le \varepsilon^{\varphi_u}$. Since $W_1(\nu_S,\hat{\nu}_S)\le W_2(\nu_S,\hat{\nu}_S)$, we obtain $W_p(\nu_S,\hat{\nu}_S)\le \varepsilon^{\varphi_u}$ for $p\in\{1,2\}$. Substituting into \eqref{eq:lipschitz_step} yields
\[
|v(S) - \hat{v}_\theta(S)| \le L_u\, \varepsilon^{\varphi_u}.
\]
which concludes the proof.
\end{proof}


\clearpage
\section{Experiment Setup \& Implementation Details}\label{apx:lds_exp}
\subsection{Datasets and Model Properties}
\textbf{CIFAR-20 ($\bf{32\times 32}$).} 
We derive CIFAR-20 from CIFAR-100\footnote{\url{https://www.cs.toronto.edu/~kriz/cifar.html}}, which contains 50{,}000 training images across 100 fine-grained classes grouped into 20 superclasses \citep{krizhevsky2009learning}.
Following \citet{luefficient} for computational feasibility, we restrict CIFAR-100 to four superclasses
(\texttt{large carnivores}, \texttt{flowers}, \texttt{household electrical devices}, \texttt{vehicles 1}).
The resulting dataset contains 10{,}000 training images from 20 fine-grained classes (5 per superclass).
Unless otherwise stated, we treat each fine-grained class as a \emph{player} (corresponding to a class-specific labeling unit) and define a player’s data as all images belonging to that class.

\emph{Model properties.}
To quantify the quality of generated images, we report Inception Score (IS) \citep{salimans2016improved} and Fr\'echet Inception Distance (FID) \citep{heusel2017gans}, which capture complementary aspects of sample quality. For both metrics, we generate 10{,}240 samples per evaluation and compute scores using standard PyTorch implementations:
Inception Score\footnote{\url{https://github.com/sbarratt/inception-score-pytorch}} and FID\footnote{\url{https://github.com/mseitzer/pytorch-fid}}.

\textbf{ArtBench Post-Impressionism ($\bf{256 \times 256}$).} ArtBench-10 is a dataset for benchmarking artwork generation with 10 art styles, consisting of 5,000 training images per style \citep{liao2022artbench}. For our experiments, we consider the 5,000 training images corresponding to the art style ``post-impressionism" from 258 artists. 

\emph{Model property.}
We consider a downstream use case in which many images are generated, and only the most aesthetically pleasing outputs are retained.
Following \citep{luefficient}, we evaluate each model using an aesthetic predictor\footnote{\url{https://github.com/LAION-AI/aesthetic-predictor}} by scoring 150 generated images and reporting the mean predicted score as the model property.
The predictor is a linear head trained on top of CLIP embeddings and outputs a scalar proxy for perceived aesthetic quality.

\paragraph{Fashion-Product-100 ($256\times256$).}
We use the Fashion Product Images dataset \citep{param_aggarwal_2019}, which provides product images with textual descriptions, categories, and brand names. We remove images containing human faces to avoid identity-related artifacts, then select the 100 brands with the largest remaining product counts. To enable efficient LDS evaluation, we uniformly subsample within each selected brand by retaining $25\%$ of its products, yielding a final dataset of 4{,}468 images across 100 brands.
In this dataset, each \emph{brand} is treated as a \emph{player}, and a player’s data consists of all retained products from that brand.

\emph{Model properties.}
Following \citet{moosaei2022outfitgan}, we evaluate generated fashion-product images using two complementary metrics. First, we report LPIPS \citep{zhang2018unreasonable}, using training images as the reference, as an image-quality (perceptual-fidelity) metric, where lower values indicate higher perceptual similarity. Second, we measure output diversity using an MS-SSIM-based score: we randomly sample 300 pairs of generated images and report $\text{Diversity} := 1 - \text{MS-SSIM}$, averaged over the sampled pairs (higher indicates greater diversity).
Together, these metrics capture brand-level effects on both perceptual quality and sample diversity.

\subsection{Additional Details on Diffusion Model Training and Inference}\label{apx:training_setup}

\paragraph{CIFAR-20.} For CIFAR-20, we implement a DDPM-CFG \citep{ho2022classifier} with a U-Net architecture containing 10.4M parameters. The maximum diffusion timestep is set to $T = 1000$ during training, with a linear variance schedule for the forward diffusion process ranging from $\beta_1 = 10^{-4}$ to $\beta_T = 0.02$. We use the AdamW optimizer \citep{loshchilov2017decoupled} with a learning rate of $3 \times 10^{-4}$ and weight decay of $10^{-4}$. The model is trained for 30,000 steps with a batch size of 256, using a cosine learning rate schedule with 3,000 warm-up steps. Random horizontal flipping is applied for data augmentation. During inference, images are generated using the 50-step DDIM solver \citep{song2020denoising} with class label as conditioning.

\paragraph{ArtBench (Post-Impressionism).} A Stable Diffusion model\footnote{\url{https://huggingface.co/lambdalabs/miniSD-diffusers}} \citep{rombach2022high} is fully fine-tuned.  
The prompt is set to \texttt{``a Post-Impressionist painting by \{artist\}''} for each image. The model is trained using the AdamW optimizer \citep{loshchilov2017decoupled} with a weight decay of $10^{-6}$, for 200 epochs and a batch size of 64. Cosine learning rate annealing is used, with 500 warm-up steps and an initial learning rate of $3 \times 10^{-4}$. At inference time, images are generated using the PNDM scheduler with 100 steps \citep{karras2022elucidating}.

\paragraph{Fashion-Product.}
We fine-tune \textsc{FLUX.1-dev}\footnote{\url{https://huggingface.co/black-forest-labs/FLUX.1-dev}}, which is a 12 billion parameter rectified flow transformer \cite{labs2025flux1kontextflowmatching} capable of generating images from text descriptions, using LoRA with rank $r=128$, corresponding to 358M trainable LoRA parameters. Prompts are constructed from the product display name using the template \texttt{``\{brand\} \{category\}''} (e.g., \emph{Highlander Men Black Checked Shirt}). LoRA parameters are optimized with 8-bit AdamW \citep{loshchilov2017decoupled} with weight decay $10^{-4}$ for 60 epochs and an effective batch size of 32 (batch size 16 per device with 2 gradient-accumulation steps). We adopt a constant learning-rate schedule with 500 warm-up steps and an initial learning rate of $3 \times 10^{-4}$. At inference time, we generate 256$\times$256 images using the Euler Discrete Scheduler \citep{esser2024scaling} with 50 denoising steps.

\paragraph{Training details}
For training diffusion models, we utilize NVIDIA GPUs: A40  for CIFAR-20 and ArtBench and H200 for Fashion-product, equipped with 48GB and 192GB of memory, respectively. This study employs the Diffusers package version 0.35.0 \footnote{\url{https://pypi.org/project/diffusers/}} to train models across. Per-sample gradients are computed using the techniques outlined in the PyTorch package tutorial (version 2.8.0) \footnote{\url{https://pytorch.org/}}. Furthermore, we apply the TRAK package\footnote{\url{https://trak.csail.mit.edu/quickstart}} to project gradients with a random projection matrix. All experiments are conducted on systems equipped with 64 CPU cores and the specified NVIDIA GPUs.

To fine-tune \textsc{FLUX.1-dev} efficiently, we apply two memory-saving strategies. First, because the VQ-VAE remains frozen, we pre-compute and cache its latent embeddings for all training samples. Likewise, we pre-compute and store text embeddings from both T5 and the CLIP text encoder. This eliminates the need to load the VQ-VAE and text-encoder modules into GPU memory during training. Second, we reduce optimizer and activation memory by using the 8-bit Adam optimizer from the \texttt{bitsandbytes}\footnote{\url{https://github.com/bitsandbytes-foundation/bitsandbytes}} package  \citep{dettmers2022bit} together with fp16 mixed-precision fine-tuning.

\subsection{Training details for Surrogate XGBoost Model}\label{apx:xgb_training}
This section details the implementation and optimization of the surrogate model introduced in \Cref{sec:surrogate_shapley}. We employ a gradient-boosted decision tree (GBDT) regressor using the \texttt{XGBoost} framework.\footnote{\url{https://xgboost.readthedocs.io/en/latest/index.html}}  To train the model, each sampled coalition is represented as a binary mask $z_m \in \{0, 1\}^n$, paired with its respective target value. The surrogate $\hat{f}_{\omega}$ is optimized by minimizing the Mean Squared Error (MSE) using the squared-loss objective. To ensure robust generalization and prevent overfitting, we perform a grid search over the hyperparameter space defined in \Cref{tab:xgb_hparam_space}, selecting the optimal configuration via 5-fold cross-validation.
\begin{table}[h!]
    \centering
    \caption{XGBoost hyperparameter search space for training the surrogate model \( f_{\omega}\).}
    \label{tab:xgb_hparam_space}
    \begin{tabular}{l|l}
        \toprule
        \textbf{Hyperparameter} & \textbf{Search Space / Values} \\
        \midrule
        Max. Tree Depth & $\{3,\;5,\;10\}$ \\
        Number of Trees & $\{100,\;300,\;900\}$ \\
        Learning Rate  & $\{0.01,\;0.05\}$ \\
        L1 Regularization $\lambda$  & \(\{0, 10 \} \) \\
        \bottomrule
    \end{tabular}
\end{table}



\clearpage
\section{Synthetic Validation and Axiomatic Checks}\label{apx:sanity_checks_proxyshap}
In this section, we examine whether our proposed pipeline for Shapley computation in \Cref{sec:surrogate_shapley} satisfies the fundamental axioms of Shapley values and evaluate its sample efficiency across three distinct utility functions in synthetic environments.

\subsection{Experimental Environment Setup}
We first define three synthetic coalition value functions $v:\{0,1\}^N \to \mathbb{R}$ spanning increasing levels of nonlinearity and interaction structure. We use $N \in \{10,11\}$ players so that the ground-truth Shapley values $\bm{\phi}(v)$ can be computed exactly by enumerating all coalitions. Unless otherwise stated, we sample feature weights $\mathbf{w}\sim \mathcal{N}(0,1)$ and fix the base utility to $b=0.25$. For any coalition indicator $\mathbf{z}\in\{0,1\}^N$, $z_i=1$
denotes inclusion of player $i$.

\paragraph{Linear (additive).} This setting is purely additive and serves as a baseline where contributions are independent.
\begin{equation}
v(\mathbf{z}) = b + \mathbf{w}^\top \mathbf{z}.
\end{equation}
\paragraph{Nonlinear (additive with saturation).}
To introduce diminishing returns without explicit feature interactions, we apply a global
nonlinearity to the additive score:
\begin{equation}
v(\mathbf{z}) = 3\,\tanh\!\left(\frac{b + \mathbf{w}^\top \mathbf{z}}{3}\right).
\end{equation}
This tests whether the estimator remains accurate under output saturation near the extremes
of the coalition hypercube.

\paragraph{Interaction-heavy.}
To emphasize interaction effects, we add pairwise and third-order terms before applying the same saturating nonlinearity:
\begin{equation}
v(\mathbf{z}) = 3\,\tanh\!\left(\frac{b + \mathbf{w}^\top \mathbf{z}
+ \sum_{(i,j,\alpha)\in\mathcal{P}} \alpha\, z_i z_j
+ \alpha_{\text{tri}}\, z_0 z_2 z_4}{3}\right),
\end{equation}
where $\mathcal{P}=\{(0,1,1.5), (2,3,-1.0), (4,5,0.8)\}$ and $\alpha_{\text{tri}}=2.0$. This regime requires resolving synergistic effects that cannot be captured by linear surrogates alone.

\subsection{Comparative Analysis of Sample Efficiency}
\label{sec:exp_synthetic}

\paragraph{Evaluation setup \& baselines} To evaluate the convergence properties of our proposed estimator, we compare SurrogateSHAP against baselines on synthetic coalition utilities where the proxy gap is set to $\mathcal{E}_{\text{gap}}=0$ (\Cref{eq:err_decomp_threeway}). This controlled environment allows us to isolate the surrogate error $\mathcal{E}_{\text{sur}}$ and measure the accuracy of the estimated Shapley values using the relative $L_2$ error, defined as $\|\hat{\bm{\phi}} - \bm{\phi}(v)\|_2 / \|\bm{\phi}(v)\|_2$.. 

For a fixed budget of $M$ function evaluations, we enforce the inclusion of the boundary cases, the empty coalition $\mathbf{0}$ and the grand coalition $\mathbf{1}$, to ensure foundational consistency with the efficiency and dummy axioms. The remaining $M-2$ coalitions are drawn from the interior of the hypercube, $\{0,1\}^n \setminus \{\mathbf{0}, \mathbf{1}\}$, according to the specific sampling distribution of each method (e.g., the Shapley kernel or a uniform distribution) sampled with replacement. We report the mean relative $L_2$ error aggregated over 60 independent trials. 

Our baselines include KernelSHAP, a weighted linear regression estimator \citep{lundberg2017unified}, and ProxySPEX with varying interaction orders  ($k \in \{1,2,3\}$) \citep{butler2025proxyspex}. Both SurrogateSHAP and ProxySPEX use the same ensemble XGBoost. 

\paragraph{Results} 
The results across three synthetic environments illustrate the trade-off between estimator inductive bias and model capacity. In the purely linear regime, KernelSHAP leverages its inherent linear bias to achieve $L_2$ error $\approx 10^{-14}$, significantly outperforming surrogate-based methods (\Cref{fig:efficiency_all}-a). However, as utility complexity increases, KernelSHAP's performance stagnates; in contrast, in non-additive settings, tree-based surrogates—including SurrogateSHAP and ProxySPEX ($k=3$)—converge more slowly but eventually surpass KernelSHAP, reaching an error of $\approx 10^{-2}$ at $M=512$, (\Cref{fig:efficiency_all}-b). 

This advantage is most pronounced in the interaction-heavy regime (\Cref{fig:efficiency_all}-c), where KernelSHAP’s linear assumption causes the error to plateau near $10^{-1}$. While ProxySPEX \citep{butler2025proxyspex} improves upon the linear model, it remains sensitive to the specified interaction order $k$: low-order configurations ($k=1,2$) underfit, while higher orders eventually saturate. Conversely, by avoiding explicit sparsity constraints, SurrogateSHAP exceeds ProxySPEX performance at $M=500$ and achieves errors several orders of magnitude lower than the linear estimator.  Finally, we investigate the bias-variance decomposition of these estimators; our results show that in interaction-heavy settings, SurrogateSHAP achieves both lower bias and lower variance than competing methods (\Cref{fig:bias_variance}).

\begin{figure}[h!]
     \centering
     \begin{subfigure}[b]{0.32\textwidth}
         \centering
         \includegraphics[width=\textwidth]{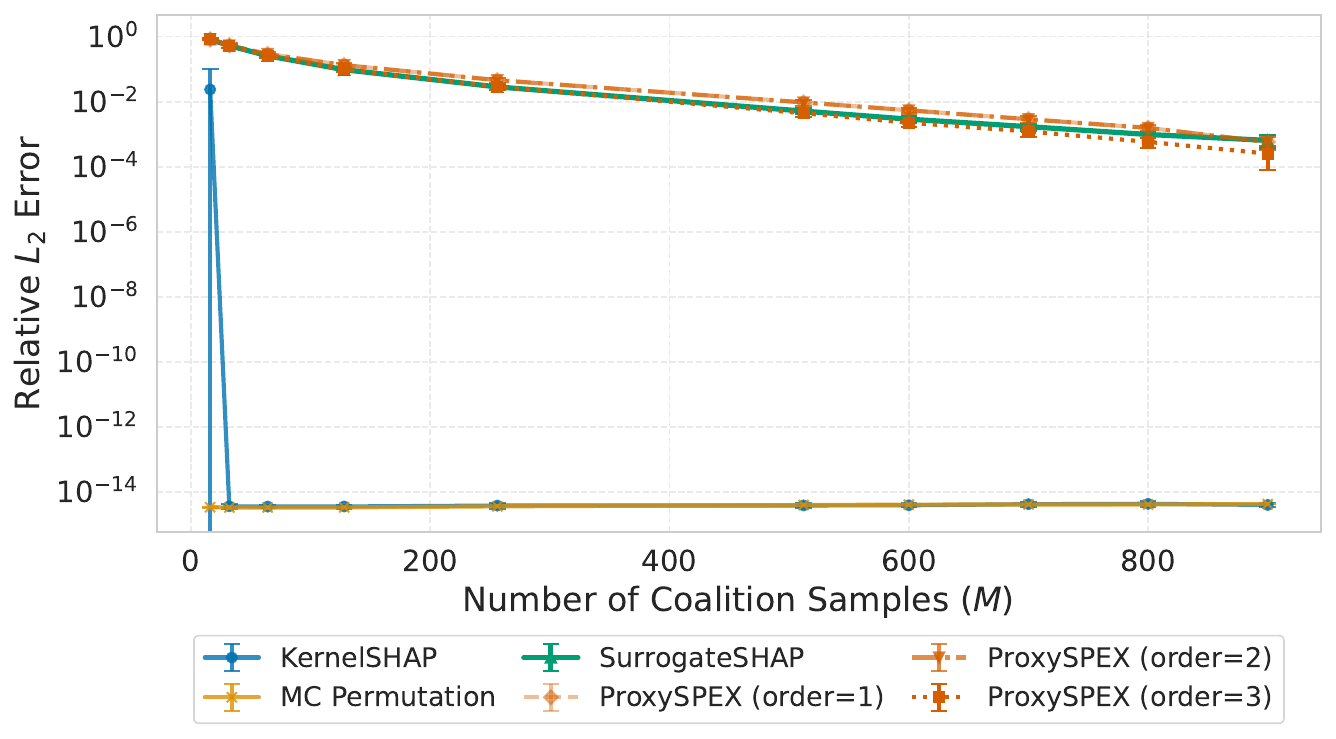}
         \caption{Linear (additive)}
         \label{fig:eff_linear}
     \end{subfigure}
     \hfill
     \begin{subfigure}[b]{0.32\textwidth}
         \centering
         \includegraphics[width=\textwidth]{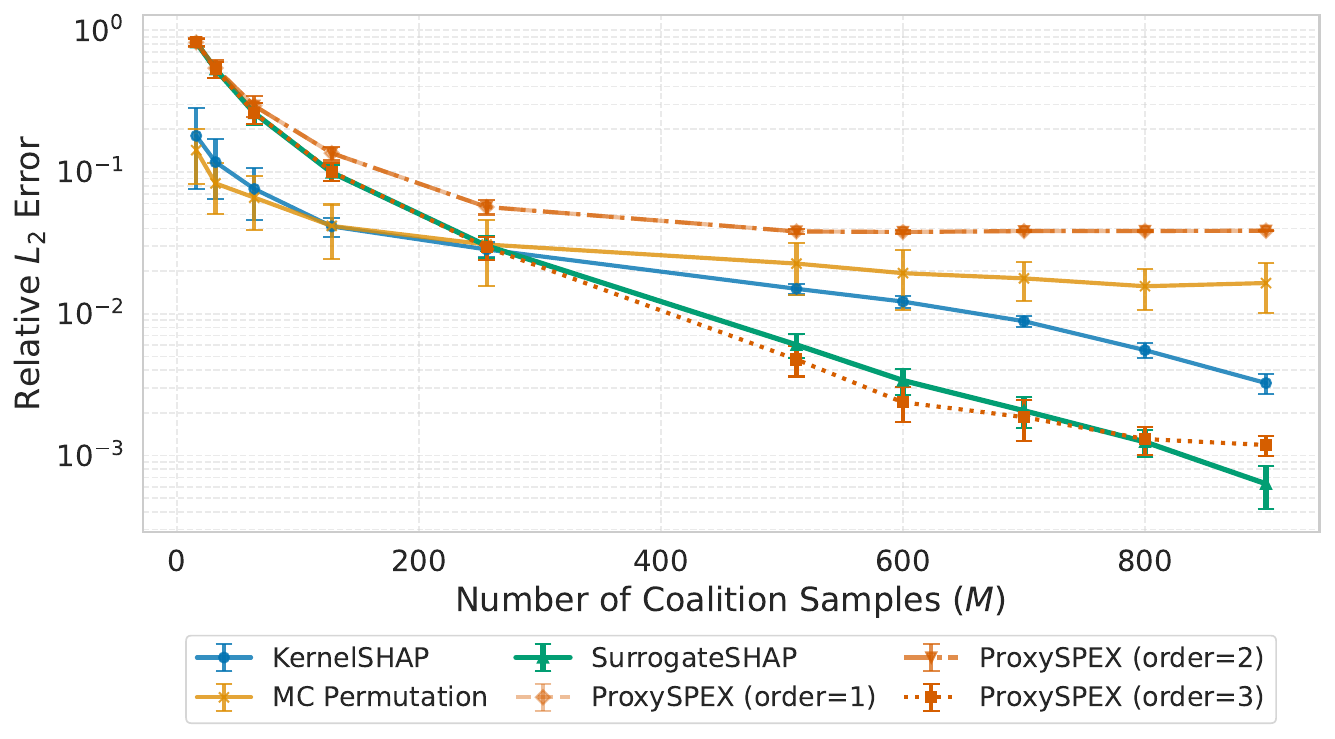}
         \caption{With nonlinearity (tanh-sum)}
         \label{fig:eff_nonlinear}
     \end{subfigure}
     \hfill
     \begin{subfigure}[b]{0.32\textwidth}
         \centering
         \includegraphics[width=\textwidth]{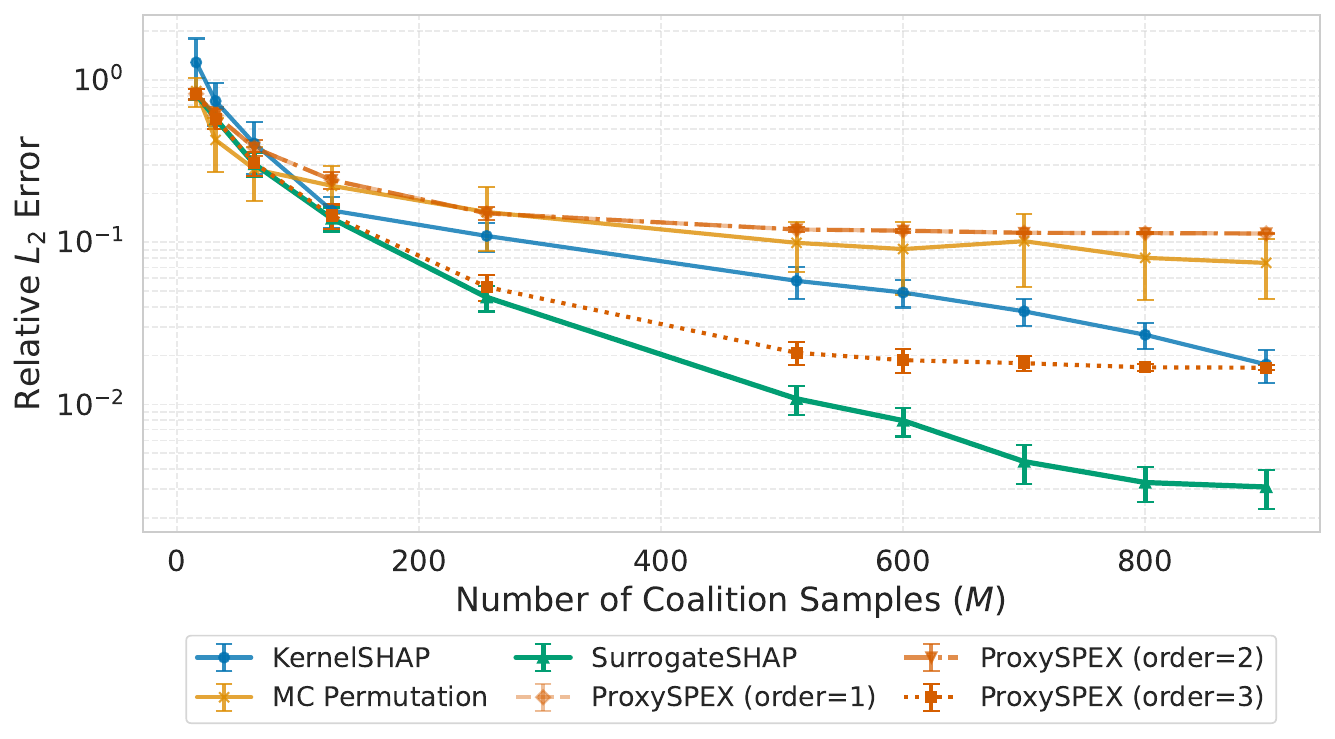}
         \caption{With interaction}
         \label{fig:eff_interaction}
     \end{subfigure}
\caption{Sample-efficiency across utility landscapes. The x-axis shows the number of sampled coalitions \(M\), and the y-axis shows the \(\ell_2\) relative to the oracle. (a) Simple linear additive utilities; (b) nonlinear utilities; (c) interaction-dominated utilities.}
    \label{fig:efficiency_all}
\end{figure}

\begin{figure}[h!]
    \centering
    \includegraphics[width=1.0\linewidth]{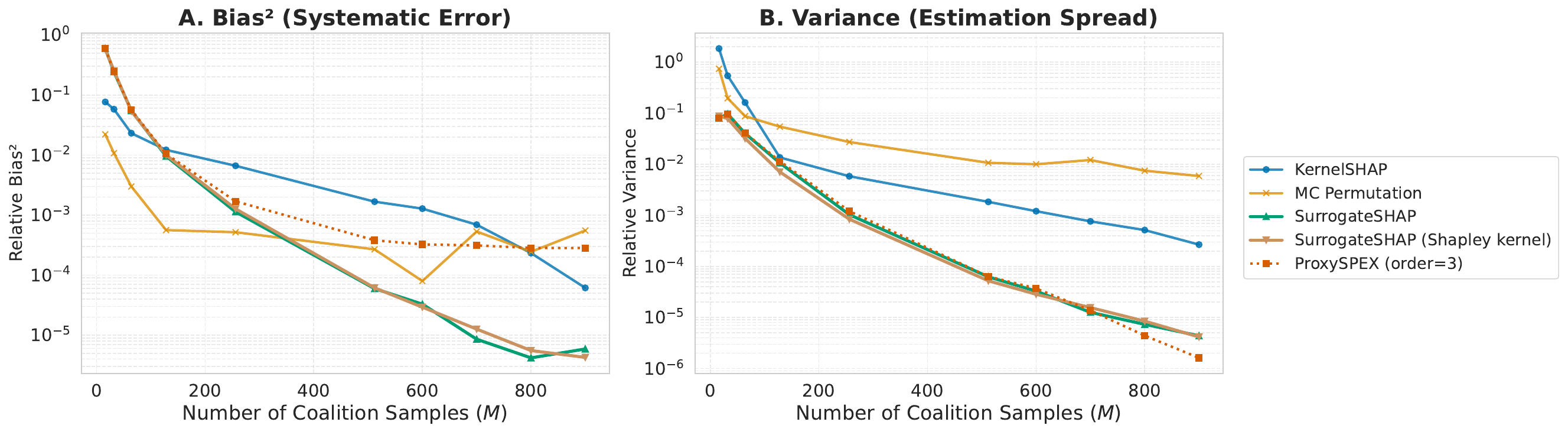}
    \caption{Error decomposition (bias (left) vs. variance (right)) for the interaction setting. }
    \label{fig:bias_variance}
\end{figure}

\subsection{Axiomatic Sanity Checks}\label{sec:axiom_check}
\paragraph{Evaluation setup} In addition to evaluating convergence, we validate that SurrogateSHAP adheres to the core game-theoretic axioms across all three synthetic environments, specifically focusing on the null player and efficiency properties. To assess the null player axiom, we injected a player with a zero marginal contribution into the utility function. 

\paragraph{Results} Our results demonstrate that the attribution magnitude $|\phi_{\text{null}}|$ consistently converges toward zero in all regimes as the number of sampled coalitions $M$ increases, reaching its minimum at the exhaustive limit of $M=2^{10}$ as illustrated in panels B of \Cref{fig:axiomatic_validation}. This convergence indicates that the surrogate model correctly identifies and discounts features that do not contribute to the overall model behavior. 

The efficiency axiom is similarly satisfied, as the absolute efficiency gap $|\sum \hat{\phi} - \Delta v|$ remains negligible across all utility complexities. Throughout our experiments, this gap consistently stays below a threshold of $0.006$, as shown in panels C of \Cref{fig:axiomatic_validation}. Because the sum of the attributions closely matches the total change in utility, we confirm that the exact TreeSHAP decomposition applied to our learned surrogate maintains high theoretical fidelity. This stability ensures that the resulting Shapley values are a mathematically sound representation of the global utility gain, regardless of the underlying interaction landscape.

\begin{figure}[H]
    \centering
    \begin{subfigure}[b]{1.0\textwidth}
        \centering
        \includegraphics[width=\textwidth]{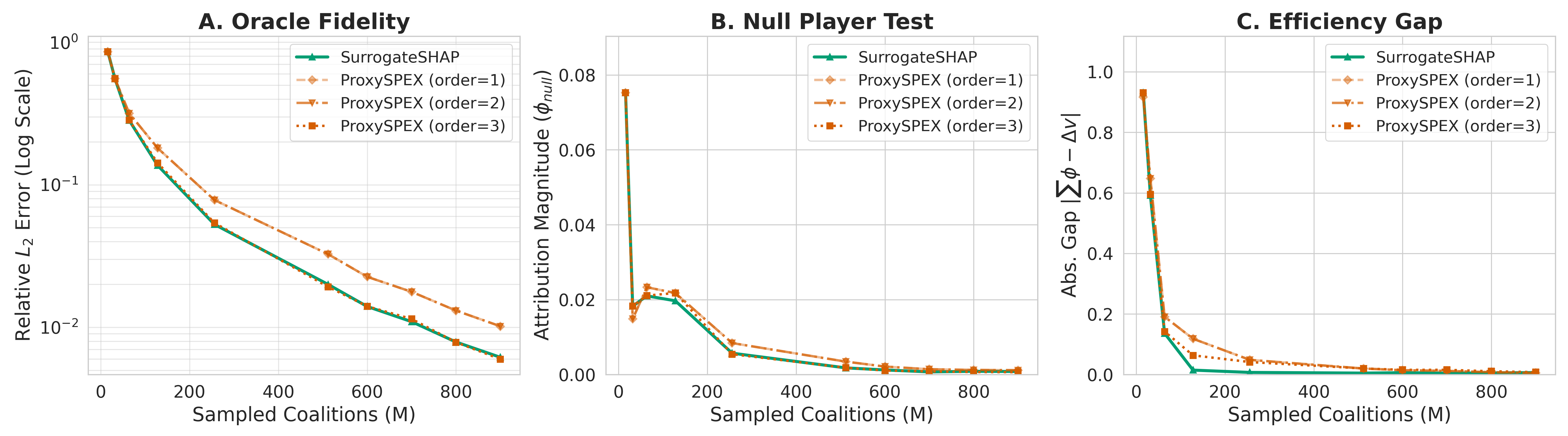}
        \caption{Linear Utility}
    \end{subfigure}
    \vspace{0.6em}
    \begin{subfigure}[b]{1.0\textwidth}
        \centering
        \includegraphics[width=\textwidth]{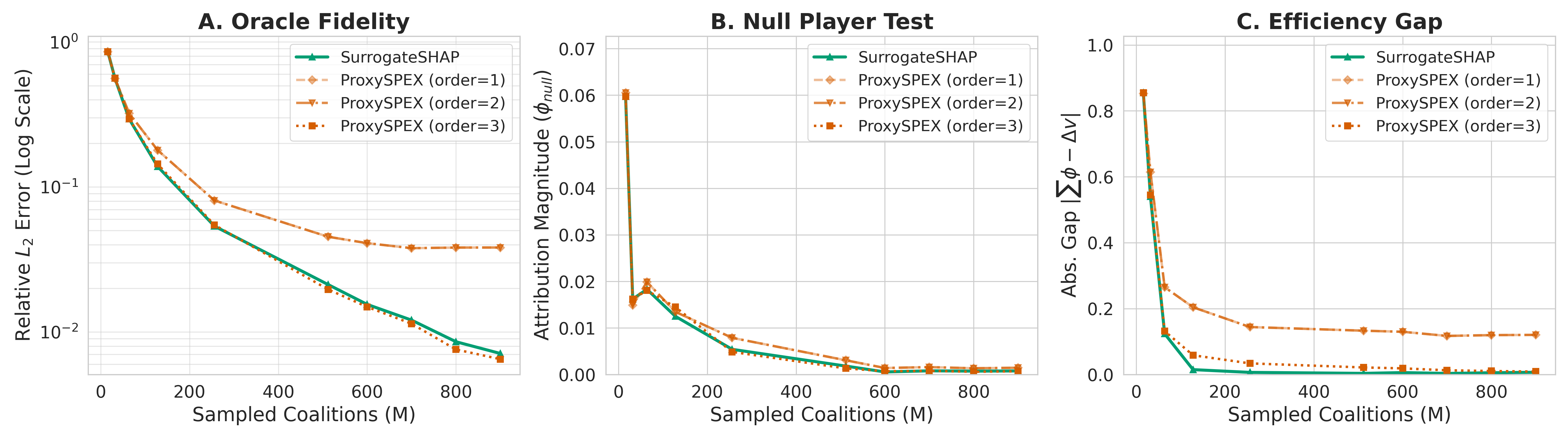}
        \caption{Nonlinear Utility}
    \end{subfigure}
    \begin{subfigure}[b]{1.0\textwidth}
        \centering
        \includegraphics[width=\textwidth]{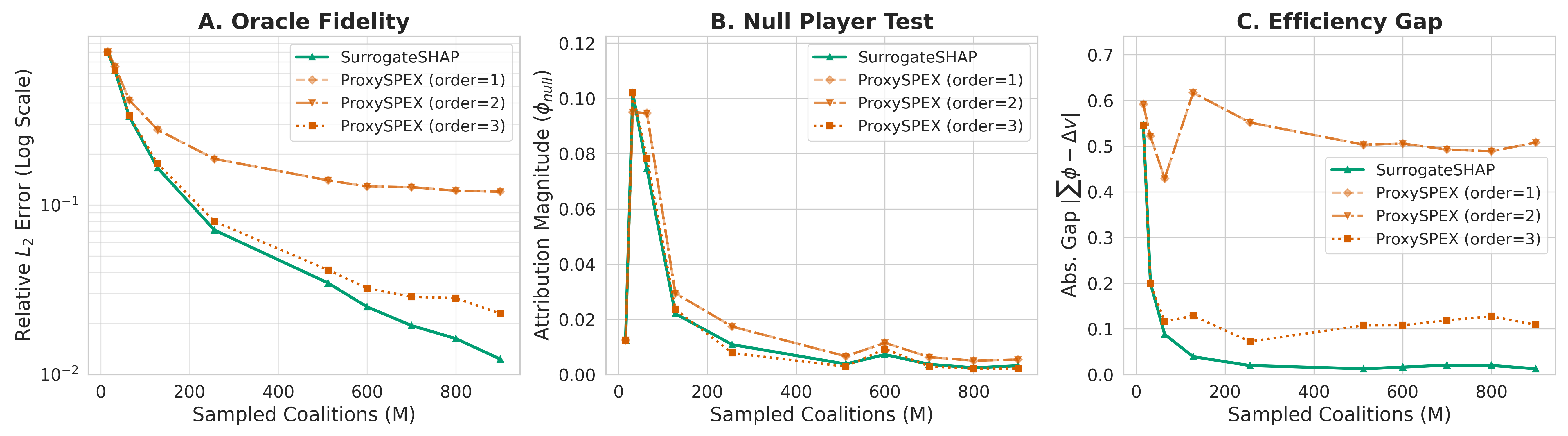}
        \caption{Interaction Utility}
    \end{subfigure}
    \caption{Axiomatic Validation across three environments. In all cases, the relative $L_2$ error (A) and Null Player attribution (B) diminish with increased sampling, while the Efficiency Gap (C) remains consistently near zero.}
    \label{fig:axiomatic_validation}
\end{figure}

\clearpage
\section{Additional Results} 

\subsection{Approximation Gap Analysis }
In this section, we provide an extensive validation of the relationship between global generative quality and local similarity metrics. We compare our proposed proxy game (\Cref{sec:training_free_coalition}) against full retraining instances across subsets sampled from the Shapley distribution to quantify the fidelity of our approximation. Specifically, we sample $N=100$ subsets and generate 100 samples for both our proxy games and the retraining instances to compute local distances and global metrics.

\subsubsection{CIFAR-20}

\paragraph{Fréchet Inception Distance (FID)} 
As shown in \Cref{fig:cifar-20-global_alignment}-left, Proxy FID exhibits high alignment with retrained FID ($R^2=0.935$, Spearman $\rho=0.984$, $N=100$), confirming that the proxy effectively preserves relative performance rankings. Correlation analysis reveals that $\Delta_{\text{FID}}$ is strongly associated with local reconstruction errors, particularly pixel-wise similarity (MAE: $r=0.745$, SSIM: $r=-0.782$) and representation-level drift (RMSE-CLIP: $r=0.689$, RMSE-LPIPS: $r=0.667$). These results empirically validate the stability assumption $(\varepsilon, \varphi_u)$, specifically, that the utility error is bounded by the representation-space drift as predicted in \Cref{prop:proxy-error-bound}.

\begin{figure}[h!]
    \centering
    \resizebox{0.6\textwidth}{!}{
    \begin{tabular}{cc}
        \includegraphics[width=0.45\linewidth]{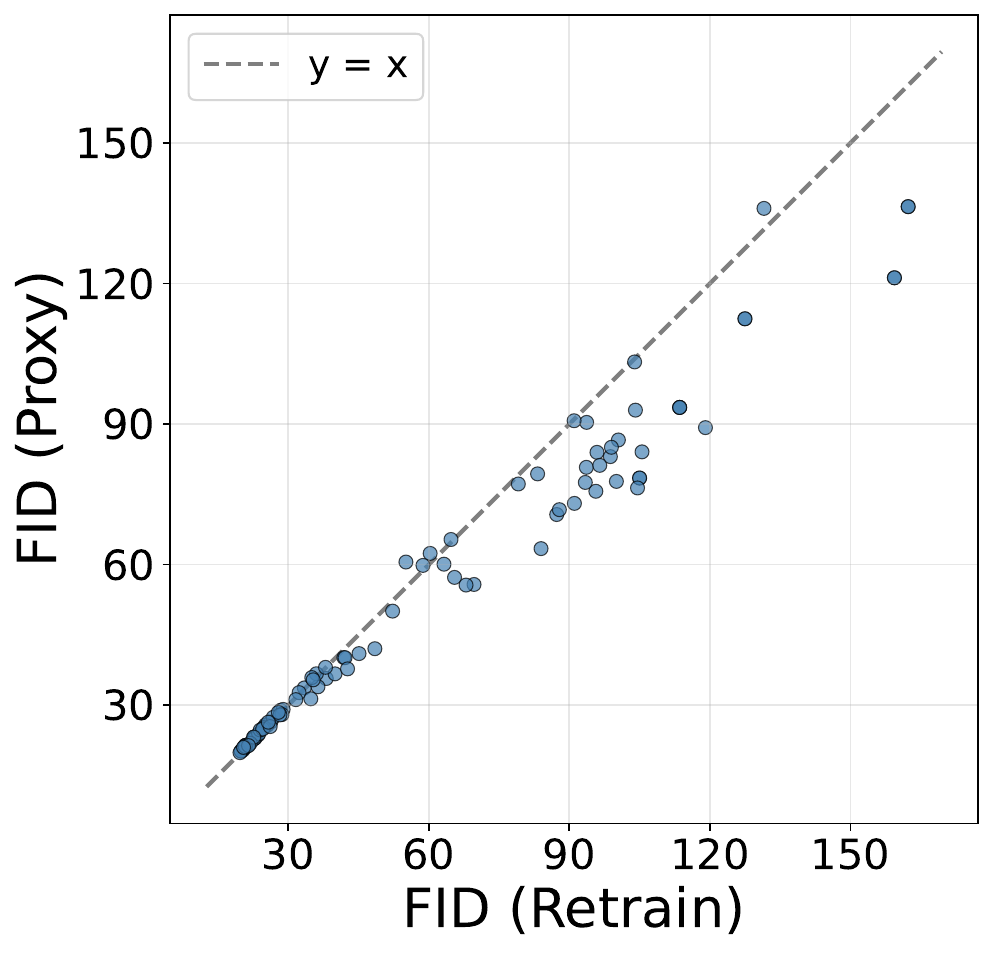} &
        \includegraphics[width=0.45\linewidth]{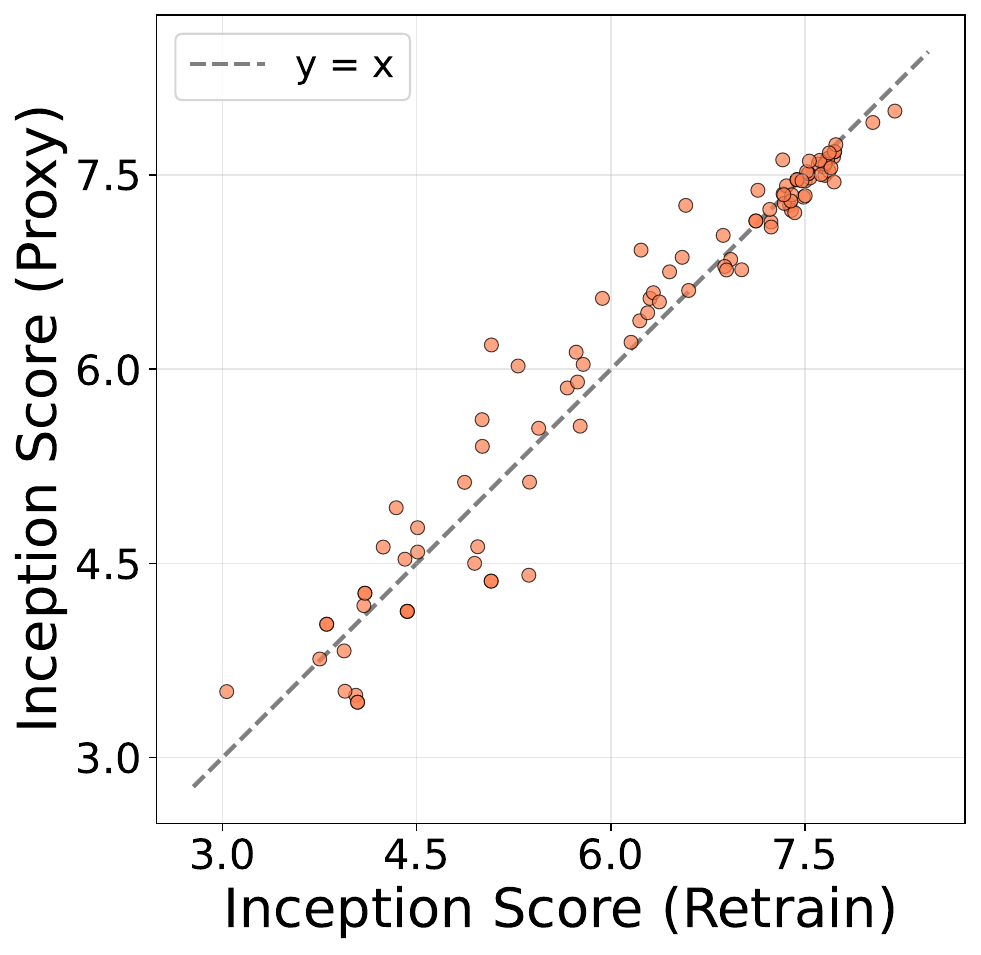}
    \end{tabular}
    }
    \caption{Fidelity of the proxy utility: Alignment between the proxy game ($y$-axis) and ground-truth retraining ($x$-axis).} 
    \label{fig:cifar-20-global_alignment}
\end{figure}

\begin{figure}[h!]
    \centering
    \includegraphics[width=1.0\linewidth]{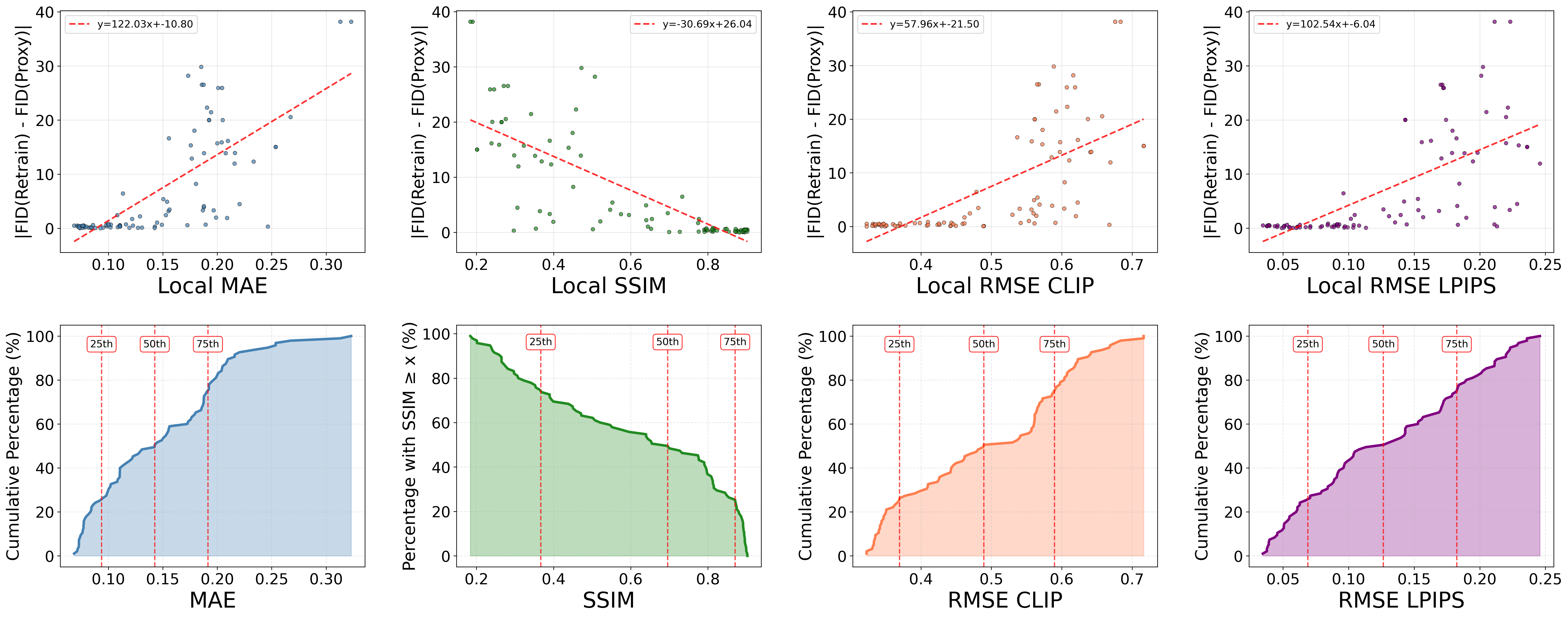}
    \caption{\textbf{Empirical Validation of Proxy FID.} 
    \textbf{(Top)} The approximation gap $\delta_{\text{FID}}$ scales linearly with local reconstruction metrics (MAE, SSIM) and representation-space drift (RMSE-CLIP, RMSE-LPIPS). \textbf{(Bottom)} Cumulative Distribution Functions (CDFs) of local fidelity metrics across $N=100$ subsets. Dashed red lines denote the $25^{\text{th}}$, $50^{\text{th}}$, and $75^{\text{th}}$ percentiles. }
    \label{fig:fid_validation}
\end{figure}

\paragraph{Empirical Validation of Proxy IS.} We further validate the Inception Score (IS) of proxy coalitions across the same subset distribution. As shown in \Cref{fig:cifar-20-global_alignment}-right, the Proxy IS demonstrates a high degree of fidelity to the retraining ground truth ($R^2=0.920$, Spearman $\rho=0.942$, $N=100$). Consistent with the FID findings, the IS approximation gap, $\Delta_{\text{IS}} = |\text{IS}_\text{retrain} - \text{IS}_\text{proxy}|$, is strongly conditioned on local reconstruction quality. We observe a significant relationship between $\delta_{\text{IS}}$ and our fidelity metrics: MAE ($r=0.791$), SSIM ($r=-0.745$), RMSE-CLIP ($r=0.580$), and RMSE-LPIPS ($r=0.575$). These results reinforce the conclusion that the proxy effectively preserves the utility landscape, particularly when representation drift $\varepsilon^{\varphi_u}$ is small.
\begin{figure}[h!]
    \centering
    \includegraphics[width=1.0\linewidth]{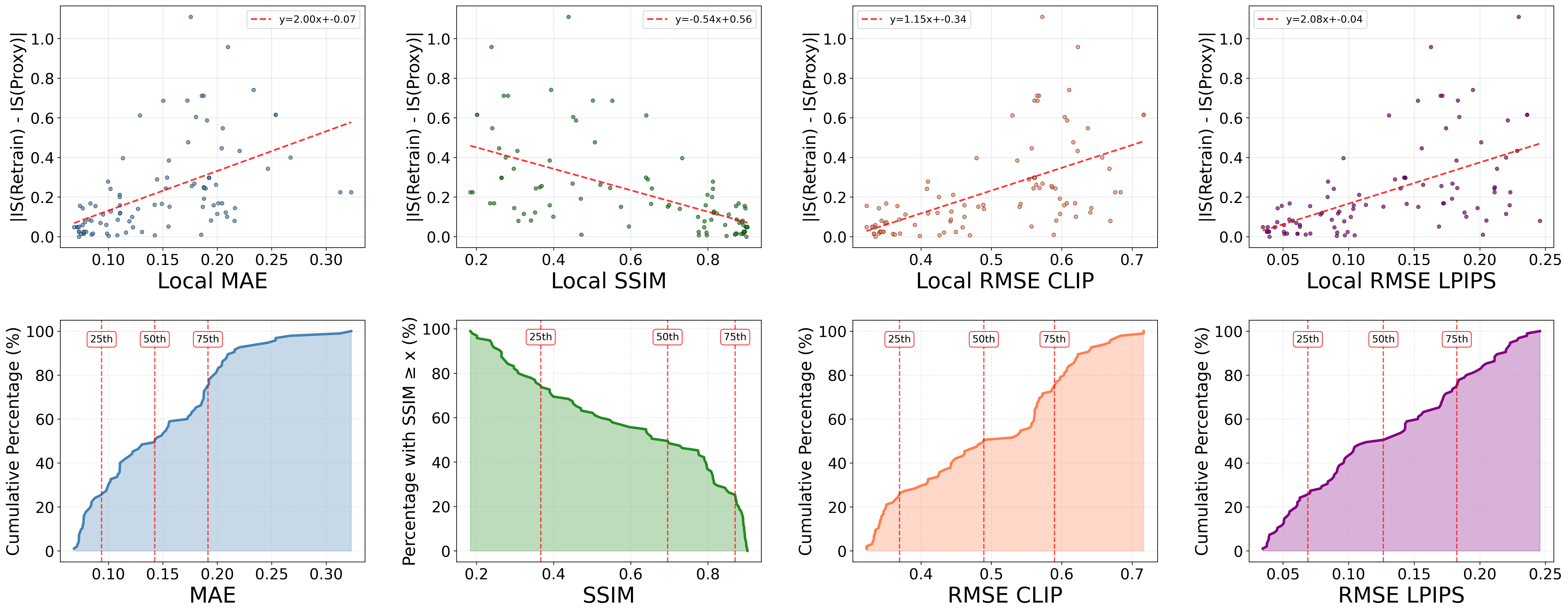}
    \caption{\textbf{Empirical Validation of Proxy Inception Score.} 
    \textbf{(Top)} The approximation gap $\delta_{\text{IS}}$ scales linearly with local reconstruction metrics (MAE, SSIM) and representation-space drift (RMSE-CLIP, RMSE-LPIPS). 
    \textbf{(Bottom)} Cumulative Distribution Functions (CDFs) of local fidelity metrics across $N=100$ subsets. Dashed red lines denote the $25^{\text{th}}$, $50^{\text{th}}$, and $75^{\text{th}}$ percentiles. }
    \label{fig:is_validation}
\end{figure}

\paragraph{Distributional Thresholds}
To provide practical reference points for proxy reliability, \Cref{tab:metric_distributions} summarizes the empirical distributions of local metrics. Our results suggest that the proxy remains a robust estimator of generative performance provided local structural similarity is maintained (median $\text{SSIM} = 0.730$). For instance, MAE exhibits a median of $0.142$ (90th percentile: $0.214$), while SSIM reaches $0.894$ at the 90th percentile, establishing empirical bounds where the approximation gap is minimized.

\begin{table}[h!]
\centering
\caption{Empirical distributions of local reconstruction metrics across 100 samples of 100 evaluation subsets from the Shapley distribution for CIFAR-20}
\label{tab:metric_distributions}
\begin{tabular}{lcccccc}
\toprule
\textbf{Metric} & \textbf{Mean} & \textbf{Median} & \textbf{25th \%} & \textbf{75th \%} & \textbf{90th \%} & \textbf{95th \%} \\ \midrule
 MAE  & 0.105  & 0.132 & 0.085 & 0.188 & 0.214 & 0.235 \\
 SSIM & 0.779 & 0.785 & 0.390 & 0.874 & 0.894 & 0.900 \\ 
 RMSE (CLIP dist.) & 0.452 & 0.442 & 0.369 & 0.589 & 0.631 & 0.667\\ 
 RMSE (LPIPS) & 0.108 & 0.109 & 0.068 & 0.182 & 0.217 & 0.224 \\ 
 \bottomrule
\end{tabular}
\end{table}

\begin{figure}[t]
    \centering
    \includegraphics[width=0.8\linewidth]{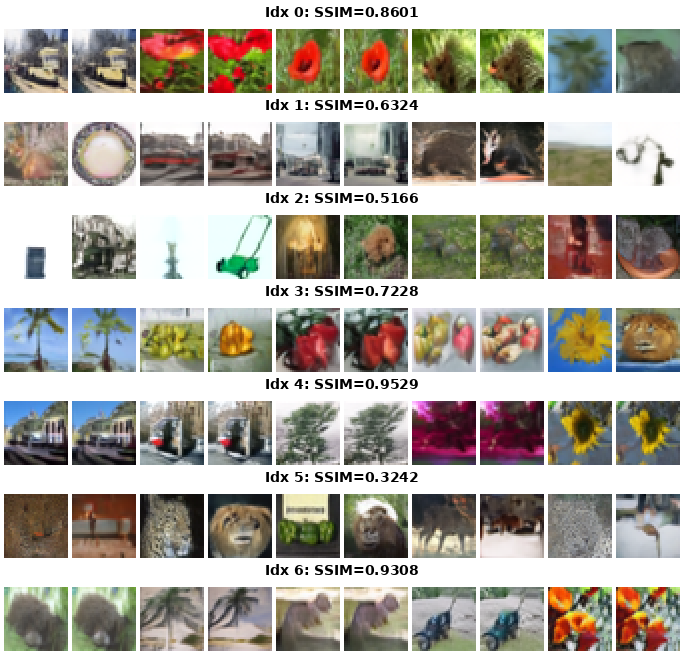}
    \caption{\textbf{Qualitative Comparison of Retrained vs. Target Model Outputs (CIFAR-20)} Each row (Idx 0--6) displays five pairs of images generated using a shared noise coupling $z$, with the retrained model $\theta_S^*$ on the left and the target model $\theta$ output on the right of each pair. The labels indicate the Structural Similarity Index (SSIM) for each coalition $S$.}
    \label{fig:qualitative_alignment_cifar20}
\end{figure}

\clearpage
\subsubsection{ArtBench Post-impression Validation}

\paragraph{Average Aesthetic Score.}
As illustrated in \Cref{fig:artbench-global_alignment}, the proxy aesthetic scores exhibit moderate alignment with the retraining ground truth ($R^2=0.330$, Spearman $\rho=0.589$, $N=100$). We define the approximation gap as $\Delta_\text{Aesthetic Score} = |\text{Aesthetic Score}_{\text{retrain}} - \text{Aesthetic Score}_{\text{proxy}}|$. This gap demonstrates a stronger correlation with representation-level drift (RMSE-CLIP: $r=0.445$; RMSE-LPIPS: $r=0.578$) than with pixel-level metrics (MAE: $r=0.224$; SSIM: $r=-0.315$). This suggests that the proxy's fidelity for aesthetic utility is primarily sensitive to semantic shifts in the feature space rather than low-level pixel residuals, consistent with the perceptual nature of aesthetic scoring models.

\begin{figure}[h!]
    \centering
    \resizebox{0.3\textwidth}{!}{
    \begin{tabular}{cc}
        \includegraphics[width=0.45\linewidth]{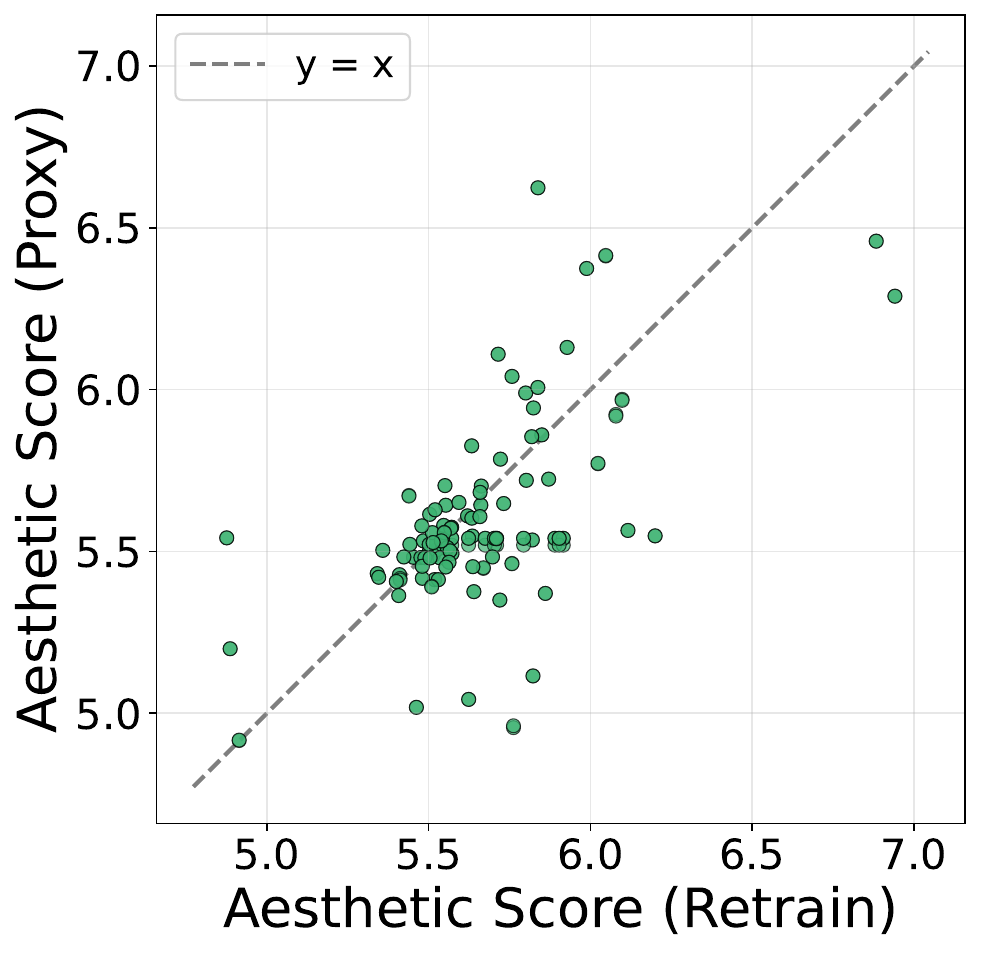}
    \end{tabular}
    }
    \caption{Fidelity of the proxy utility: Alignment between the proxy game ($y$-axis) and ground-truth retraining ($x$-axis).} 
    \label{fig:artbench-global_alignment}
\end{figure}
\begin{figure}[h!]
    \centering
    \includegraphics[width=1.0\linewidth]{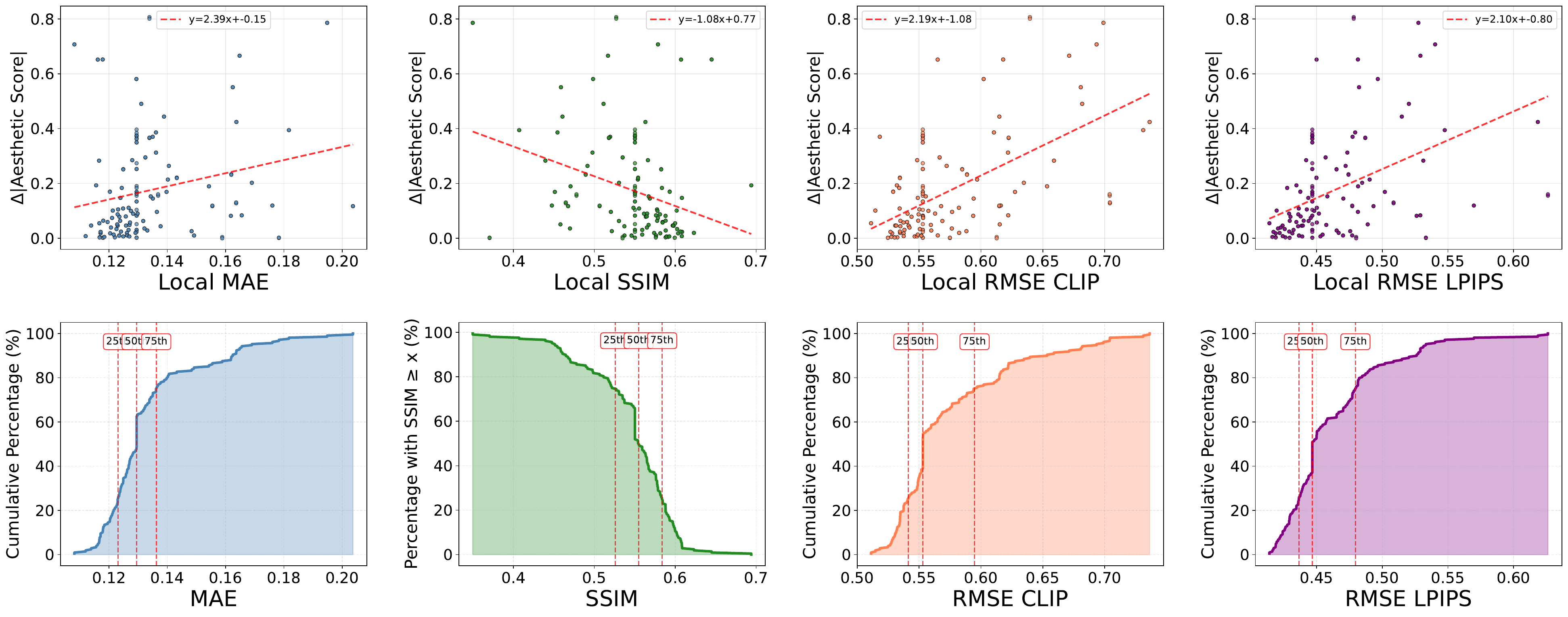}
    \caption{\textbf{Empirical Validation of Proxy Average Aesthetic Score.} 
    \textbf{(Top)} The approximation gap $\delta_{\text{avg. aesthetic score}}$ scales linearly with local reconstruction metrics (MAE, SSIM) and representation-space drift (RMSE-CLIP, RMSE-LPIPS). \textbf{(Bottom)} Cumulative Distribution Functions (CDFs) of local fidelity metrics across $N=100$ subsets. Dashed red lines denote the $25^{\text{th}}$, $50^{\text{th}}$, and $75^{\text{th}}$ percentiles. }
    \label{fig:aesthetic_validation}
\end{figure}

\begin{table}[h!]
\centering
\caption{Empirical distributions of local reconstruction metrics across 100 samples of 100 evaluation subsets from Shapley distribution for ArtBench (post-impressionism)}
\label{tab:metric_distributions}
\begin{tabular}{lcccccc}
\toprule
\textbf{Metric} & \textbf{Mean} & \textbf{Median} & \textbf{25th \%} & \textbf{75th \%} & \textbf{90th \%} & \textbf{95th \%} \\ \midrule
MAE  & 0.147 & 0.129 & 0.128 & 0.170 & 0.201 & 0.211\\
SSIM & 0.531& 0.553 & 0.447 & 0.603 &  0.634 & 0.642 \\ 
RMSE (CLIP dist.)  & 0.529 & 0.510 & 0.494 & 0.594 & 0.653 & 0.693\\
RMSE (LPIPS)  & 0.458 & 0.446 & 0.436 & 0.479 & 0.526 & 0.533\\

\bottomrule
\end{tabular}
\end{table}

\begin{figure}[t]
    \centering
    \includegraphics[width=0.8\linewidth]{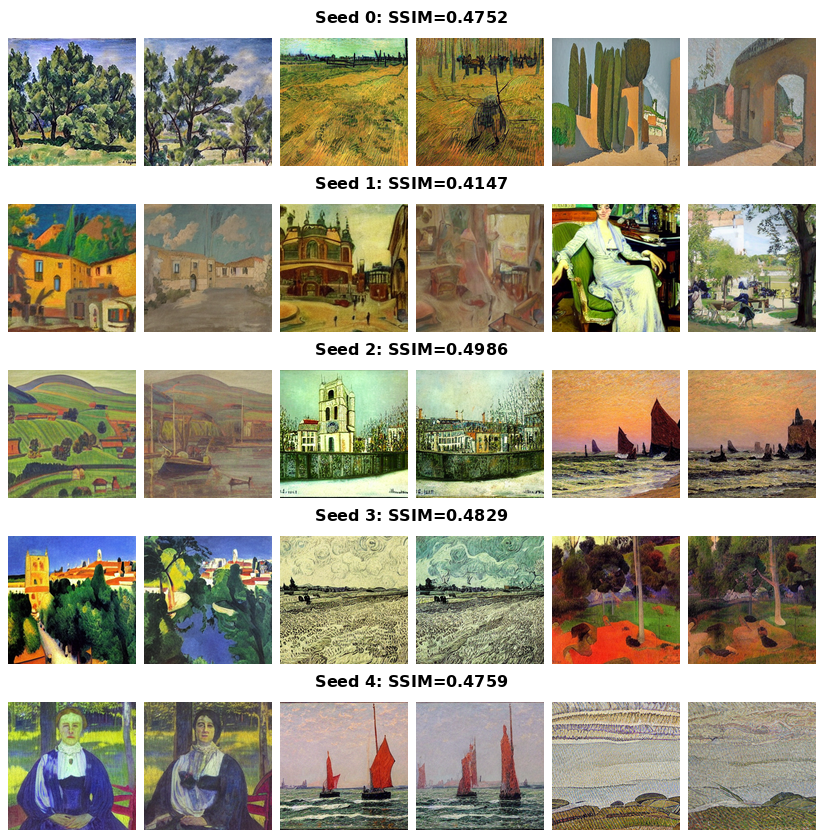}
    \caption{\textbf{Qualitative Comparison of Retrained vs. Target Model Outputs (ArtBench (Post-Impressionism))} Each row (Idx 0--4) displays three pairs of images generated using a shared noise coupling $z$, with the retrained model $\theta_S^*$ on the left and the target model $\theta$ output on the right of each pair. The labels indicate the Structural Similarity Index (SSIM) for each coalition $S$.}
    \label{fig:qualitative_alignment_artbench}
\end{figure}

\clearpage

\subsubsection{Fashion-Product Validation}
\paragraph{Average LPIPS.} We evaluate the proxy's fidelity in approximating the mean LPIPS utility on the Fashion-Product dataset. In this setup, the LPIPS utility measures the perceptual distance between generated samples and a reference training point, whereas RMSE-LPIPS quantifies the drift between samples generated by the proxy and the retrained ground truth. As shown in \Cref{fig:fashion-product-global_alignment}-left, the proxy utility exhibits moderate linear alignment with the retraining ground truth ($R^2=0.150$, Spearman $\rho=0.443$, $N=100$). However, the utility approximation gap, $\Delta_{\text{LPIPS}} = |\text{LPIPS}_{\text{retrain}} - \text{LPIPS}_{\text{proxy}}|$, is highly structured. Specifically, $\Delta_{\text{LPIPS}}$ is exceptionally well-correlated with cross-model representation drift (RMSE-CLIP: $r=0.817$; RMSE-LPIPS: $r=0.900$). This strong positive correlation indicates that the utility error vanishes predictably as the generated outputs of the proxy and retrained models converge in representation space. These results validate that the proxy’s limitations are strictly a function of reconstruction quality, empirically supporting the stability bound in Proposition~\ref{prop:proxy-error-bound}.

\begin{figure}[h!]
    \centering
    \includegraphics[width=0.6\linewidth]{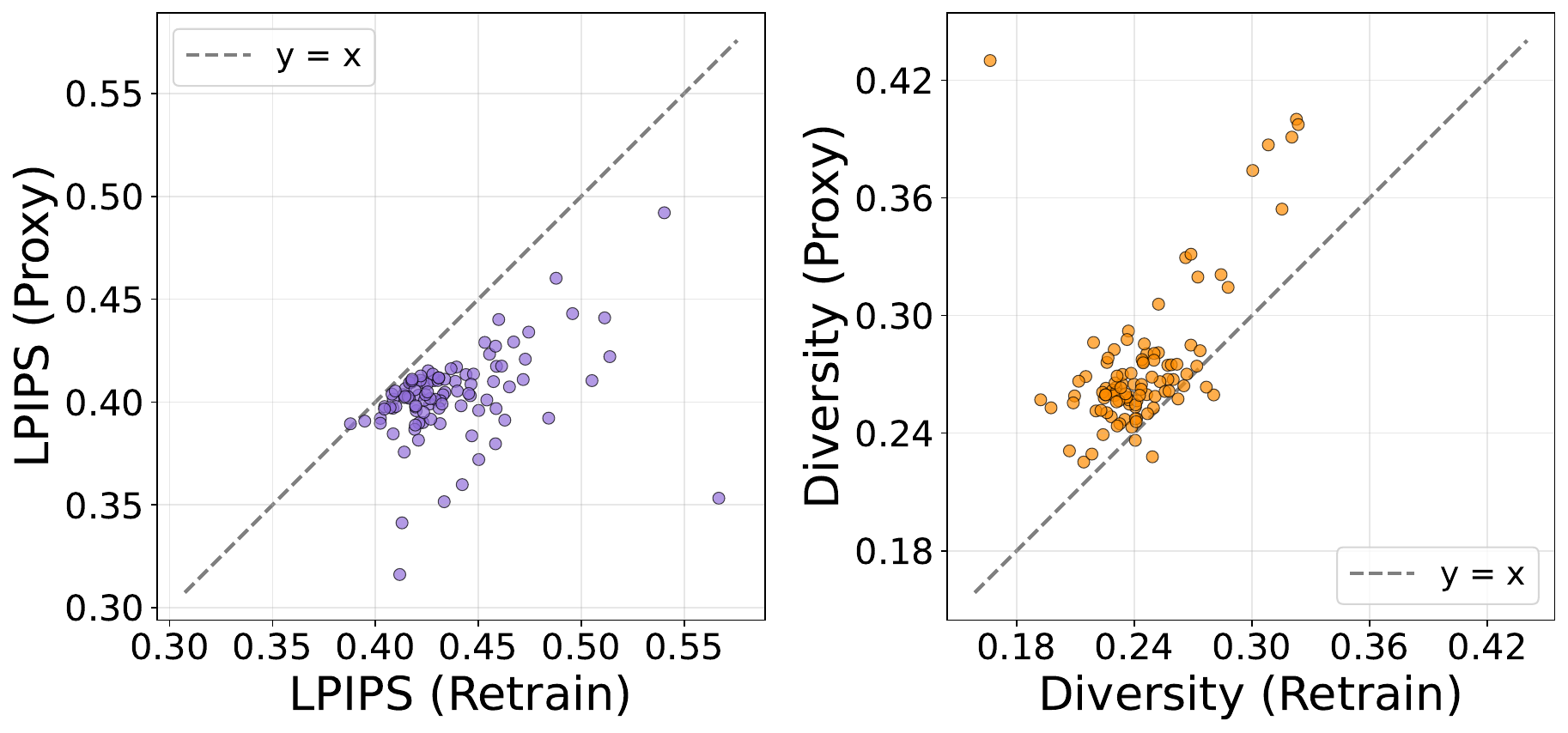}
    \caption{Fidelity of the proxy utility: Alignment between the proxy game ($y$-axis) and ground-truth retraining ($x$-axis).} 
    \label{fig:fashion-product-global_alignment}
\end{figure}

\begin{figure}[h!]
    \centering
    \includegraphics[width=1.0\linewidth]{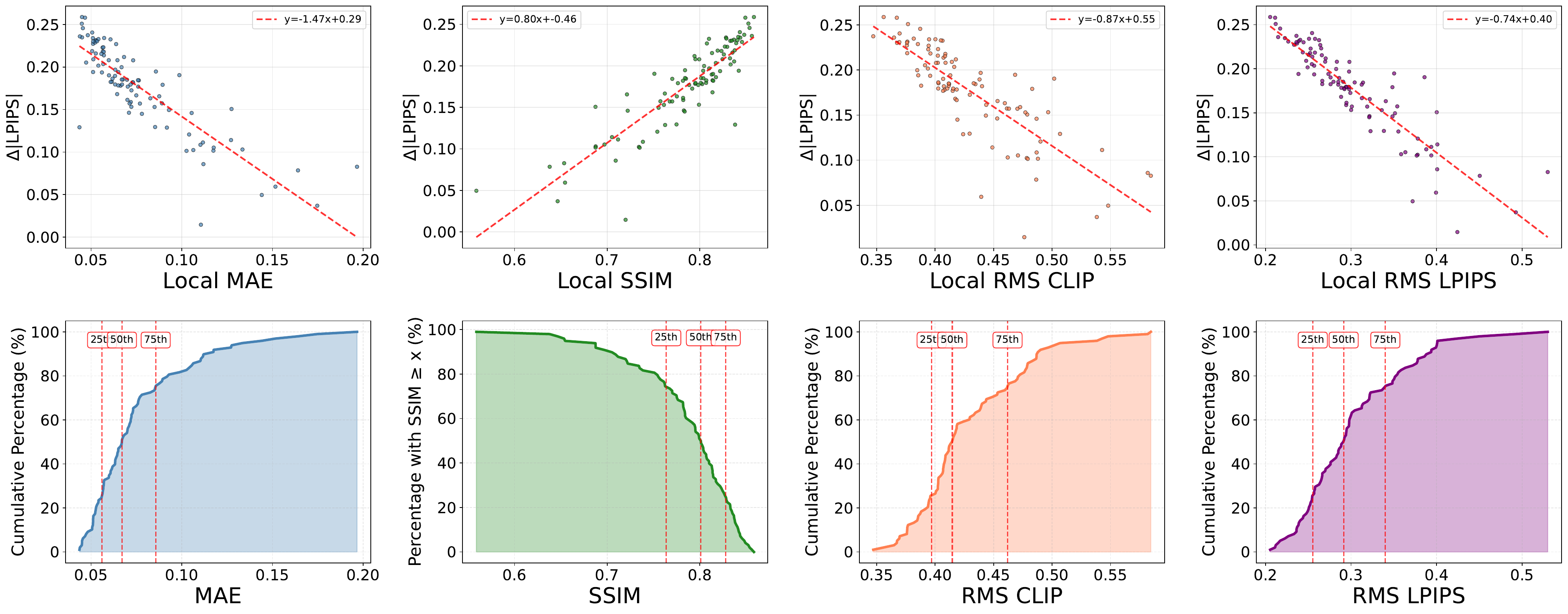}
    \caption{\textbf{Empirical Validation of Proxy Average LPIPS Score.} 
    \textbf{(Top)} The approximation gap $\delta_{\text{LPIPS}}$ scales linearly with local reconstruction metrics (MAE, SSIM) and representation-space drift (RMSE-CLIP, RMSE-LPIPS). \textbf{(Bottom)} Cumulative Distribution Functions (CDFs) of local fidelity metrics across $N=100$ subsets. Dashed red lines denote the $25^{\text{th}}$, $50^{\text{th}}$, and $75^{\text{th}}$ percentiles. }    
    \label{fig:placeholder}
\end{figure}

\paragraph{Diversity.}
We evaluate the proxy's fidelity in approximating the diversity score on the Fashion-Product dataset. As illustrated in \Cref{fig:fashion-product-global_alignment}-right, the proxy diversity scores exhibit moderate alignment with the retraining ground truth ($R^2=0.313$, Spearman $\rho=0.495$, $N=100$), capturing the general distributional trends despite some dispersion. We define the approximation gap as $\Delta_\text{Diversity} = |\text{Diversity}_{\text{retrain}} - \text{Diversity}_{\text{proxy}}|$. This gap demonstrates a stronger correlation with representation-level drift (RMSE-CLIP: $r=0.585$; RMSE-LPIPS: $r=0.679$) and pixel-level metrics (MAE: $r=0.694$; SSIM: $r=-0.581$). 
\begin{figure}[h!]
    \centering
    \includegraphics[width=1.0\linewidth]{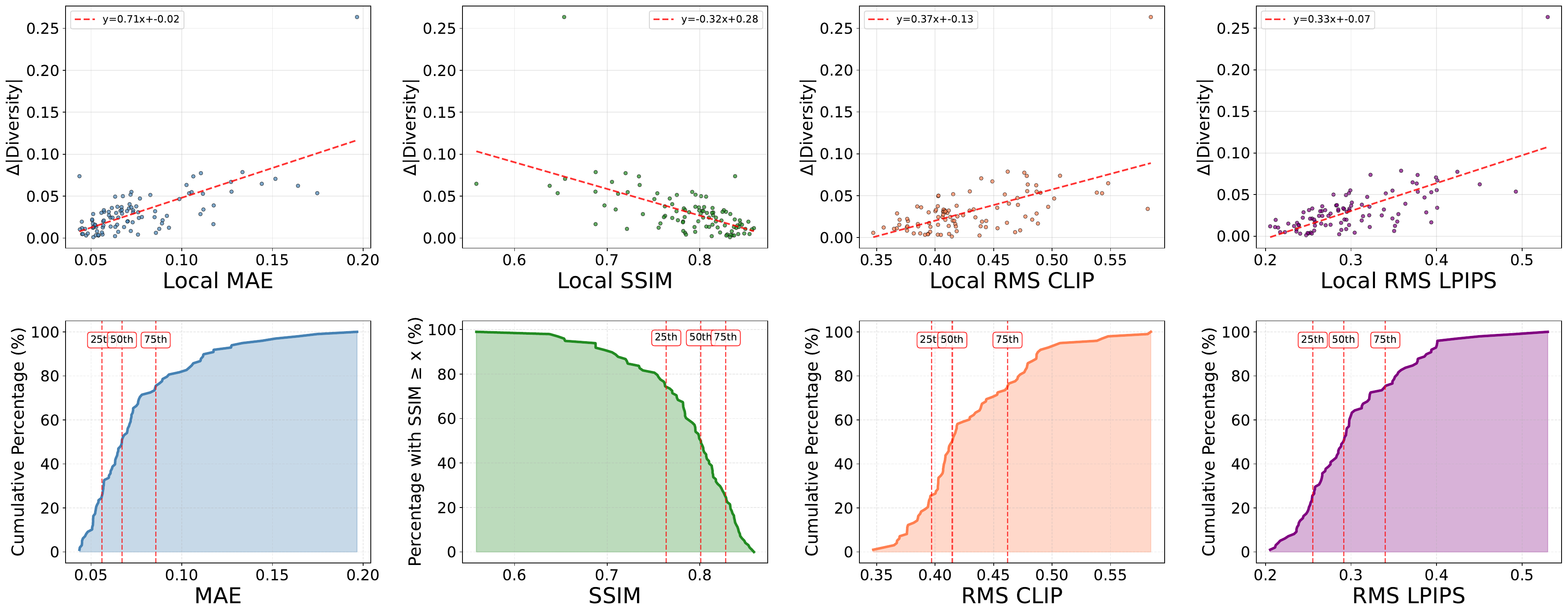}
    \caption{\textbf{Empirical Validation of Proxy Diversity Score.} 
    \textbf{(Top)} The approximation gap $\delta_{\text{Diversity}}$ scales linearly with local reconstruction metrics (MAE, SSIM) and representation-space drift (RMSE-CLIP, RMSE-LPIPS). \textbf{(Bottom)} Cumulative Distribution Functions (CDFs) of local fidelity metrics across $N=100$ subsets. Dashed red lines denote the $25^{\text{th}}$, $50^{\text{th}}$, and $75^{\text{th}}$ percentiles. }    
    \label{fig:placeholder}
\end{figure}

\begin{table}[h!]
\centering
\caption{Empirical distributions of local reconstruction metrics across 100 samples of 100 evaluation subsets from Shapley distribution for Fashion-Product}
\label{tab:metric_distributions}
\begin{tabular}{lcccccc}
\toprule
\textbf{Metric} & \textbf{Mean} & \textbf{Median} & \textbf{25th \%} & \textbf{75th \%} & \textbf{90th \%} & \textbf{95th \%} \\ \midrule
 MAE  &  0.177 & 0.166 & 0.147 & 0.195 & 0.235 & 0.263 \\
 SSIM & 0.786 & 0.801 & 0.763 & 0.827 & 0.841 & 0.848 \\ 
 RMS CLIP  &  0.429 & 0.414 & 0.397 & 0.461 & 0.487 & 0.511 \\
 RMS LPIPS  &  0.301 & 0.291 & 0.255 & 0.340 & 0.389 & 0.400 \\
 \bottomrule
\end{tabular}
\end{table}

\begin{figure}[t]
    \centering
    \includegraphics[width=0.8\linewidth]{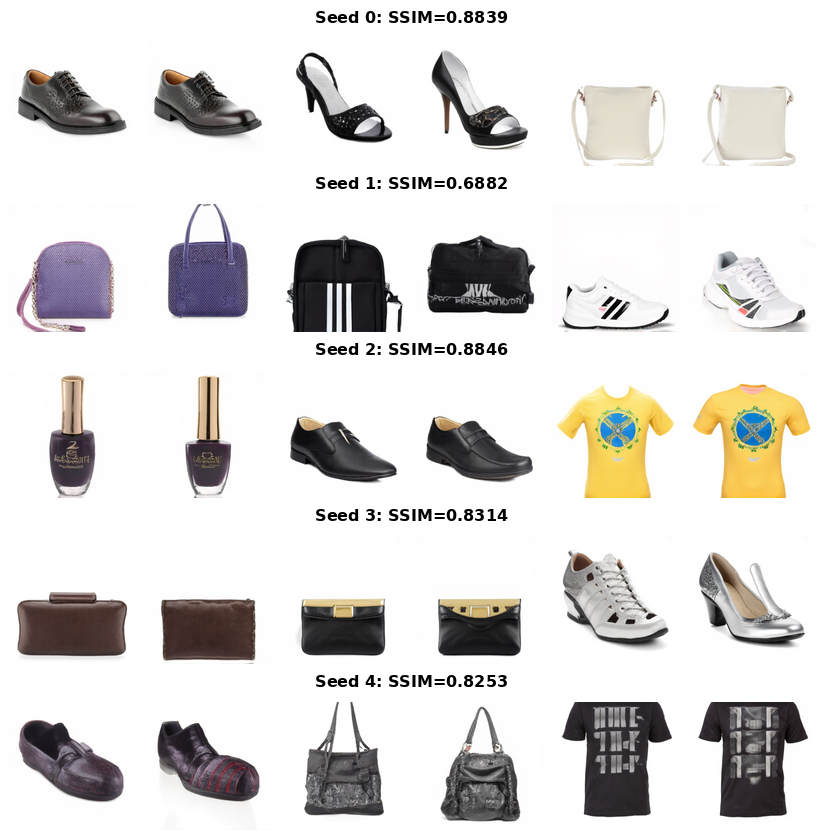}
    \caption{\textbf{Qualitative Comparison of Retrained vs. Target Model Outputs (Fashion-Product). } Each row (Idx 0--4) displays three pairs of images generated using a shared noise coupling $z$, with the retrained model $\theta_S^*$ on the left and the target model $\theta$ output on the right of each pair. The labels indicate the Structural Similarity Index (SSIM) for each coalition $S$.}
    \label{fig:qualitative_alignment_artbench}
\end{figure}

\clearpage
\subsection{Additional Results for Linear Datamodel Score (LDS)}
We provide supplemental results for the Linear Datamodel Score (LDS) experiments using varying ratio $\alpha \in \{0.25, 0.75\}$. These evaluations follow the experimental protocol detailed in \Cref{sec:lds_results} and demonstrate the consistency of our findings across different datamodel \(\alpha\).

\input{tables/lds_0_25}
\input{tables/lds_0_75}

\clearpage
\subsection{Surrogate Architecture Analysis}

We validated our surrogate model choice by comparing XGBoost against linear, 2-layer MLP, and random forest using 5-fold cross-validation. We utilized the coalition sampling budgets $M$ specified in \Cref{tab:lds_0.5_results}. As shown in \Cref{tab:surrogate_cv}, XGBoost consistently achieved the lowest MSE across four of the five utility metrics; despite the MLP's performance on CIFAR-20 FID, XGBoost proved to be the most robust architecture overall. These results support the use of gradient-boosted trees for capturing non-linear feature interactions, similar to findings in \citep{butler2025proxyspex}.

\input{tables/surrogate_performance}



\subsection{Hyperparameter Sensitivity: D-TRAK \& TRAK}
We evaluate the performance of TRAK and D-TRAK across a range of regularization strengths ($\lambda$) and projection dimensions ($k$). Our results indicate that performance is relatively invariant to the choice of projection dimension $k$, whereas the regularization strength $\lambda$ serves as a critical hyperparameter (\Cref{fig:trak_d_trak}).

\begin{figure}[h!]
    \centering
    \includegraphics[width=.6\linewidth]{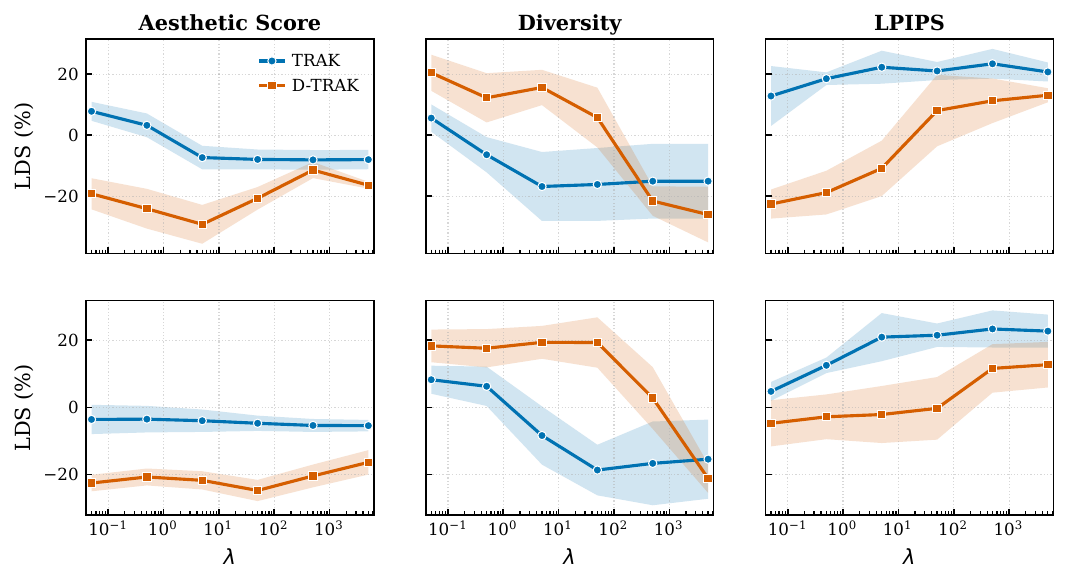}
    \caption{LDS (\%) with \( \alpha =0.5\) of TRAK and D-TRAK as a function of the regularization strength $\lambda$, evaluated on ArtBench Post-Impressionism (Aesthetic Score) and Fashion-Product (Diversity and LPIPS), for $k=4{,}096$ (top) and $k=32{,}768$ (bottom). Shaded bands denote 95\% CIs across seeds.}
    \label{fig:trak_d_trak}
\end{figure}
\clearpage

\subsection{Counterfactual results}
\label{app:full_results}
We report full counterfactual retraining results across all evaluated percentiles. 
For each dataset/metric, we include both \textit{bottom-percentile removal} \Cref{tab:cf_bottom_aesthetic_diversity_bold} and \textit{top-percentile removal} \Cref{tab:cf_top_removal}, which together assess whether an attribution method ranks contributors correctly. \Cref{tab:app_artbench_full} presents the complete results on ArtBench (Post-Impressionism), and \Cref{tab:app_cifar20_is_full}, \ref{tab:app_cifar20_fid_full}, \ref{tab:app_fashion_lpips_full}, and \ref{tab:app_fashion_msssim_full} provide the corresponding results on CIFAR-20 (Inception Score/FID) and Fashion Product (LPIPS/Diversity).

\input{tables/top_removal}

\subsubsection{CIFAR-20}

\Cref{fig:top_contributor_cifar20_fid} shows the top six contributors to the FID score on CIFAR-20, as identified by SurrogateSHAP.

\input{tables/cf_cifar20_fid}

\begin{figure}[h!]
    \centering
    \includegraphics[width=0.65\linewidth]{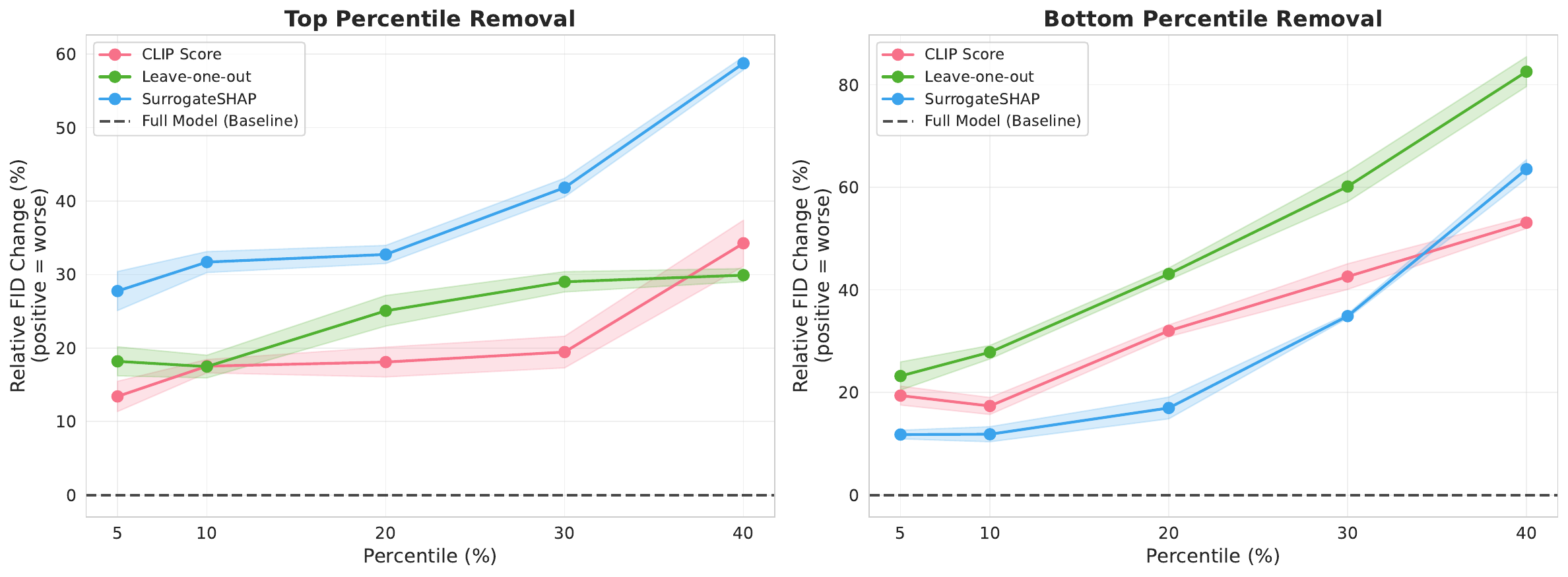}
    \caption{Counterfactual evaluation on CIFAR-20. We report the relative change in FID ($\Delta \mathcal{F} \%$) after removing the top (left) and bottom (right) percentiles of contributors. The $x$-axis denotes the percentage of contributors removed.}
    \label{fig:placeholder}
\end{figure}

\begin{figure}[h!]
    \centering
    \includegraphics[width=0.6\linewidth]{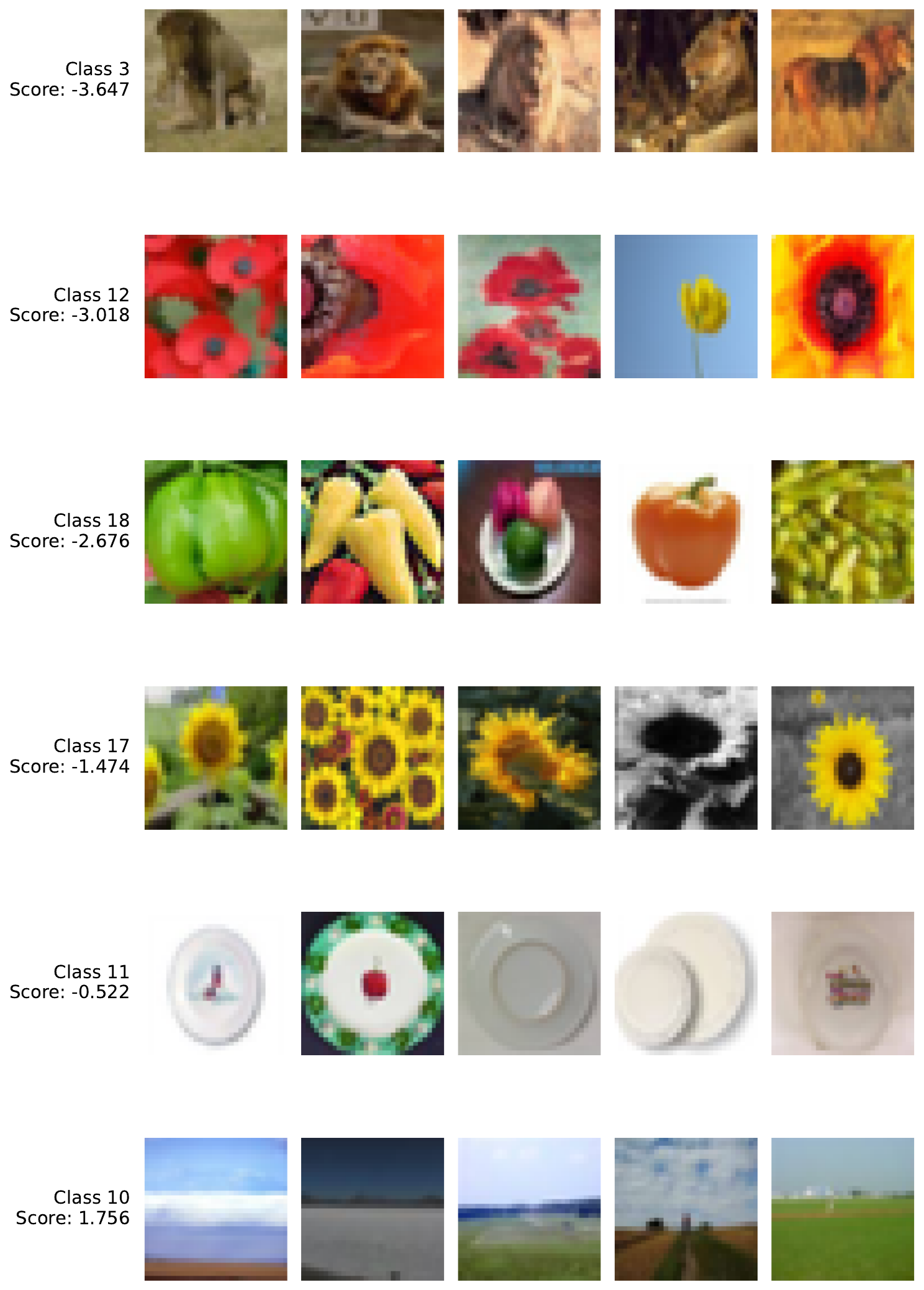}
    \caption{\textbf{Influential Data Contributors for FID.} This figure visualizes the top six classes identified by SurrogateSHAP as having the highest marginal contribution to FID on the CIFAR-20. Each row displays representative training samples from the identified class alongside their computed importance score. }
    \label{fig:top_contributor_cifar20_fid}
\end{figure}

\clearpage

\input{tables/cf_cifar20_is}

\begin{figure}[h!]
    \centering
    \includegraphics[width=0.7\linewidth]{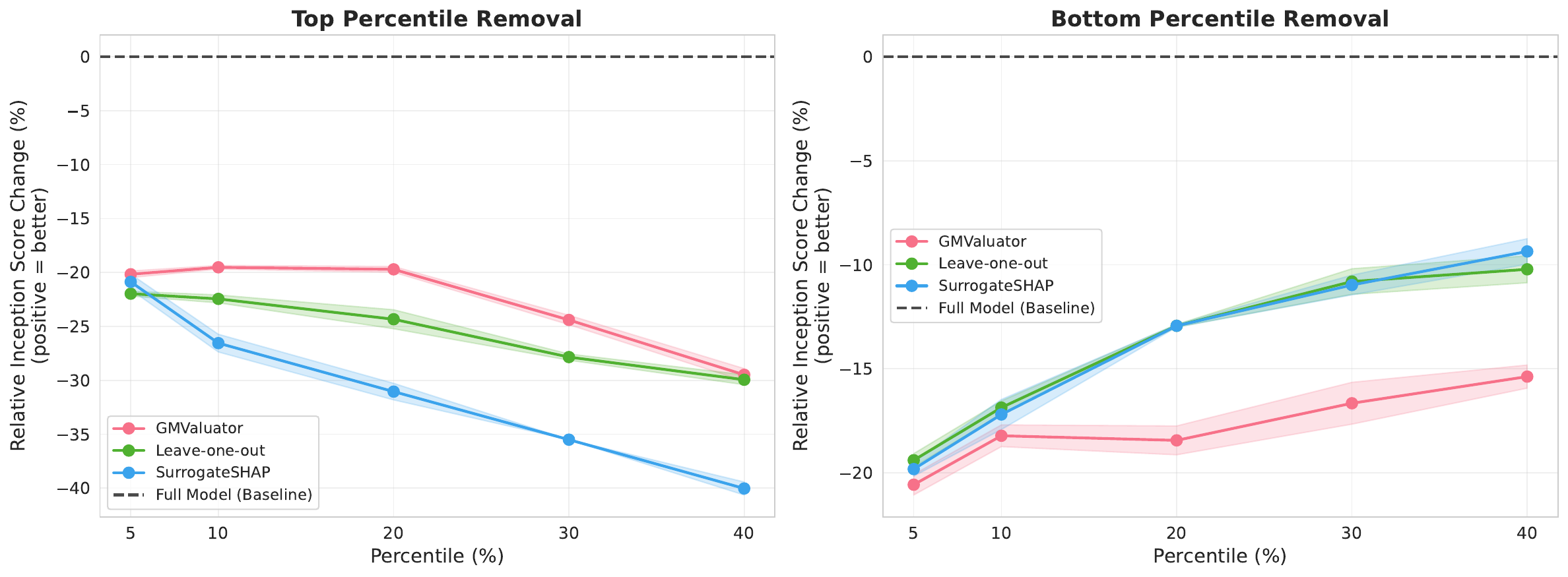}
    \caption{Counterfactual evaluation on CIFAR-20. We report the relative change in Inception Score ($\Delta \mathcal{F} \%$) after removing the top (left) and bottom (right) percentiles of contributors. The $x$-axis denotes the percentage of contributors removed.}    \label{fig:placeholder}
\end{figure}

\begin{figure}
    \centering
    \includegraphics[width=0.6\linewidth]{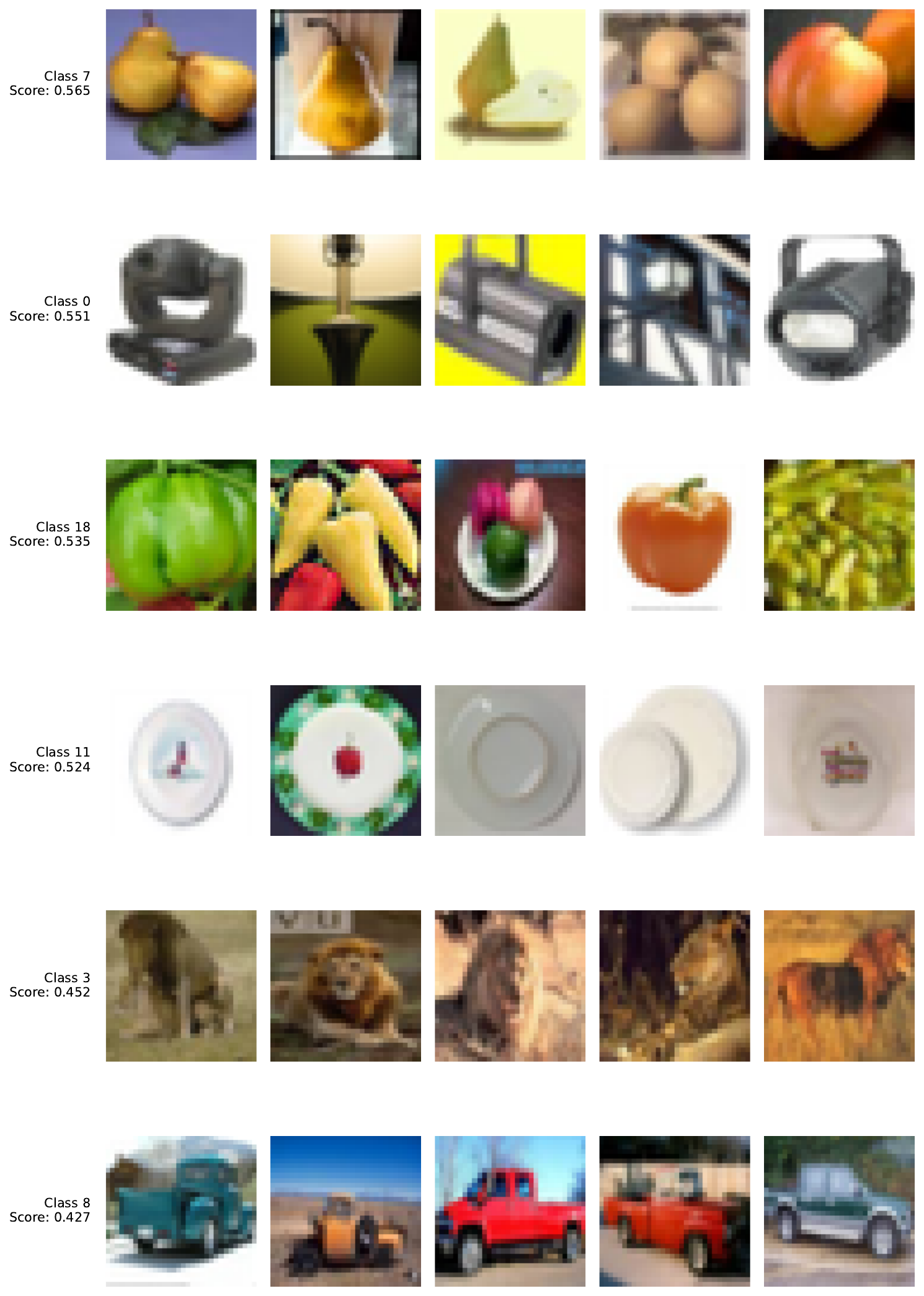}
    \caption{\textbf{Influential Data Contributors for Inception Score.} This figure visualizes the top six classes identified by SurrogateSHAP as having the highest marginal contribution to Inception Score on the CIFAR-20. Each row displays representative training samples from the identified class alongside their computed importance score. }
    \label{fig:top_contributor_cifar20_is}
\end{figure}

\clearpage
\subsubsection{ArtBench Post-Impressionism}
\input{tables/cf_artbench}
\begin{figure}[h!]
    \centering
    \includegraphics[width=1.0\linewidth]{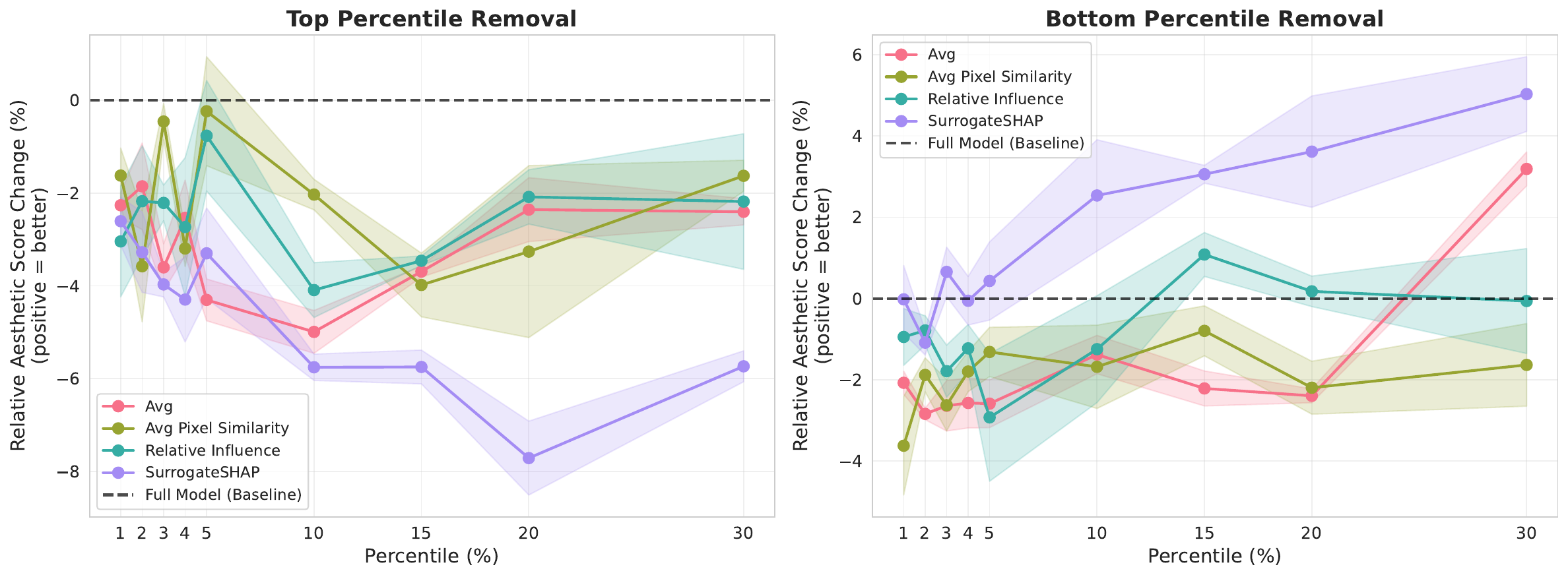}
    \caption{Counterfactual evaluation on ArtBench (Post-impressionism). We report the relative change in average aesthetic score ($\Delta \mathcal{F} \%$) after removing the top (left) and bottom (right) percentiles of contributors. The $x$-axis denotes the percentage of contributors removed.}    
    \label{fig:artbench_cf}
\end{figure}

\begin{figure}[h!]
    \centering
    \includegraphics[width=0.7\linewidth]{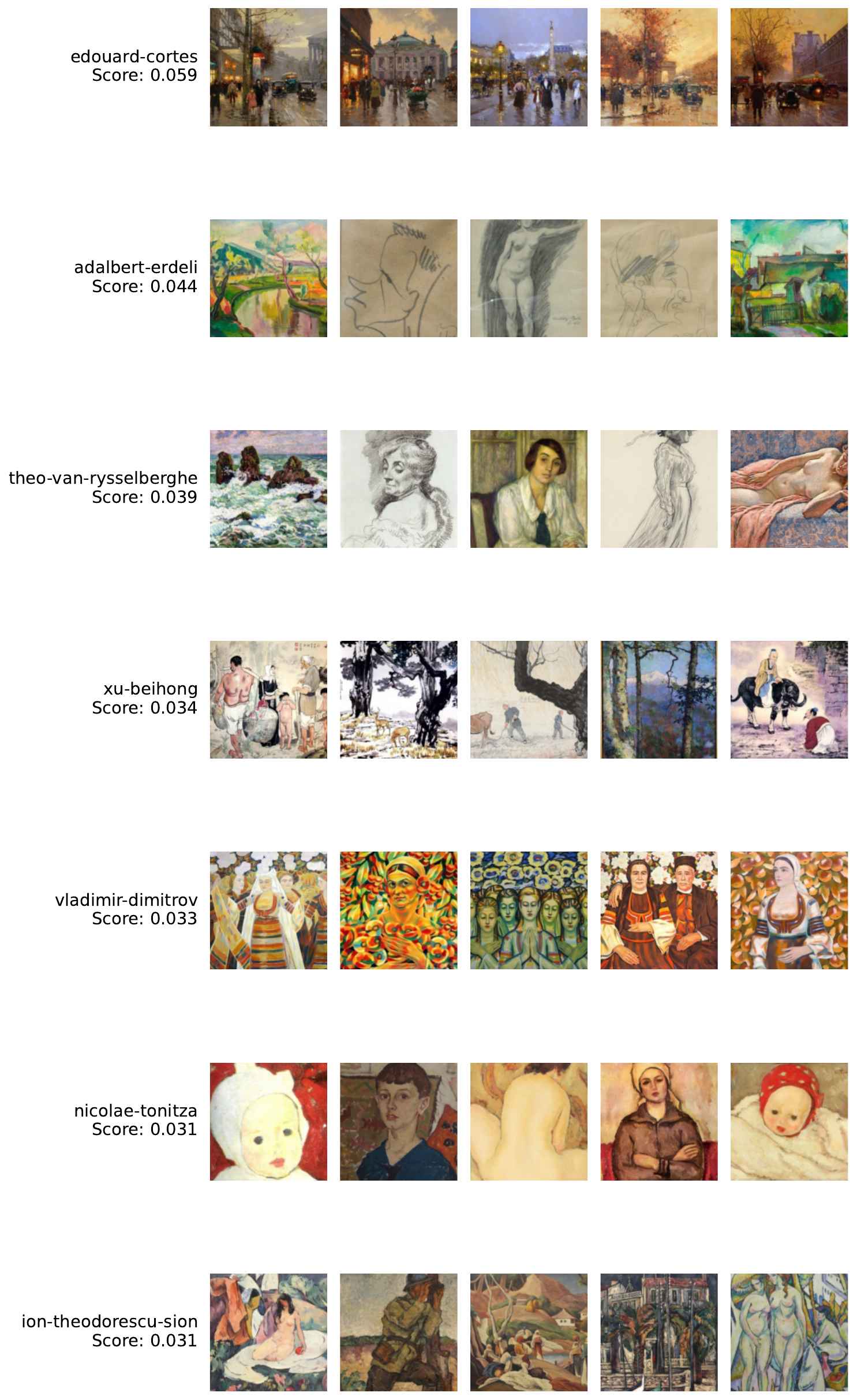}
    \caption{\textbf{Influential Data Contributors for Artistic Aesthetics.} This figure visualizes the top seven artists identified by SurrogateSHAP as having the highest marginal contribution to the mean aesthetic score on the ArtBench (Post-Impressionism). Each row displays representative training samples from the identified artist alongside their computed importance score. }
    \label{fig:top_contributor_artbench}
\end{figure}

\clearpage
\subsubsection{Fashion Product}
\input{tables/cf_fashion_lpips}
\begin{figure}[h!]
    \centering
    \includegraphics[width=1.0\linewidth]{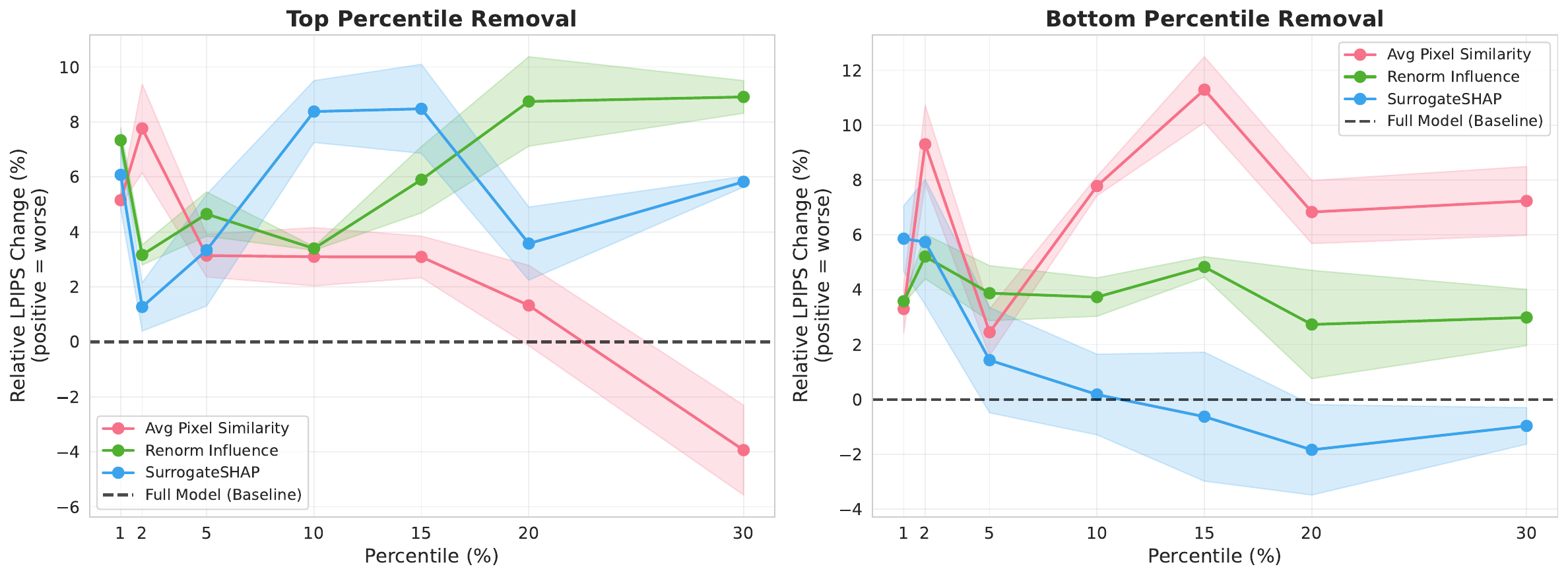}
    \caption{Counterfactual evaluation on Fashion-Product. We report the relative change in mean LPIPS ($\Delta \mathcal{F} \%$) after removing the top (left) and bottom (right) percentiles of contributors. The $x$-axis denotes the percentage of contributors removed.}
    \label{fig:placeholder}
\end{figure}
\begin{figure}
    \centering
    \includegraphics[width=0.7\linewidth]{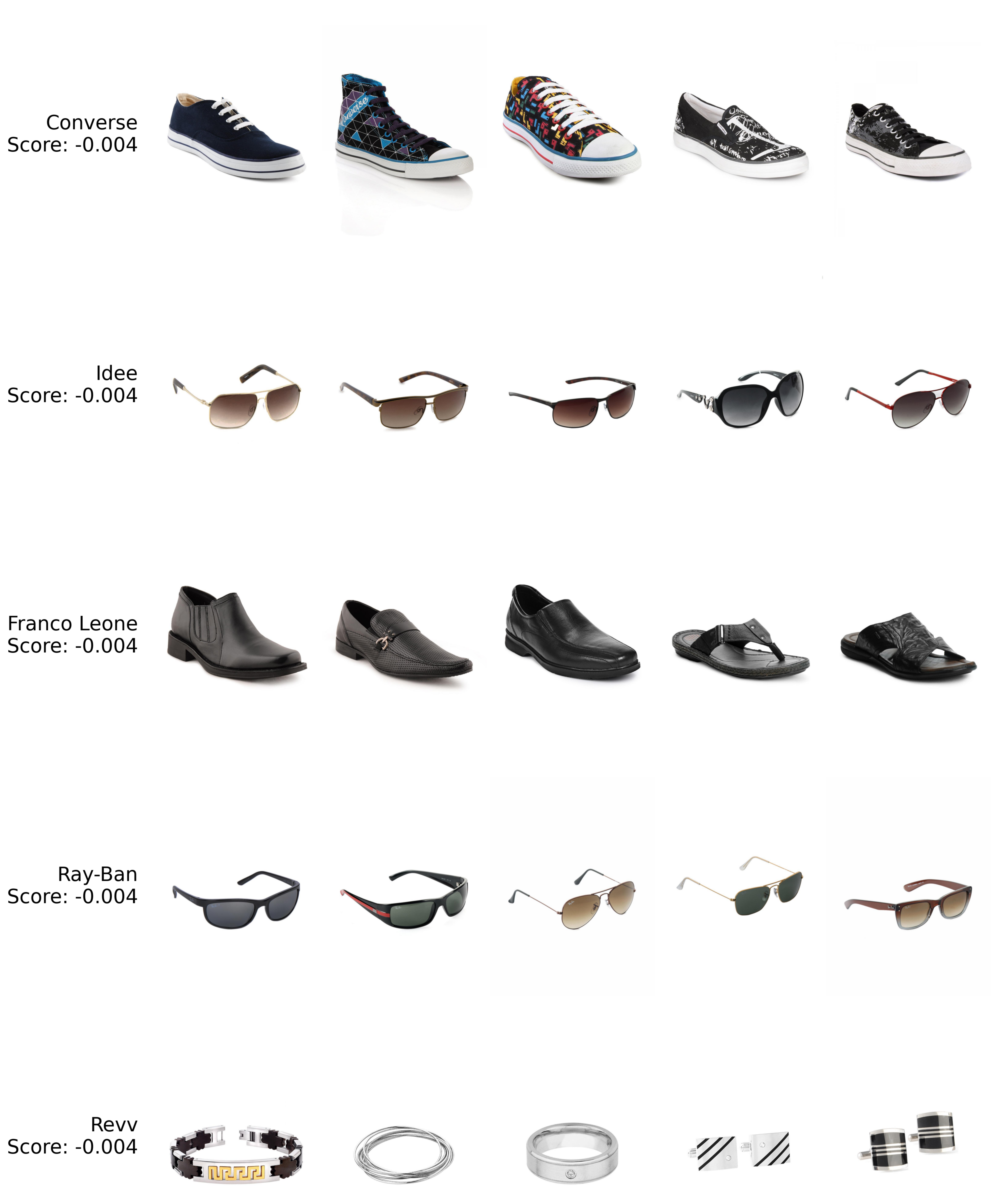}
    \caption{\textbf{Influential Data Contributors for LPIPS.} This figure visualizes the top five brands identified by SurrogateSHAP as having the highest marginal contribution to LPIPS on the Fashion-Product. Each row displays representative training samples from the identified brand alongside their computed importance score. }    
    \label{fig:placeholder}
\end{figure}

\clearpage
\input{tables/cf_fashion_msssim}
\begin{figure}[h!]
    \centering
    \includegraphics[width=1.0\linewidth]{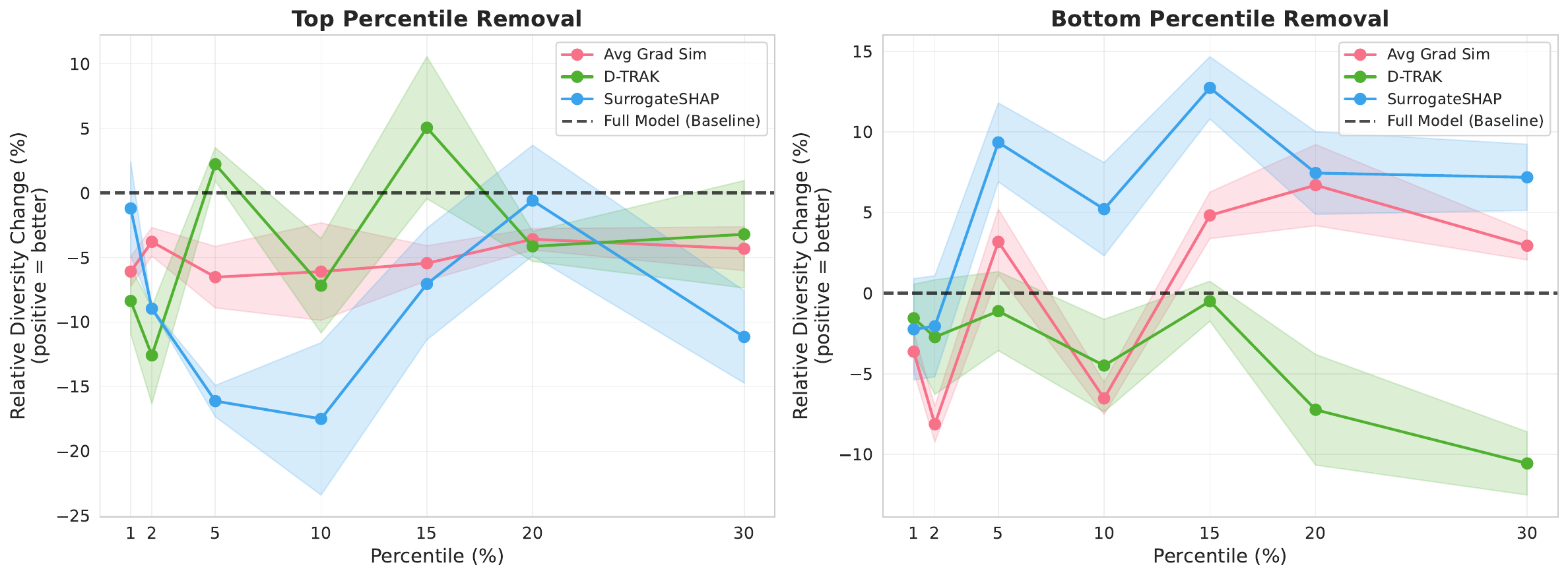}
    \caption{Counterfactual evaluation on Fashion-Product. We report the relative change in diversity ($\Delta \mathcal{F} \%$) after removing the top (left) and bottom (right) percentiles of contributors. The $x$-axis denotes the percentage of contributors removed.}    \label{fig:placeholder}
\end{figure}

\begin{figure}
    \centering
    \includegraphics[width=0.7\linewidth]{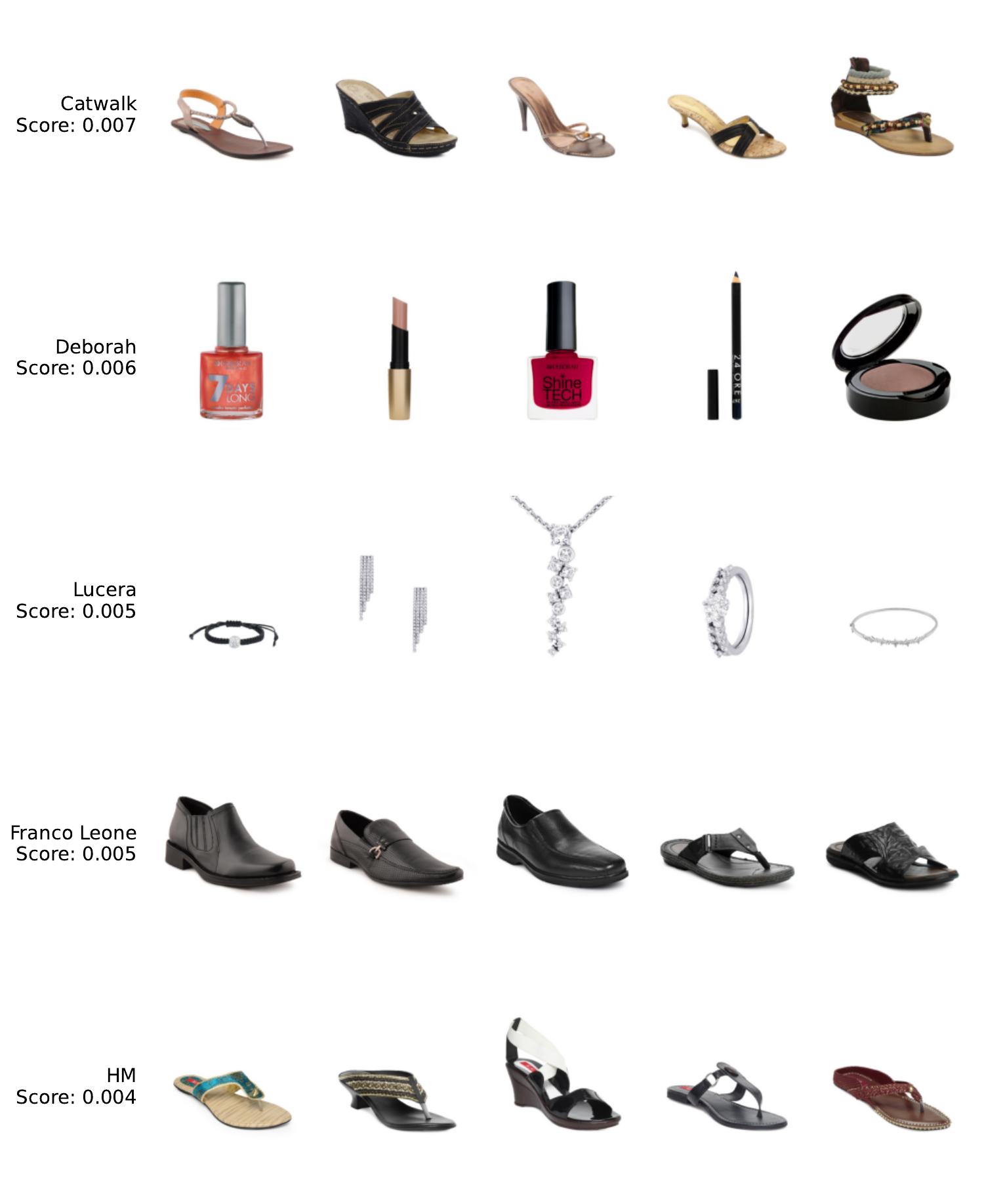}
    \caption{\textbf{Influential Data Contributors for Diversity.} This figure visualizes the top five brands identified by SurrogateSHAP as having the highest marginal contribution to Diversity on the Fashion-Product. Each row displays representative training samples from the identified brand alongside their computed importance score. }    
    \label{fig:placeholder}
\end{figure}
\clearpage
\section{Additional details for the dermatology data auditing case study}
\label{app:case_study}

\paragraph{ISIC Dataset} We use dermoscopic lesions (which offer a magnified view of the patient's skin) from the ISIC archive \citep{codella2018skin,tschandl2018ham,combalia2019bcn,rotemberg2021patient}, specifically the 2019 and 2020 ISIC challenge datasets, for this analysis. We exclude non-dermoscopic images and images lacking sex labels. After partitioning, we are left with 76295 images. 

We then fine-tune a Stable Diffusion model\footnote{\url{https://huggingface.co/stable-diffusion-v1-5/stable-diffusion-v1-5}} on dermoscopic images using hospital-conditioned captions. To create the finetuning dataset, we sample a maximum of 1200 images from each of the 10 hospital sites, while balancing it in terms of sex and melanoma. For hospitals with less than 1200 images, we sample all the images. After the preprocessing step, we are left with 9922 images with the distribution shown in Figure \ref{fig:hospital_site_dist}. The diffusion model is fine-tuned using LoRA with rank r=32, corresponding to 3.2M parameters. The prompt is set to \texttt{``Image of a dermoscopic lesion from Hospital \{hospital site\}''} for each image. The LoRA parameters are trained using the AdamW optimizer for 100 epochs with a learning rate of $1 \times 10^{-5}$. The images are generated at inference time using the PNDM scheduler for 100 steps.

\textit{Model property} Let $\mathcal{H} = \{1, ..., H\}$ denote the hospital sites, treated as the data contributors in our setup, with $H = 10$. We first train independent sex and melanoma classifiers on the ISIC dataset, yielding probabilistic predictors $f_\text{sex}(x) \in [0, 1]$ and $f_\text{mel}(x) \in [0, 1]$. We use vision transformer-based models (ViT-Base architectures) \citep{dosovitskiy2021image} for both the prediction tasks. We start with classifiers pre-trained on ImageNet \citep{deng2009imagenet} and then replace the 1000-class linear classification head with a new linear head suited for binary prediction. We optimize the network using an Adam optimizer with a cosine learning rate scheduler with an initial learning rate of $1\times 10^{-4}$. We train for 10 epochs with a batch size of 256. The sex classifier achieves a ROC-AUC of 0.752, and the melanoma classifier achieves a ROC-AUC of 0.937.

To quantify the spurious correlation, we sample coalitions $S \subseteq \mathcal{H}$ from the Shapley distribution. For each coalition, we generate a set of images $\{x_j\}_{j=1}^K \sim p_{\theta}(\cdot \mid S)$ by restricting the sampled prompt to the hospital sites in $S$. The coalition utility is defined as the dependence between the two probes on the generated images:
\begin{equation*}
v(S)
=
\rho
\left(
\{ f_{\text{sex}}(x_j) \}_{j=1}^{K},
\{ f_{\text{mel}}(x_j) \}_{j=1}^{K}
\right),
\end{equation*}
where $\rho$ is the Spearman correlation and $K=256$. Intuitively, a large $|v(S)|$ indicates a potentially spurious association between sex and melanoma predictions induced by the coalition's generated distribution. Finally, we evaluate $v(S)$ for 100 sampled coalitions and use SurrogateSHAP to compute the attributions for each hospital site.

\begin{figure}[h!]
    \centering
    \includegraphics[width=0.8\linewidth]{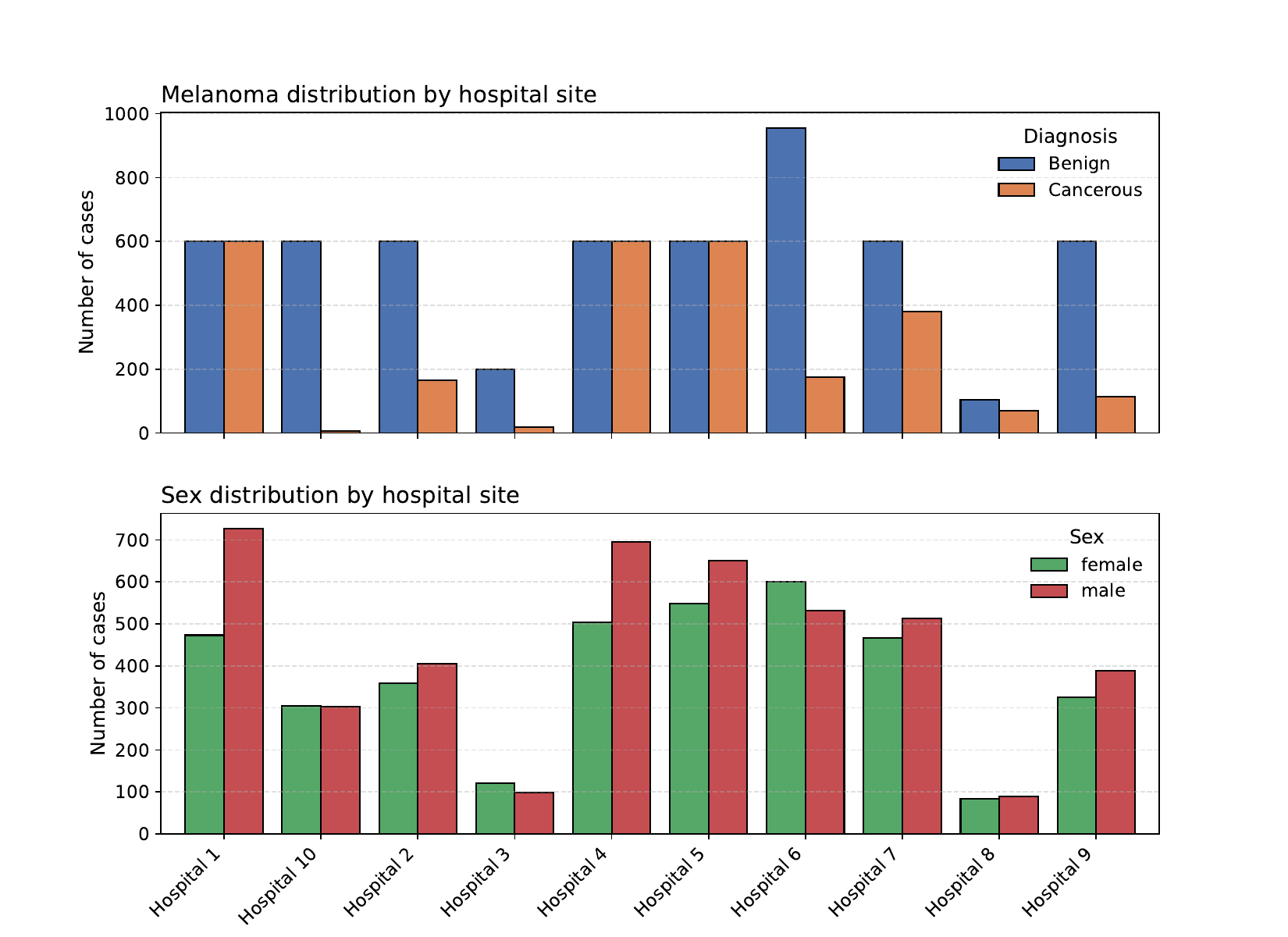}
    \caption{Distribution of the fine-tuning data for the Stable Diffusion model across different hospital sites in terms of the diagnosis and patient sex.}
    \label{fig:hospital_site_dist}
\end{figure}


\begin{figure}[h!]
    \centering
    \includegraphics[width=0.7\linewidth]{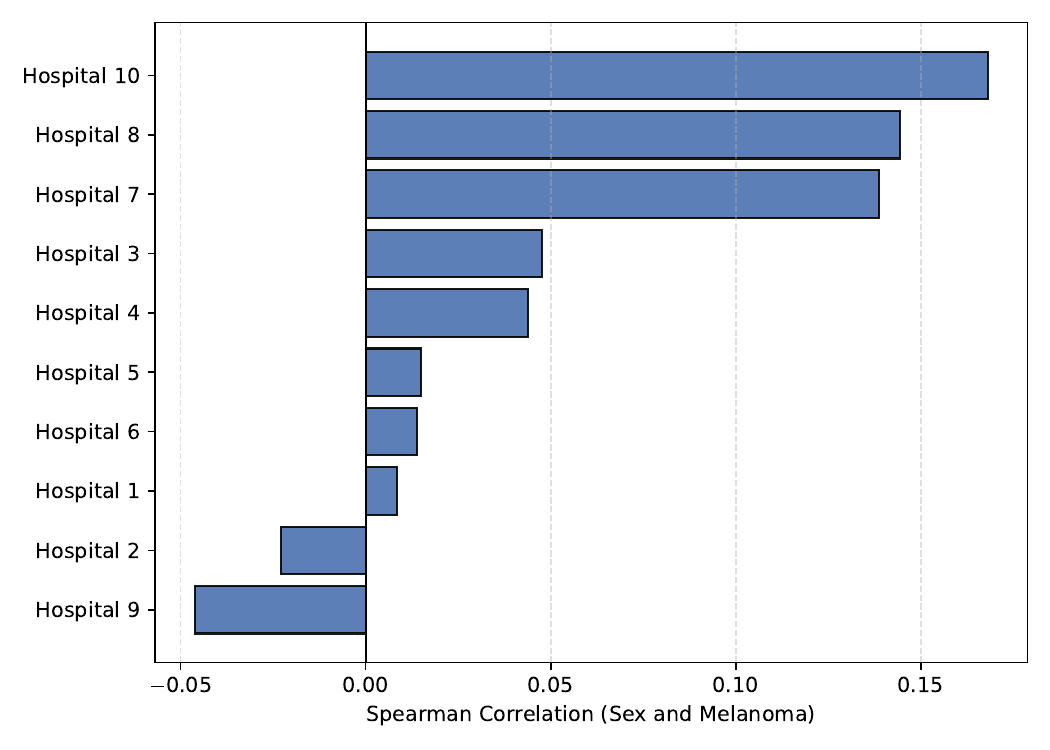}
    \caption{Spearman correlation between sex and melanoma binary labels in the training data across different hospital sites. Hospital 10 has the maximum correlation $r = 0.17, p = 6.3 \times 10 ^ {-6}$.}
    \label{fig:base_corr}
\end{figure}

%% file: tables/lds_0_25.tex
\begin{table*}[h!]
    \centering
    \caption{LDS (\%) results with $\alpha = 0.25$. Means and 95\% confidence intervals across five random initializations are reported.}
    \label{tab:lds_0.25_results}
    \begin{adjustbox}{width=1.0\textwidth}
    \begin{tabular}{lccccc}
    \toprule
    \textbf{Method} &
    \multicolumn{2}{c}{\textbf{CIFAR-20}} &
    \textbf{\shortstack{ArtBench\\(Post-Impressionism)}} &
    \multicolumn{2}{c}{\textbf{Fashion Product}} \\
    \midrule
    \cmidrule(lr){2-3}\cmidrule(lr){4-4}\cmidrule(lr){5-6}
     & Inception score & FID & Aesthetic score & Diversity & LPIPS \\
    \midrule
    Pixel similarity (avg.) & \( -4.83 \pm 2.02\) & \( 0.77 \pm 1.24\) & \(-1.03 \pm 2.30\) & \( 16.64 \pm 16.12\) & \( -6.48  \pm 12.43\) \\
    Pixel similarity (max) & \( -20.25 \pm 3.48\) & \( -1.34 \pm 1.46\) & \( 3.52 \pm 3.12\) & \( -16.71  \pm 10.18\) & \( 1.67  \pm 7.65\) \\
    CLIP similarity (avg.) & \( 18.68 \pm 1.36\) & \( -5.22 \pm 3.90\) & \( 0.37 \pm 0.97\) & \( 9.81 \pm 14.31 \) & \( -11.46  \pm 18.77\) \\
    CLIP similarity (max) & \( -8.12 \pm 1.44\) & \(1.38 \pm 0.56\) & \( -2.46 \pm 1.21\) & \( 0.35 \pm 17.99\) & \(-8.42 \pm 8.21\) \\
    Gradient similarity (avg.) & \( 18.69 \pm 0.85\) &  \( -38.49 \pm 0.18\) & \(6.92 \pm 5.00 \) & \(18.64\pm 5.66 \) & \( -7.12  \pm 11.39\) \\
    Gradient similarity (max) & \( 7.03 \pm 3.46\) & \( -6.18 \pm 1.05\) & \( -3.00 \pm 3.40\) & \(  9.04 \pm 9.54\) & \(-3.39  \pm 4.19\) \\
    Gmvaluator \citep{yanggmvaluator} & \( 30.42 \pm 0.27\) & \( -15.31\pm 0.98 \) & \( -5.06\pm 2.67\)& \( 6.37 \pm 13.66\) & \( -13.58 \pm 11.60\)\\
    \midrule
    Aesthetic score (avg.) & -- & -- & \(  14.76  \pm 1.98\) & -- & -- \\
    Aesthetic score (max) & -- & -- & \( 8.84  \pm 1.24\) & -- & -- \\
    \midrule
    Relative IF \citep{barshan2020relatif} & \( 22.79 \pm 1.99\) & \( 0.43   \pm 1.36\) & \( 9.57 \pm 3.12\) & \(8.51 \pm 4.52 \) & \( -2.20  \pm 4.13\) \\
    Renormalized IF \citep{hammoudeh2022identifying} & \( 22.84 \pm 1.97 \) & \( -0.35 \pm 1.35\) & \( 4.95  \pm 3.78\) & \( 5.85 \pm 4.66 \) & \(-2.33 \pm 4.08 \) \\
    TRAK \citep{park2023trak} & \( 23.53 \pm 3.98\) & \( -0.79 \pm 0.35\) & \( 8.31 \pm 3.21\) & \( 8.31 \pm 4.15 \) & \( -3.12 \pm 4.42\) \\
    Journey-TRAK \citep{georgiev2023journey} & \( 8.82\pm 0.20 \) & \( -15.87 \pm 2.63\) & \( -19.86 \pm 1.73\) & \( -1.24 \pm 9.18 \) & \( 2.51 \pm 11.20\) \\
    D-TRAK \citep{zheng2023intriguing} & \(3.39 \pm 2.48\) & \( -15.73 \pm 2.27\) & \( -20.75 \pm    2.52\) & \( 12.31  \pm 8.44\) & \( -3.30 \pm 16.89\) \\
    DAS \citep{lin2024diffusion}& \( 24.40 \pm 1.96 \) &  \( -0.45 \pm 2.34\) & \( 8.45  \pm 3.28\) & \( 8.06 \pm 4.23 \) & \( -3.09 \pm 4.22 \) \\
    Leave-one-out (LOO) & \( 68.48 \pm 0.35\) & \( -13.40 \pm 1.52\) & \( 9.76 \pm 1.93\) & \( -14.41 \pm 6.19\) & \( -2.42 \pm 4.61\) \\
    \midrule
    Sparsified-FT Shapley \citep{luefficient} & \(57.73 \pm 2.99\) & \( -2.03 \pm 0.73\) & \( 15.27 \pm 3.11 \)& \(4.31\pm 4.15 \)& \( 18.57 \pm 6.21\) \\
    SurrogateSHAP \textbf{(Ours)} & \textbf{78.85 \( \pm\) 1.20} & \textbf{43.11} \( \pm \) \textbf{0.14} & \textbf{  52.29  \(\pm\) 1.62 } & \textbf{27.97} \( \pm \) \textbf{9.80} & \textbf{17.35} \( \pm \) \textbf{7.78} \\
    \bottomrule
    \footnotesize Sparsified-FT and SurrogateSHAP use an identical number of coalitions. 
    \end{tabular}
    \end{adjustbox}
\end{table*}

%% file: tables/lds_0_75.tex
\begin{table*}[h!]
    \centering
    \caption{LDS (\%) results with $\alpha = 0.75$. Means and 95\% confidence intervals across five random initializations are reported.}
    \label{tab:lds_0.75_results}
    \begin{adjustbox}{width=1.0\textwidth}
    \begin{tabular}{lccccc}
    \toprule
    \textbf{Method} &
    \multicolumn{2}{c}{\textbf{CIFAR-20}} &
    \textbf{\shortstack{ArtBench\\(Post-Impressionism)}} &
    \multicolumn{2}{c}{\textbf{Fashion Product}} \\
    \midrule
    \cmidrule(lr){2-3}\cmidrule(lr){4-4}\cmidrule(lr){5-6}
     & Inception score & FID & Aesthetic score & Diversity & LPIPS \\
    \midrule
    Pixel similarity (avg.) & \( -5.84 \pm 1.24\) & \( -1.82 \pm 1.21\) & \(14.63  \pm 9.07\) & \( 2.88  \pm 15.71\) & \( -6.76  \pm 8.34\) \\
    Pixel similarity (max) & \( -12.11 \pm 0.21\) & \( -9.87 \pm 0.55\) & \( 14.02  \pm 0.91\) & \( -15.60  \pm 9.54\) & \( 17.67 \pm 2.92\) \\
    CLIP similarity (avg.) & \( -18.77 \pm 0.86\) & \( 19.98 \pm 0.17\) & \( 14.57  \pm 6.68\) & \( -8.72  \pm 6.90 \) & \( 5.72 \pm 5.35\) \\
    CLIP similarity (max) & \( 16.78\pm 0.93\) & \(-7.39 \pm 0.38\) & \(13.89  \pm 5.45\) & \( -4.88 \pm 4.47\) & \(13.21 \pm 5.28\) \\
    Gradient similarity (avg.) & \(29.44 \pm 1.85\) &  \( -17.19 \pm 0.60\) & \( 0.34  \pm 2.26\) & \(3.71  \pm 15.03\) & \( -10.19 \pm 12.45\) \\
    Gradient similarity (max) & \( -0.96 \pm 1.56\) & \(-2.44 \pm 0.72\) & \( -3.89 \pm 3.56\) & \(  0.41 \pm 10.93\) & \(-1.28 \pm 11.47\) \\
    Gmvaluator \citep{yanggmvaluator} & \( -5.89 \pm 0.517 \) & \( 30.07\pm 0.94 \) & \( 15.12 \pm 3.32\)& \(  8.99 \pm 12.67\) & \( -9.26 \pm 2.71\)\\
    \midrule
    Aesthetic score (avg.) & -- & -- & \( 3.25  \pm 2.29\) & -- & -- \\
    Aesthetic score (max) & -- & -- & \( -9.47  \pm 6.97\) & -- & -- \\
    \midrule
    Relative IF \citep{barshan2020relatif} & \( 1.09 \pm 1.14\) & \( 16.62 \pm 1.21\) & \( 3.80  \pm 5.15\) & \( 2.07 \pm 10.82 \) & \( 7.42 \pm 4.24\) \\
    Renormalized IF \citep{hammoudeh2022identifying} & \( 1.15 \pm 1.10\) & \( 16.09 \pm 1.51\) & \( 0.51  \pm 4.89\) & \( -1.49 \pm 10.48 \) & \(9.42 \pm 4.81 \) \\
    TRAK \citep{park2023trak} & \( 1.68 \pm 1.13\) & \( 15.91\pm 2.52\) & \(  3.16  \pm 4.91\) & \( 2.17 \pm 9.46 \) & \(  5.77\pm 2.77\) \\
    Journey-TRAK \citep{georgiev2023journey} & \( -1.67 \pm 0.70\) & \( -17.65 \pm 2.43\) & \( -4.81 \pm 2.63\) & \(13.58 \pm 15.80 \) & \( -10.45 \pm 10.29\) \\
    D-TRAK \citep{zheng2023intriguing} & \(8.34 \pm 1.35\) & \(-11.83 \pm 1.59\) & \(  -22.87 \pm  4.56 \) & \(  0.26 \pm 15.68\) & \( 11.86 \pm 9.45\) \\
    DAS \citep{lin2024diffusion}& \(1.97 \pm1.14 \) &  \( 16.49 \pm 1.53\) & \(  3.52 \pm  5.00 \) & \( 2.15 \pm 9.25\) & \( 5.76 \pm 2.73 \) \\
    Leave-one-out (LOO) & \( 50.88 \pm 0.71\) & \( \textbf{34.33} \pm \textbf{1.22}\) & \( 4.40 \pm 3.31\) & \( 14.59 \pm 5.91\) & \( -5.57 \pm 11.28\) \\
    \midrule
    Sparsified-FT Shapley \citep{luefficient} & \(44.79 \pm 0.74\) & \( 5.38 \pm 1.10\) & \( 19.35 \pm 2.55 \)& \(-2.19 \pm 8.77 \)& \( -10.38 \pm 3.14\) \\
    SurrogateSHAP \textbf{(Ours)} & \textbf{57.37 \( \pm\) 0.62} & {23.58} \( \pm \) {0.68} & \textbf{ 33.07 \(\pm\) 3.82 } & \textbf{31.39} \( \pm \) \textbf{11.66} & \textbf{18.48} \( \pm \) \textbf{9.20} \\
    \bottomrule
    \footnotesize Sparsified-FT and SurrogateSHAP use an identical number of coalitions. 
    \end{tabular}
    \end{adjustbox}
\end{table*}

%% file: tables/surrogate_performance.tex
\begin{table}[h!]
\centering
\caption{5-fold cross-validation MSE of different surrogate architectures.
Values in parentheses denote standard deviations. The best result in each
column is shown in bold.}
\label{tab:surrogate_cv}
\small
\resizebox{\textwidth}{!}{
\begin{tabular}{lccccc}
\toprule
\textbf{Surrogate} &
\textbf{CIFAR-20 (FID)} &
\textbf{CIFAR-20 (IS)} &
\textbf{ArtBench (Aesthetic)} &
\textbf{Fashion (LPIPS, $\times 10^{-3}$)} &
\textbf{Fashion (Div., $\times 10^{-3}$)} \\
\midrule
Linear       & 595.4 (70.1)          & 0.517 (0.068)          & 0.236 (0.052)          & 1.4 (0.1)          & 1.0 (0.1) \\
MLP          & \textbf{95.0 (43.0)}  & 0.117 (0.015)          & 0.356 (0.041)          & 8.7 (1.5)          & 8.2 (1.2) \\
RandomForest & 345.9 (63.1)          & 0.359 (0.039)          & 0.115 (0.017)          & 1.2 (0.2)          & 0.7 (0.1) \\
XGBoost      & 139.9 (30.1)          & \textbf{0.114 (0.050)} & \textbf{0.085 (0.023)} & \textbf{1.1 (0.2)} & \textbf{0.6 (0.1)} \\
\bottomrule
\end{tabular}
}
\end{table}

%% file: tables/top_removal.tex
\begin{table}[h!]
\centering
\caption{\textbf{Counterfactual performance after top removal.} We report the relative change (\%) in \(\Delta\mathcal{F}\) after removing the top \(k\%\) most influential groups identified by each method. Results are mean \(\pm\) std over five random seeds.}
\label{tab:cf_top_removal}
\begin{adjustbox}{width=0.5\textwidth}
\begin{tabular}{lcccc}
\toprule
\textbf{Method} & \multicolumn{1}{c}{\textbf{Top 5\%}} & \multicolumn{1}{c}{\textbf{Top 10\%}} & \multicolumn{1}{c}{\textbf{Top 15\%}} & \multicolumn{1}{c}{\textbf{Top 20\%}} \\
\midrule
\multicolumn{5}{c}{\textbf{CIFAR-20 ( \(\Delta\) FID \(\uparrow\) ) }}\\
\midrule
CLIP Score & $13.42\pm 3.57 $ & $17.53\pm 1.58$ & $17.99\pm 2.11$  & $18.09\pm 3.51$ \\
LOO   & $18.19\pm3.42$ & $17.48 \pm 2.68$ & $22.37\pm 3.14$ & $25.07 \pm 3.60$ \\
SurrogateSHAP (Ours)                & {\bfseries\boldmath $27.76\pm 4.60$} & {\bfseries\boldmath $31.70\pm 2.47$} & {\bfseries\boldmath $32.20\pm3.34$} & {\bfseries\boldmath $32.73\pm2.14$} \\
\midrule
\multicolumn{5}{c}{\textbf{ArtBench (\(\Delta\) Aesthetic Score \(\downarrow\) )}}\\
\midrule
Avg. Aesthetic Score & \textbf{-4.30}$\pm$ \textbf{1.10} & $-4.99\pm1.13$ & $-3.69\pm0.29$ & $-2.36\pm1.69$ \\
Avg. Pixel Similarity  & $-0.23\pm2.03$ & $-2.03\pm0.57$ & $-3.98\pm1.19$ & $-3.26\pm3.21$ \\
Relative Influence  & $-0.76\pm 2.06$ & $-4.09\pm1.03$ & $3.46\pm0.18$  & $-2.08\pm1.01$ \\
SurrogateSHAP (Ours)                &  $-3.30\pm1.58$ & {\bfseries\boldmath $-5.76\pm0.49$} & {\bfseries\boldmath $-5.75\pm0.63$} & {\bfseries\boldmath $-7.71\pm1.38$} \\
\midrule
\multicolumn{5}{c}{\textbf{Fashion Product ( \(\Delta\) Diversity \( \downarrow\))}}\\
\midrule
Avg. GradSim  & $-6.52\pm 5.82$  & $-6.09\pm 9.24$ & $-5.44\pm3.33$  & $-3.59\pm2.03$ \\
D-TRAK & $2.21\pm2.24$ & $-7.18\pm 6.23$ & $-5.05\pm9.52$ & {\bfseries\boldmath $-4.14\pm2.01$} \\
SurrogateSHAP (Ours) & {\bfseries\boldmath $-16.11\pm 1.71$} & {\bfseries\boldmath $-17.49\pm 8.34$} & {\bfseries\boldmath $-7.05\pm 7.48$} &  -0.60 $\pm$ 7.41 \\
\bottomrule
\end{tabular}
\end{adjustbox}
\end{table}

%% file: tables/cf_cifar20_fid.tex

\begin{table*}[h!]
\centering
\caption{\textbf{Full counterfactual retraining results on CIFAR-20 (FID).} Top: \emph{top-percentile removal} (higher is better).  Bottom: \emph{bottom-percentile removal} (lower is better). Results are reported over five seeds.}
\label{tab:app_cifar20_fid_full}

\setlength{\tabcolsep}{5pt}
\renewcommand{\arraystretch}{1.15}

\begin{adjustbox}{width=0.6\textwidth}
\begin{tabular}{lccccc}
\toprule
\multicolumn{6}{c}{\textbf{CIFAR-20 (FID relative change \%)}}\\
\midrule
& \multicolumn{5}{c}{Percentile removed} \\
\cmidrule(lr){2-6}
Method & 5\% & 10\% & 20\% & 30\% & 40\% \\
\midrule
\multicolumn{6}{l}{\textbf{Top-percentile removal} \quad (\(\uparrow\) better)}\\
\midrule
CLIPScore        & $13.42\pm3.57$ & $17.53\pm1.58$ & $18.09\pm3.51$ & $19.46\pm3.73$ & $34.26\pm5.43$ \\
LOO             & $18.19\pm3.42$ & $17.48\pm2.68$ & $25.07\pm3.60$ & $29.01\pm2.39$ & $29.92\pm1.55$ \\
TreeSHAP (Ours) & $\mathbf{27.76\pm4.60}$ & $\mathbf{31.70\pm2.47}$ & $\mathbf{32.73\pm2.14}$ & $\mathbf{41.83\pm2.21}$ & $\mathbf{58.74\pm1.55}$ \\
\midrule\midrule
\multicolumn{6}{l}{\textbf{Bottom-percentile removal} \quad (\(\downarrow\) better)}\\
\midrule
CLIPScore        & $19.39\pm3.20$ & $17.36\pm2.86$ & $32.03\pm1.97$ & $42.60\pm4.33$ & $53.11\pm1.93$ \\
LOO             & $23.21\pm4.76$ & $27.85\pm2.27$ & $43.11\pm2.02$ & $60.18\pm5.11$ & $82.57\pm5.04$ \\
TreeSHAP (Ours) & $\mathbf{11.79\pm1.47}$ & $\mathbf{11.85\pm2.55}$ & $\mathbf{16.98\pm3.66}$ & $\mathbf{34.90\pm0.71}$ & $\mathbf{53.55\pm3.21}$ \\
\bottomrule
\end{tabular}
\end{adjustbox}

\vspace{-0.5em}
\end{table*}

%% file: tables/cf_cifar20_is.tex

\begin{table*}[h!]
\centering
\caption{\textbf{Full counterfactual retraining results on CIFAR-20 (Inception Score).}
Top: \emph{top-percentile removal} (more negative is better). Bottom: \emph{bottom-percentile removal} (larger/less negative is better). Results are reported over five seeds.}
\label{tab:app_cifar20_is_full}

\setlength{\tabcolsep}{5pt}
\renewcommand{\arraystretch}{1.15}

\begin{adjustbox}{width=0.6\textwidth}
\begin{tabular}{lccccc}
\toprule
\multicolumn{6}{c}{\textbf{CIFAR-20 (Inception Score relative change \%)}}\\
\midrule
& \multicolumn{5}{c}{Percentile removed} \\
\cmidrule(lr){2-6}
Method & 5\% & 10\% & 20\% & 30\% & 40\% \\
\midrule
\multicolumn{6}{l}{\textbf{Top-percentile removal} \quad (\(\downarrow\) better)}\\
\midrule
GMV            & $-20.17\pm0.53$ & $-19.52\pm0.30$ & $-19.71\pm0.47$ & $-24.40\pm0.76$ & $-29.49\pm1.00$ \\
LOO            & $\mathbf{-21.97\pm0.39}$ & $-22.45\pm0.64$ & $-24.34\pm1.53$ & $-27.84\pm0.50$ & $-29.94\pm0.80$ \\
TreeSHAP (Ours)& $-20.87\pm1.25$ & $\mathbf{-26.55\pm1.43}$ & $\mathbf{-31.05\pm1.33}$ & $\mathbf{-35.50\pm0.01}$ & $\mathbf{-40.03\pm1.05}$ \\
\midrule\midrule
\multicolumn{6}{l}{\textbf{Bottom-percentile removal} \quad (\(\uparrow\) better)}\\
\midrule
GMV            & $-20.57\pm0.86$ & $-18.22\pm0.91$ & $-18.44\pm1.20$ & $-16.66\pm1.74$ & $-15.38\pm0.96$ \\
LOO            & $\mathbf{-19.40\pm0.55}$ & $\mathbf{-16.87\pm0.56}$ & $\mathbf{-12.93\pm0.11}$ & $\mathbf{-10.81\pm1.07}$ & $-10.22\pm1.12$ \\
TreeSHAP (Ours)& $-19.82\pm0.59$ & $-17.20\pm1.25$ & $\mathbf{-12.93\pm0.11}$ & $-10.97\pm0.82$ & $\mathbf{-9.36\pm1.06}$ \\
\bottomrule
\end{tabular}
\end{adjustbox}

\vspace{-0.5em}
\end{table*}

%% file: tables/cf_artbench.tex

\begin{table*}[h!]
\centering
\caption{\textbf{Full counterfactual retraining results on ArtBench (Post-Impressionism).}
Top: \emph{top-percentile removal} (more negative is better). Bottom: \emph{bottom-percentile removal} (larger is better).  Results are reported over five seeds. }
\label{tab:app_artbench_full}

\setlength{\tabcolsep}{4.5pt}
\renewcommand{\arraystretch}{1.15}

\begin{adjustbox}{width=\textwidth}
\begin{tabular}{lccccccccc}
\toprule
\multicolumn{10}{c}{\textbf{Avg. Aesthetic Score  (relative change \%)}}\\
\midrule
& \multicolumn{9}{c}{Percentile removed} \\
\cmidrule(lr){2-10}
Method & 1\% & 2\% & 3\% & 4\% & 5\% & 10\% & 15\% & 20\% & 30\% \\
\midrule
\multicolumn{10}{l}{\textbf{Top-percentile removal} \quad (\(\downarrow\) better)}\\
\midrule
Aesthetic            & $-2.26\pm0.67$ & $-1.85\pm2.29$ & $-3.60\pm1.24$ & $-2.53\pm1.99$ & $\mathbf{-4.30\pm1.10}$ & $-4.99\pm1.13$ & $-3.69\pm0.29$ & $-2.36\pm1.69$ & $-2.40\pm0.70$ \\
Pixel Sim.           & $-1.62\pm1.03$ & $\mathbf{-3.58\pm2.07}$ & $-0.46\pm0.68$ & $-3.19\pm0.68$ & $-0.23\pm2.03$ & $-2.03\pm0.57$ & $-3.98\pm1.19$ & $-3.26\pm3.21$ & $-1.63\pm0.58$ \\
Rel. Infl.           & $\mathbf{-3.04\pm2.07}$ & $-2.17\pm2.07$ & $-2.21\pm0.67$ & $-2.73\pm2.59$ & $-0.76\pm2.06$ & $-4.09\pm1.03$ & $-3.46\pm0.18$ & $-2.08\pm1.01$ & $-2.18\pm2.53$ \\
TreeSHAP (Ours)      & $-2.60\pm0.90$ & $-3.27\pm1.50$ & $\mathbf{-3.97\pm0.47}$ & $\mathbf{-4.29\pm1.58}$ & $-3.30\pm1.69$ & $\mathbf{-5.76\pm0.49}$ & $\mathbf{-5.75\pm0.63}$ & $\mathbf{-7.71\pm1.38}$ & $\mathbf{-5.73\pm0.58}$ \\
\midrule\midrule
\multicolumn{10}{l}{\textbf{Bottom-percentile removal} \quad (\(\uparrow\) better)}\\
\midrule
Aesthetic            & $-2.08\pm0.70$ & $-2.84\pm0.35$ & $-2.64\pm1.52$ & $-2.57\pm1.51$ & $-2.59\pm1.44$ & $-1.38\pm1.16$ & $-2.22\pm1.06$ & $-2.40\pm0.42$ & $+3.19\pm1.02$ \\
Pixel Sim.           & $-3.63\pm2.10$ & $-1.88\pm0.72$ & $-2.63\pm1.09$ & $-1.80\pm0.86$ & $-1.32\pm1.05$ & $-1.68\pm1.78$ & $-0.80\pm1.07$ & $-2.20\pm1.13$ & $-1.64\pm1.76$ \\
Rel. Infl.           & $-0.95\pm1.21$ & $\mathbf{-0.79\pm0.62}$ & $-1.79\pm1.11$ & $-1.23\pm1.02$ & $-2.93\pm2.72$ & $-1.25\pm2.28$ & $+1.08\pm0.94$ & $+0.17\pm0.65$ & $-0.06\pm2.24$ \\
TreeSHAP (Ours)      & $\mathbf{-0.02\pm1.44}$ & $-1.09\pm0.53$ & $\mathbf{+0.65\pm1.06}$ & $\mathbf{-0.06\pm1.04}$ & $\mathbf{+0.43\pm1.67}$ & $\mathbf{+2.54\pm2.38}$ & $\mathbf{+3.06\pm0.38}$ & $\mathbf{+3.62\pm2.37}$ & $\mathbf{+5.03\pm1.59}$ \\

\bottomrule
\end{tabular}
\end{adjustbox}

\vspace{-0.5em}
\end{table*}

%% file: tables/cf_fashion_lpips.tex

\begin{table*}[h!]
\centering
\caption{\textbf{Full counterfactual retraining results on Fashion Product (LPIPS).}
Top: \emph{top-percentile removal} (higher is better). Bottom: \emph{bottom-percentile removal} (lower is better).  Results are reported over five seeds.}
\label{tab:app_fashion_lpips_full}

\setlength{\tabcolsep}{5pt}
\renewcommand{\arraystretch}{1.15}

\begin{adjustbox}{width=\textwidth}
\begin{tabular}{lccccccc}
\toprule
\multicolumn{8}{c}{\textbf{Fashion Product (LPIPS relative change \%)}}\\
\midrule
& \multicolumn{7}{c}{Percentile removed} \\
\cmidrule(lr){2-8}
Method & 1\% & 2\% & 5\% & 10\% & 15\% & 20\% & 30\% \\
\midrule
\multicolumn{8}{l}{\textbf{Top-percentile removal} \quad (\(\uparrow\) better)}\\
\midrule
Avg. PixelSim & $5.15\pm 0.49$ & $\mathbf{7.76\pm3.95}$ & $3.13\pm1.90$ & $3.09\pm2.59$ & $3.09\pm1.85$ & $1.32\pm3.58$ & $-3.93\pm4.00$ \\
Rel. Infl.    & $\mathbf{7.33\pm 1.10 }$ & $3.16 \pm 0.90 $ & $\mathbf{4.64 \pm 1.97}$ & $3.39\pm 0.16$ & $5.89\pm 2.96$ & $\mathbf{8.74\pm 3.98}$ & $\mathbf{ 8.90 \pm 1.45}$ \\
SurrogateSHAP & $6.07\pm1.99$ & $1.27\pm1.51$ & $3.33\pm3.51$ & $\mathbf{8.37\pm 1.94}$ & $\mathbf{8.477\pm2.83}$ & $3.57\pm2.30$ & $5.82\pm 0.33$ \\
\midrule \midrule
\multicolumn{8}{l}{\textbf{Bottom-percentile removal} \quad (\(\downarrow\) better)}\\
\midrule
Avg. PixelSim & $\mathbf{3.29\pm2.15}$ & $9.30\pm3.43$ & $2.45\pm 2.09$ & $7.78\pm 0.86$ & $11.29\pm 2.93$ & $6.83 \pm2.81$ & $ 7.23\pm 3.07$ \\
Rel. Infl.    & $3.58\pm0.20 $ & $\mathbf{5.20\pm 1.98}$ & $3.87 \pm 2.45$ & $ 3.73\pm 1.71 $ & $4.83 \pm 0.91 $ & $ 2.73 \pm 4.83$ & $ 2.98 \pm 2.51$ \\
SurrogateSHAP & $5.86\pm1.69$ & $5.73\pm3.95$ & $\mathbf{1.43\pm3.32}$ & $\mathbf{0.17\pm2.54}$ & $\mathbf{-0.62\pm3.45}$ & $\mathbf{-1.83\pm2.85}$ & $\mathbf{-0.96\pm1.15}$ \\
\bottomrule
\end{tabular}
\end{adjustbox}
\end{table*}

%% file: tables/cf_fashion_msssim.tex

\begin{table*}[h!]
\centering
\caption{\textbf{Full counterfactual retraining results on Fashion Product (Diversity).}
Top: \emph{top-percentile removal} (lower/more negative is better). Bottom: \emph{bottom-percentile removal} (higher is better).  Results are reported over five seeds.}
\label{tab:app_fashion_msssim_full}

\setlength{\tabcolsep}{5pt}
\renewcommand{\arraystretch}{1.15}

\begin{adjustbox}{width=\textwidth}
\begin{tabular}{lccccccc}
\toprule
\multicolumn{8}{c}{\textbf{Fashion Product (Diversity relative change \%)}}\\
\midrule
& \multicolumn{7}{c}{Percentile removed} \\
\cmidrule(lr){2-8}
Method & 1\% & 2\% & 5\% & 10\% & 15\% & 20\% & 30\% \\
\midrule
\multicolumn{8}{l}{\textbf{Top-percentile removal} \quad (\(\downarrow\) better)}\\
\midrule
Avg. GradSim & $-6.09\pm2.83$ & $-3.78\pm2.70$ & $-6.52\pm5.82$ & $-6.09\pm9.24$ & $-5.44\pm3.33$ & $-3.59\pm2.03$ & $-4.32\pm4.16$ \\
D-TRAK        & $\mathbf{-8.36\pm4.46}$ & $\mathbf{-12.58\pm6.49}$ & $+2.21\pm2.24$ & $-7.18\pm6.33$ & $+5.05\pm9.52$ & $\mathbf{-4.14\pm2.01}$ & $-3.20\pm7.20$ \\
SurrogateSHAP     & $-1.20\pm5.16$ & $-8.96\pm0.07$ & $\mathbf{-16.11\pm1.71}$ & $\mathbf{-17.49\pm8.34}$ & $\mathbf{-7.05\pm7.48}$ & $-0.60\pm7.41$ & $\mathbf{-11.15\pm6.19}$ \\
\midrule\midrule
\multicolumn{8}{l}{\textbf{Bottom-percentile removal} \quad (\(\uparrow\) better)}\\
\midrule
Avg. GradSim & $-3.63\pm2.99$ & $-8.13\pm2.74$ & $+3.19\pm4.98$ & $-6.53\pm2.44$ & $+4.83\pm3.54$ & $+6.70\pm6.17$ & $+2.94\pm2.16$ \\
D-TRAK         & $\mathbf{-1.56\pm3.65}$ & $-2.73\pm6.16$ & $-1.12\pm4.25$ & $-4.49\pm4.98$ & $-0.50\pm2.14$ & $-7.24\pm5.95$ & $-10.57\pm3.40$ \\
SurrogateSHAP     & $-2.24\pm5.44$ & $\mathbf{-2.06\pm5.43}$ & $\mathbf{+9.36\pm3.46}$ & $\mathbf{+5.22\pm5.01}$ & $\mathbf{+12.75\pm3.34}$ & $\mathbf{+7.46\pm4.44}$ & $\mathbf{+7.19\pm3.55}$ \\
\bottomrule
\end{tabular}
\end{adjustbox}
\end{table*}